\documentclass{article}

\usepackage{microtype}
\usepackage{graphicx}
\usepackage{subcaption}
\usepackage{booktabs} 

\usepackage{hyperref}

\usepackage[accepted]{icml2026}

\usepackage{amsmath}
\usepackage{amssymb}
\usepackage{mathtools}
\usepackage{amsthm}

\usepackage{url}
\usepackage[utf8]{inputenc} 
\usepackage[T1]{fontenc}    
\usepackage{amsfonts}       
\usepackage{nicefrac}       
\usepackage{xcolor}         
\usepackage{bbm}
\usepackage{array}    
\usepackage{caption}  
\usepackage{multirow}
\usepackage{graphicx}
\usepackage{rotating}
\usepackage{pifont} 
\usepackage[table]{xcolor}  
\usepackage{bm} 
\usepackage{titletoc}
\usepackage{adjustbox} 
\usepackage{graphicx}
\usepackage{xspace}

\usepackage{wrapfig}    
\usepackage{tikz}       
\usepackage{pgfplots}   
\usepgfplotslibrary{groupplots} 
\usetikzlibrary{calc}   

\usetikzlibrary{
  calc,
  external,
  fit,
  positioning,
  shapes,
}
\makeatletter
\patchcmd{\ESO@HookIBG}{\ificlrfinal\else}
{\tikzifexternalizing{\iclrfinaltrue}{}\ificlrfinal\else}
{}{\typeout{WARNING: I failed patching the template! Your tikz externalize images will look ugly}}
\makeatother

\usepgfplotslibrary{
  colorbrewer,
  fillbetween,
  groupplots,
}

\pgfplotsset{compat=1.18,
  cycle list/Dark2,
}

\pgfmathsetseed{42}

\pgfdeclarelayer{background}
\pgfdeclarelayer{foreground}
\pgfsetlayers{background,main,foreground}   

\pgfdeclareradialshading[tikz@ball]{ball}{\pgfqpoint{0bp}{0bp}}{%
 color(0bp)=(tikz@ball!0!white);
 color(7bp)=(tikz@ball!0!white);
 color(15bp)=(tikz@ball!70!black);
 color(20bp)=(black!70);
 color(30bp)=(black!70)}
\makeatother

\tikzset{viewport/.style n args={3}{
    x={({cos(-#1)*#3},{sin(-#1)*sin(#2)*#3})},
    y={({-sin(-#1)*#3},{cos(-#1)*sin(#2)*#3})},
    z={(0,{cos(#2)*#3})}
}}

\pgfplotsset{
    only foreground/.style={
        restrict expr to domain={rawx*\CameraX + rawy*\CameraY + rawz*\CameraZ}{-0.05:100},
    },
    only background/.style={
        restrict expr to domain={rawx*\CameraX + rawy*\CameraY + rawz*\CameraZ}{-100:0.05}
    },
}

\def\addFGBGplot[#1]#2;{%
    \addplot3[#1,only background, opacity=0.25] #2;%
    \addplot3[#1,only foreground] #2;%
}

\makeatletter
\patchcmd{\NAT@test}{\else \NAT@nm}{\else \NAT@nmfmt{\NAT@nm}}{}{}

\DeclareRobustCommand\citepos
  {\begingroup
   \let\NAT@nmfmt\NAT@posfmt
   \NAT@swafalse\let\NAT@ctype\z@\NAT@partrue
   \@ifstar{\NAT@fulltrue\NAT@citetp}{\NAT@fullfalse\NAT@citetp}}

\let\NAT@orig@nmfmt\NAT@nmfmt
\def\NAT@posfmt#1{\NAT@orig@nmfmt{#1's}}
\makeatother

\makeatletter
\def\NAT@spacechar{~}
\makeatother

\usepackage[capitalize,noabbrev]{cleveref}

\theoremstyle{plain}
\newtheorem{theorem}{Theorem}[section]

\theoremstyle{definition}
\newtheorem{definition}[theorem]{Definition}

\theoremstyle{remark}

\usepackage[disable,textsize=tiny]{todonotes}

\icmltitlerunning{Beyond Independence: Learning Correlated Views for Variational Incomplete Multi-View Clustering}
\begin{document}

\twocolumn[
    \icmltitle{Beyond Independence: Learning Correlated Views \\ for Variational Incomplete Multi-View Clustering}




  \begin{icmlauthorlist}
    \icmlauthor{Zheming Xu}{bjtu,lab}
    \icmlauthor{Aiyue Tang}{bjtu,lab}
    \icmlauthor{Shidi Chen}{bjtu,lab}
    \icmlauthor{Xuechao Zou}{bjtu,lab}
    \icmlauthor{Congyan Lang}{bjtu,lab}
    \icmlauthor{Rogelio A. Mancisidor}{BI}
    \icmlauthor{Michael Kampffmeyer}{UiT,sch}
  \end{icmlauthorlist}

  \icmlaffiliation{bjtu}{School of Computer Science \& Technology, Beijing Jiaotong University}
  \icmlaffiliation{lab}{Key Laboratory of Big Data \& Artificial Intelligence in Transportation, Ministry of Education, China}
  \icmlaffiliation{BI}{Department of Data Science, BI Norwegian Business School,
Oslo, Norway}
  \icmlaffiliation{UiT}{Department of Physics and Technology, UiT The Arctic University of Norway}
  \icmlaffiliation{sch}{Norwegian Computing Center}
  \icmlcorrespondingauthor{Congyan Lang}{cylang@bjtu.edu.cn}

  \icmlkeywords{Incomplete Multi-view Clustering}

  \vskip 0.3in
]



\printAffiliationsAndNotice{}  
\begin{abstract}
Incomplete multi-view clustering (IMVC) aims to uncover shared cluster structures from data with partially observed views. Although recent imputation-free methods based on variational inference demonstrate robustness to missing views, they commonly rely on a conditional independence assumption across views in the posterior aggregation stage, which fails to capture the inherently structured and potentially correlated nature of multi-view data. In this paper, we propose a variational framework that explicitly goes beyond this assumption by introducing a learnable cross-view correlation structure. Specifically, we explicitly model and learn correlations between views by utilizing the covariance structure of posterior estimation errors during aggregation. To facilitate robust and efficient learning, the correlation matrix is parameterized through a normalized Cholesky decomposition, ensuring positive definiteness and enabling the entire model to be trained jointly through a unified variational objective. Extensive experiments on multiple IMVC benchmarks demonstrate that our method consistently outperforms state-of-the-art approaches across diverse missing-view settings while introducing only a negligible number of learnable parameters. These results highlight the effectiveness of adaptive correlation modeling in variational IMVC, demonstrating the need to go beyond the independence assumption in IMVC. The code is available at https://github.com/zmxu196/ACOVA.
\end{abstract}

\section{Introduction}
Multi-view clustering (MVC) \citep{DBLP:conf/icml/YuD0L000025} aims to partition data objects that are characterized by multiple distinct feature sets or views into their natural groups. However, in real-world scenarios, missing views are prevalent due to factors such as sensor failures, data corruption, or privacy concerns, giving rise to the problem of incomplete multi-view clustering (IMVC)~\cite{DBLP:conf/icml/ChaoZMWC25, zhaoijcai2025}. Among existing approaches, imputation-based IMVC methods have gained significant attention by recovering missing data or latent features \citep{wang2022incomplete,DCP, wen2024diffusion,MVP,huang2025time,DCG} before clustering. Nevertheless, the main challenge in imputation-based methods lies in the accurate recovery of missing data without ground-truth supervision, especially when the proportion of missing views is large. 

To avoid this challenge, imputation-free IMVC methods have emerged to directly learn from the available data. These include graph-based methods~\citep{wen2020CDIMC,wen2023highly}, cross-view mapping methods~\citep{DIMVC, zhao2025incomplete}, and distribution alignment methods~\citep{APADC}. Among them, DVIMC \citep{DVIMVC} pioneered the application of Product of Experts (PoE) for IMVC, standing out for its effectiveness in learning robust shared latent representations in the missing views settings. However, these methods treat each view-specific posterior as independent, which is an assumption that oversimplifies the rich dependencies that naturally exist among views. 
To address this issue, a straightforward way is to model encoder-level cross-view correlations directly. However, this would move beyond the mean-field variational family typically used in VAE-based IMVC methods, substantially complicating optimization.
Effective fusion requires that views compensate for missing or unreliable information in one another, but also that the model account for correlations across views, since highly correlated views are prone to making similar errors. A recent supervised approach, CoDE~\citep{CoDE}, revisits the classic consensus-of-experts formulation~\citep{winkler} and shows that explicitly modeling inter-expert correlations through their estimation errors is highly beneficial and provides a principled Bayesian approach. This perspective enables the model to capture cross-expert dependencies without explicitly parameterizing the relationships between the experts themselves, which is non-trivial as the joint posterior being approximated is a distribution of a latent variable.
However, CoDE relies on a fixed scalar correlation obtained through a supervised grid search, rendering it impractical for IMVC, where labels are unavailable and views are incomplete.
Moreover, its fixed correlation value fails to capture the diverse, pairwise dependencies across views, thereby compromising performance.

To tackle the aforementioned problems, we propose an \textbf{A}daptive \textbf{CO}rrelation-aware \textbf{V}ariational \textbf{A}ggregation framework for IMVC (ACOVA), which models inter-view dependencies via learning the estimation error covariance in the posterior aggregation stage. Specifically, we go beyond the strong independence assumption and extend the existing variational framework by~\cite{DVIMVC} to a more realistic setting, in which the view-specific estimation errors are assumed to be correlated, allowing better capturing of the inherent dependencies between views. Furthermore, instead of relying on a supervised grid search to tune a global scalar correlation, which is infeasible in IMVC, we parameterize the correlation matrix via a normalized Cholesky decomposition, allowing it to be learned jointly with the model parameters through the overall training objective.

\begin{figure}[t]
 
\centering
\pgfplotsset{compat=1.18}
\centering
\begin{tikzpicture}
\begin{groupplot}[
    group style={
      group size=2 by 1, 
      horizontal sep=10ex,
      group name=gp-unique-protos,  
    },
    footnotesize,
    height=4.cm, 
    grid=both,
    xmajorgrids=true, 
    ymajorgrids=true, 
    minor grid style={opacity=0.2},
    yminorgrids=false,
    tick style={semithick},
    label style={font=\small},
    tick label style={font=\small},
    xtick align=inside,
    legend style={
      font=\scriptsize,
      fill=white, 
      fill opacity=0.85,
      cells={anchor=west},
      draw=black!75,
      rounded corners,
      /tikz/every even column/.style={column sep=6pt},
  },
]
\nextgroupplot[
  width=\linewidth,
  ybar,
  bar width=14pt,
  ymin=40, ymax=55,
  ylabel style={font=\scriptsize},
  ylabel={Accuracy [\%]},
  xlabel style={align=center,},
  xlabel={30 \;\;\;\;\;\;\;\;\;\;\;\;\;\;\;\; 70 \\ \scriptsize Missing Rate [\%]},
  symbolic x coords={handInd,handACOVA,handInd2,handAcova2},
  xtick=\empty,
  nodes near coords,
  nodes near coords align={vertical},
  every node near coord/.append style={font=\scriptsize, /pgf/number format/fixed},
  enlarge x limits=2.2,
  xmajorgrids=false,
  /pgfplots/legend image code/.code={
    \draw [/tikz/.cd,bar width=3pt,yshift=-0.2em,bar shift=0pt, sharp corners]
    plot coordinates {(0cm,0.8em)};
  },
  legend style={
    at={(0.02,0.98)},
    anchor=north west,
    legend columns=2,
    inner sep=1.75pt,
    outer sep=0pt,
  },
]
\addplot+[Dark2-B, fill=Dark2-B!50] coordinates {(handInd,46.8)};
\addplot+[Dark2-D, fill=Dark2-D!50] coordinates {(handACOVA,48.1)};
\addplot+[Dark2-B, fill=Dark2-B!50] coordinates {(handInd2,41.4)};
\addplot+[Dark2-D, fill=Dark2-D!50] coordinates {(handAcova2,44.3)};
\legend{w/o learned Cor., w/ learned Cor.}
\draw[|-|, thick, Dark2-A,xshift=-15pt] ($(axis cs:handInd,46.8)!.5!(axis cs:handACOVA,46.8)$) -- node[font=\scriptsize, fill=white, fill opacity=0.75, text opacity=1, inner sep=0pt, outer sep=0pt,xshift=22pt] {$\Delta=+1.3$} ($(axis cs:handInd,48.1)!.5!(axis cs:handACOVA,48.1)$);
\draw[|-|, thick, Dark2-A,xshift=+15pt] ($(axis cs:handInd2,41.4)!.5!(axis cs:handAcova2,41.4)$) -- node[font=\scriptsize, fill=white, fill opacity=0.75, text opacity=1, inner sep=0pt, outer sep=0pt,xshift=18pt] {$\Delta=+2.9$} ($(axis cs:handInd2,44.3)!.5!(axis cs:handAcova2,44.3)$);
\end{groupplot}%
\end{tikzpicture}%
\caption{Benefit of modeling inter-view dependencies for different missing rates on Scene15.}
\label{fig:barplot}
\vspace{-0.8cm}
\end{figure}

In summary, our major contributions are: (1) We introduce ACOVA, the first correlation-aware variational framework for incomplete multi-view clustering (IMVC) that explicitly models inter-view dependencies via the estimation error covariance, thereby relaxing the conditional-independence assumption underlying posterior aggregation in existing IMVC methods.
(2) We propose an adaptive cross-view correlation learning mechanism that jointly estimates the correlation structure and model parameters within a unified variational objective, enabling ACOVA to dynamically capture view interactions and substantially improve clustering performance (Fig.~\ref{fig:barplot}).
(3) We conduct extensive empirical evaluations on multiple IMVC benchmarks, where ACOVA consistently achieves state-of-the-art or competitive performance across diverse missing-view settings with only a negligible increase in learnable parameters, demonstrating the importance of correlation-aware posterior modeling.

\vspace{-0.15cm}

\section{Related Work}

\textbf{Incomplete Multi-View Clustering.} The development of deep neural networks has led to significant progress in deep IMVC \citep{wen2020adaptive,li2021incomplete,li2022refining,wang2022highly,liu2024sample,yu2024dvsai}, resulting in two main types of methods: imputation-based and imputation-free approaches. Imputation-based methods aim to complete missing data through various strategies before performing clustering on the reconstructed multi-view dataset. \citet{Completer,DCP} utilize contrastive prediction combined with entropy-based optimization to recover missing views, while other methods adopt prototype-based alignment and imputation \citep{PMIMC} or apply diffusion models to generate missing samples~\citep{wen2024diffusion,DCG}. However, these methods have the underlying issue that inaccurate or biased imputations will negatively affect the subsequent representation learning and clustering step. More recently, imputation-free methods~\citep{CSPAN,wen2023highly} have therefore started to emerge, which avoid explicit missing data recovery and instead learn from available view-specific representations directly for downstream clustering. For example, CDIMC \citep{wen2020CDIMC} leverages graph embeddings and a self-paced learning strategy to enhance shared representations, while DIMVC \citep{DIMVC} employs nonlinear mappings across views to exploit complementary information. APADC \citep{APADC} emphasizes aligning the distributions of available data during feature learning, and DVIMC \citep{DVIMVC} incorporates variational autoencoders with coherence constraints to improve learning of a common representation.

\textbf{Variational Approaches for Incomplete Multi-View Clustering.} Variational Autoencoders (VAEs) \citep{kingma2013auto} provide an effective framework for learning latent distributions by incorporating diverse prior distributions and generative assumptions, and have been extensively applied to clustering \citep{Vade,yang2019deep,lim2020deep,falck2021multi}. Within multi-view clustering, numerous studies \citep{DMVCVAE,cui2023novel,huang2023generalized} have extended VAEs with principled approaches to aggregate information across views and derive joint representations.
For instance, Multi-VAE \citep{Multi-VAE} disentangles shared and view-specific features to improve clustering performance. However, most existing approaches are not designed for incomplete multi-view scenarios, limiting their practical applications. To address this challenge, a recent method DVIMC \citep{DVIMVC} employs a product-of-experts strategy but assumes independence across views, limiting its ability to capture inter-view relationships. MVP \citep{MVP} utilizes cyclic permutations within VAEs but relies on handcrafted priors and complex partitioning schemes that limit scalability. In contrast, our method adaptively learns a variational posterior with a cross-view correlation structure, enabling more expressive modeling of inter-view dependencies without explicit permutations.

\vspace{-0.15cm}
\section{Preliminaries and Motivation}
\label{sec:Preliminaries}
In the following, we briefly review the preliminaries of variational IMVC. For completeness, we include derivations for these preliminaries in Appendix~\ref{appendix:background}.
Building on these, we then delve into the core motivation of this work.

\vspace{-0.45cm}

\paragraph{Notations}
Let $\{\mathcal{X}\}$ be an incomplete multi-view dataset, where we observe $\{\mathbf{x}_{v}\}_{v \in \mathcal{V}}$, and $\mathcal{V} \subseteq [V]$ denotes the set of available views. Here, $[V] = \{1,2,\ldots,V\}$ represents the complete view set. Each $\mathbf{x}_{v} \in \mathbb{R}^{D_v}$ lies in a view-specific space. 
For brevity, we denote the observed view set of a generic instance by $\mathcal{V}$, with $\{\mathbf{x}_{v}\}_{v \in \mathcal{V}}$ representing the available multi-view inputs. In the missing-view setting, $\mathbf{x}_{v}$ denotes the data available for view $v$.
The goal of IMVC is to partition the $N$ instances into $C$ clusters.
For notation clarity, we use $\mathbf{M}^{(i)} \in \{0,1\}^{V}$ for the per-sample view-availability mask and $\mathcal{M}\in\{0,1\}^{N\times V}$ for the dataset-level mask matrix. When used with $\bm{\mu}\in\mathbb{R}^{VD\times1}$ and $\bm{\Sigma}\in\mathbb{R}^{VD\times VD}$, $\mathbf{M}^{(i)}$ is repeated across latent dimensions to form an adapted mask in $\{0,1\}^{VD\times1}$.

\vspace{-0.4cm}

\paragraph{Generative Process}
We assume that the multi-view data is generated from a random process involving latent variables $\mathbf{z} \in \mathbb{R}^D$ and discrete latent variables $\mathbf{c}\in \{1,\ldots, C\}$ that denote the common representation and cluster assignment, respectively. In the incomplete setting, assuming that the observed data are conditionally independent given the shared latent representation $\mathbf{z}$, the joint probability of the observed data and latent variables can be formulated as \citep{Vade,DMVCVAE,DIMVC}:

\vspace{-0.5cm}
\begin{equation}
\begin{aligned}
p(\{\mathbf{x}_{v}\}_{v=1}^{V}, \mathbf{z}, \mathbf{c}) 
&=  p(\mathbf{c})\,p(\mathbf{z} \mid \mathbf{c})\,p(\{\mathbf{x}_{v}\}_{v=1}^{V} \mid \mathbf{z}) \\
&= p(\mathbf{c})\,p(\mathbf{z} \mid \mathbf{c})\,\prod_{v=1}^{V} p(\mathbf{x}_{v} \mid \mathbf{z}).
\end{aligned}
\label{eq:jointprobability}
\end{equation}
\vspace{-1cm}

\paragraph{Inference Objective}
Our goal is to infer the joint posterior distribution $p(\mathbf{z},\mathbf{c}|\{\mathbf{x}_{v}\}_{v \in \mathcal{V}})$, which can be expressed through Bayes' theorem as:

\vspace{-0.5cm}
\begin{equation}
\label{eq:true_posterior}
    p(\mathbf{z},\mathbf{c}|\{\mathbf{x}_{v}\}_{v \in \mathcal{V}}) = \frac{p(\{\mathbf{x}_{v}\}_{v \in \mathcal{V}}|\mathbf{z})p(\mathbf{z},\mathbf{c})}{\int_{\mathbf{z}}\sum_{\mathbf{c}=1}^C p(\{\mathbf{x}_{v}\}_{v \in \mathcal{V}}|\mathbf{z},\mathbf{c})p(\mathbf{z},\mathbf{c})d\mathbf{z}}.
\end{equation}

Due to the intractability of this posterior, variational inference introduces an approximate posterior $q_{\phi}(\mathbf{z},\mathbf{c}|\{\mathbf{x}_{v}\}_{v \in \mathcal{V}})$ by minimizing the Kullback-Leibler (KL) divergence:

\vspace{-0.5cm}
\begin{equation}
\label{eq:kl_div}
    D_{\text{KL}}(q_{\phi}(\mathbf{z},\mathbf{c}|\{\mathbf{x}_{v}\}_{v \in \mathcal{V}})||p(\mathbf{z},\mathbf{c}|\{\mathbf{x}_{v}\}_{v \in \mathcal{V}})).
\end{equation}
\vspace{-0.6cm}

It is common to assume $\mathbf{z}$ and $\mathbf{c}$ are independent given $\{\mathbf{x}_{v}\}_{v \in \mathcal{V}}$,  which allows to factorize the joint distribution as  $q_{\phi}(\mathbf{z},\mathbf{c}|\{\mathbf{x}_{v}\}_{v \in \mathcal{V}}) = q_{\phi}(\mathbf{z}|\{\mathbf{x}_{v}\}_{v \in \mathcal{V}})q_{\phi}(\mathbf{c}|\{\mathbf{x}_{v}\}_{v \in \mathcal{V}})$.
\vspace{-0.2cm}

\paragraph{Joint Posterior Aggregation}

With the VAE paradigm, we approximate the posterior $q_{\phi}(\mathbf{z}|\mathbf{x}_{v}) = \mathcal{N}(\mathbf{z}|\bm{\mu}_v, \bm{\Sigma}_v)$ of each view using a Gaussian distribution\footnote{This research assumes a diagonal covariance matrix for all variational distributions, i.e. $\bm{\Sigma}=\bm{\sigma}^2\mathbf{I}$.}, where $\bm{\mu}_v$ and $\bm{\sigma}_v^2$ are the mean and variance parameters inferred by the $v$-th view-specific encoders. 
Based on this, the approximate posterior $q_{\phi}(\mathbf{z}|\{\mathbf{x}_{v}\}_{v \in \mathcal{V}})$ is typically obtained through an aggregation mechanism $\mathcal{A}$ that combines view-specific posteriors $q_{\phi}(\mathbf{z}|\mathbf{x}^{(v)})$: 

\vspace{-0.5cm}
\begin{equation}
\mathcal{A}: \{\bm{\mu}_v,\bm{\sigma}_v^2 \}^{|\mathcal{V}|} \rightarrow \{\tilde{\bm{\mu}},{\tilde{\bm{\sigma}}^2} \}.
\end{equation}
\vspace{-1cm}

\paragraph{Motivation} Following this paradigm, recent multi-view clustering methods primarily differ in how they derive the joint posterior variational distribution $\mathcal{N}(\bm{\tilde{\mu}},\bm{\tilde{\sigma}^2})$.
DMVCVAE~\citep{DMVCVAE} emphasizes the varying importance of different views and proposes a weighted combination approach in the complete multi-view setting. A more relevant approach, DVIMC~\citep{DVIMVC}, tackles the incomplete multi-view setting using the Product of Experts (PoE) strategy, which treats each view as an independent expert and combines them through precision-weighted averaging. 
However, both methods rely on a strong independence assumption across views, which is often overly simplistic in real-world scenarios where views are semantically correlated. This limits the capacity of the model to capture cross-view interactions and express the true joint posterior.
Besides, both approaches have drawbacks in the aggregation of the joint posterior. DMVCVAE~\citep{DMVCVAE} could be seen as a partial realization of the Mixture of Experts (MoE), where only the within-view variances are considered, omitting the variability arising from disagreements between views, leading to underestimated overall uncertainty.  While in DVIMC~\citep{DVIMVC}, PoE is known to underestimate the variance of the consensus distribution due to its strong independence assumption.  
Let $\bm{\xi}_v = 1/\bm{\sigma}_v^2$ denote the precision of view $v$, the aggregated precision under PoE could then be denoted as $\tilde{\bm{\xi}}_{\text{PoE}} = \sum\limits_{v} \bm{\xi}_v$.  
As a result, the aggregated variance becomes $\tilde{\bm{\sigma}}^2 = 1/\tilde{\bm{\xi}}_{\text{PoE}}$, which is always smaller than any individual $\bm{\sigma}_v^2$. This issue becomes especially concerning in the incomplete multi-view scenario, where limited observations naturally lead to higher uncertainty, requiring more cautious and calibrated posterior variance estimation.

\section{Adaptive Correlation-aware Variational Aggregation}

As discussed previously, IMVC requires posterior inference over samples with arbitrarily missing views. A central challenge lies in how to aggregate view-specific posterior distributions to form a shared latent representation, particularly when the common independence assumption is overly simplistic and leads to underestimation of the posterior variance. Inspired by~\citep{CoDE,winkler}, we model the inter-view dependence through the error of estimation associated with each view's posterior distribution. However, directly applying these methods to IMVC poses significant limitations. 
First, they barely learn the covariance matrix of the error of estimation but cross-validate the implicit correlation, which is infeasible in unsupervised clustering settings where labels are unavailable.
Second, based on grid-searching, they assume a fixed correlation, which restricts the model’s ability to capture diverse and pairwise specific dependencies across views. 
To address these problems, we propose a novel approach (Fig.~\ref{fig:framework}) where the correlation structure is learned jointly with the model parameters, enabling modeling of adaptive inter-view dependencies under missing-view scenarios without supervision.

\begin{figure*}[t]
  \centering
  \includegraphics[width=\linewidth]{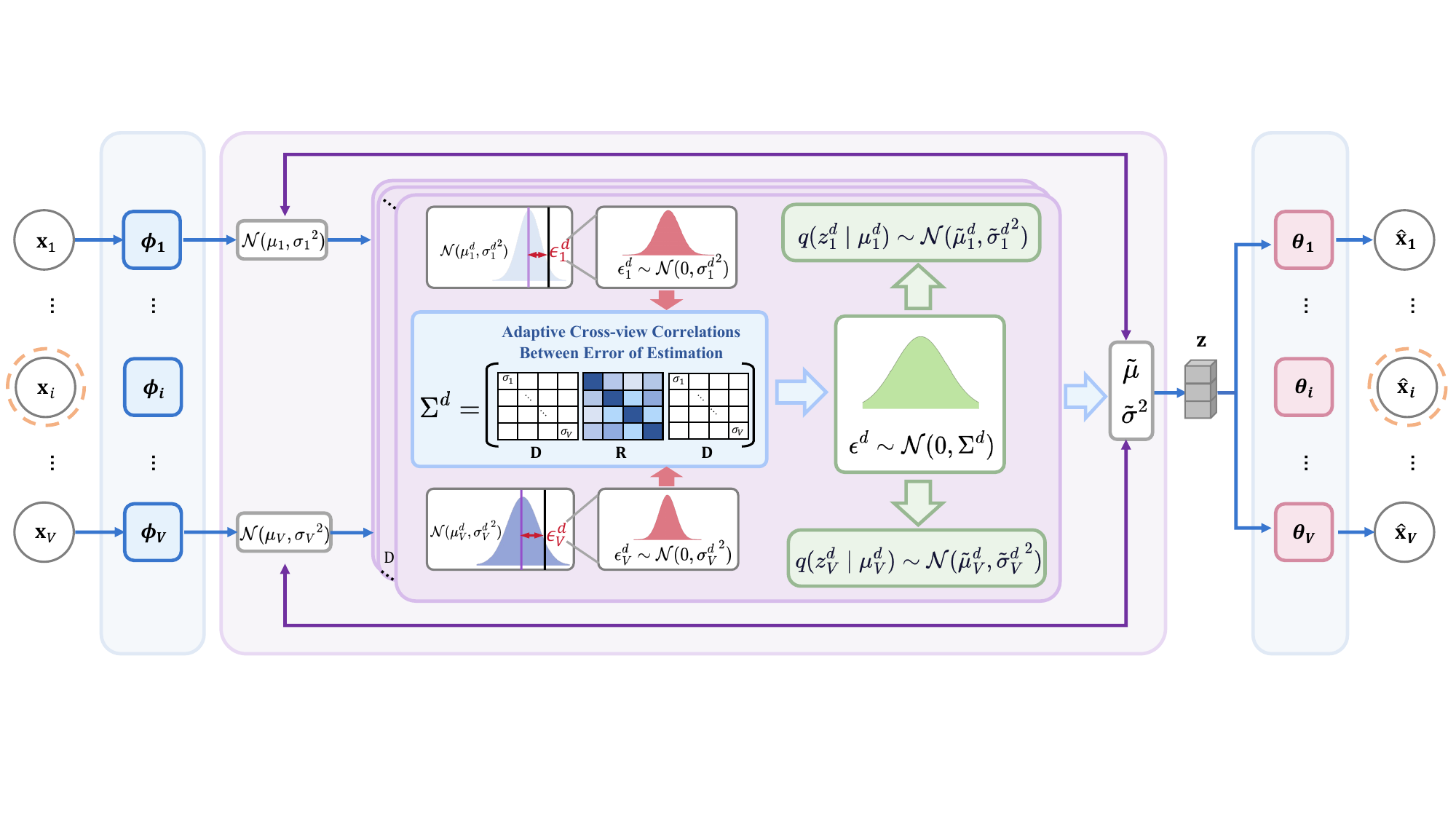}
  \caption{Overview of our proposed \textbf{ACOVA} framework. Given multi-view inputs $\{\mathbf{x}_{v}\}_{v=1}^V$, view-specific encoders $\{\bm{\phi}_v\}$ produce posterior distribution parameters $\{\bm{\mu}_v, \bm{\sigma}_v^2\}$ to characterize view-wise uncertainty. For each dimension $d \in [D]$, the estimation errors $\epsilon_v^d$ between the view-specific posteriors and the true latent variable are modeled with learnable cross-view correlations, captured by the correlation matrix $\mathbf{R}$. This structure, together with view-specific variances $\mathbf{D}$, forms a structured covariance matrix $\mathbf{\Sigma}^d = \mathbf{D R D}$, which governs the aggregation of posteriors into a consensus distribution with parameter values $\{\tilde{\mu}_v^d, \tilde{\sigma}_v^{d\,2}\}$. A latent representation $\mathbf{z} \in \mathbb{R}^D$ is sampled from this aggregated posterior and decoded via view-specific decoders $\{\bm{\theta}_v\}$ for reconstruction. Note that conditioning on $\bm{\mu}_v$ or on $\mathbf{x}_{v}$ is equivalent in this context.
}
  \label{fig:framework}
\end{figure*}

\subsection{Posterior with Adaptive Cross-view Correlation}

Given the mean parameters $\{\bm{\mu}_v\}_{v \in \mathcal{V}}$ of the view-specific variational distributions, we define $\bm{\mu}^d = [\mu^d_1, \mu^d_2, \ldots, \mu^d_V]^\top \in \mathbb{R}^{V \times 1}$ as the vector of view-specific estimates for the $d$-th dimension of the latent variable $\mathbf{z}$, where $\mu^d_v$ represents the estimate of view $v$ for dimension $d$. The complete estimate vector is then constructed as $\bm{\mu} = [\bm{\mu}^{1\top}, \bm{\mu}^{2\top}, \ldots, \bm{\mu}^{D\top}]^\top \in \mathbb{R}^{VD \times 1}$ by concatenating all dimension-wise estimates. Given variance parameters $\{\sigma_v^2\}_{v \in \mathcal{V}}$, $\bm\sigma^2$ is defined similarly. To model the discrepancy between the estimate $\bm{\mu}$ and the true, but unknown, latent variable $\mathbf{z}$, Definition \ref{def_estimation_error} introduces the error of estimation \citep{winkler,CoDE}.

\begin{definition}[Error of Estimation]

Given the estimate $\bm{\mu} = [\bm{\mu}^{1\top}, \bm{\mu}^{2\top}, \ldots, \bm{\mu}^{D\top}]^\top$ by the view-specific distributions $q_v(\mathbf{z}|\mathbf{x}_v) = \mathcal{N}(\boldsymbol{\mu}_v, \boldsymbol{{\sigma}_v}^2)$, the error of estimation of the latent variable $\bm{z}$ can be formulated as:

\vspace{-0.5cm}
\begin{equation}
\bm{\epsilon} = \bm{\mu} - \mathbbm{1}\mathbf{z},  
\label{eq:error_estimation}
\end{equation}
\vspace{-0.5cm}

where $\mathbbm{1} \in \mathbb{R}^{VD \times D}$ is a block diagonal design matrix with structure $\mathbbm{1} = \operatorname{blkdiag}(\mathbf{1}_V, \mathbf{1}_V, \ldots, \mathbf{1}_V)$ containing $D$ blocks, each $\mathbf{1}_V \in \mathbb{R}^{V \times 1}$ being a column vector of ones, $\bm{\epsilon} \sim \mathcal{N}(\mathbf{0}, \mathbf{\Sigma})$, and $\mathbf{\Sigma} =  \operatorname{diag}[\mathbf{\Sigma}^1, \mathbf{\Sigma}^2, \cdots, \mathbf{\Sigma}^D] \in \mathbb{R}^{VD \times VD}$ consists of the error of estimation covariance matrices $\mathbf{\Sigma}^d \in \mathbb{R}^{V \times V}$ for each dimension $d$. 

\label{def_estimation_error}
\end{definition}

Based on the above definition and as shown in Fig.~\ref{fig:framework}, each of the available view-specific distributions provides an assessment $\bm{\mu}^d$ of the joint latent variable $\mathbf{z}^d$. Therefore, the covariance matrix $\mathbf{\Sigma}^d$ reflects the interdependency between the error of estimation of the view-specific distributions. This dependency should not be ignored in previous aggregation methods, since all distributions are exposed to the same underlying object. 
It is worth mentioning that \cite{CoDE} learns the same dependency correlations across views, but relies on supervised cross-validation to grid-search the implicit correlation between the assessments of the view-specific distributions. However, this supervised approach is incompatible with unsupervised clustering tasks, where ground-truth labels are unavailable for tuning correlation parameters. Therefore, in this paper, we propose to adaptively learn the cross-view correlation structure through a novel differentiable parameterization:

\begin{definition}[Adaptive Cross-view Correlation Learning] Let $\mathbf{L}$ be a learnable lower-triangular matrix with positive diagonal elements ensured by exponentiation. We use the Cholesky decomposition of the covariance matrix $\bm{\Sigma}^d$ to guarantee the positive definiteness constraint and bounded entries of $\mathbf{R} = [\rho_{ij}]_{i,j=1}^V \succ 0$, a positive definite correlation matrix with $\rho_{ii} = 1$ and $\rho_{ij} \in [-1,1]$ for $i \neq j$, as follows:
\begin{equation}
\mathbf{R}_{ij} = \frac{(\mathbf{L}\mathbf{L}^\top)_{ij}}{\sqrt{(\mathbf{L}\mathbf{L}^\top)_{ii} (\mathbf{L}\mathbf{L}^\top)_{jj}}}.
\end{equation}
We use the following decomposition of error covariance $\mathbf{\Sigma}^d$ for the \textit{d}-th dimension
\begin{equation}
\label{eq:DRD}
\mathbf{\Sigma}^d = 
\begin{bmatrix}
\sigma_1^2 & \rho_{12} \sigma_1 \sigma_2 & \cdots & \rho_{1V} \sigma_1 \sigma_V \\
\rho_{21} \sigma_2 \sigma_1 & \sigma_2^2 & \cdots & \rho_{2V} \sigma_2 \sigma_V \\
\vdots & \vdots & \ddots & \vdots \\
\rho_{V1} \sigma_V \sigma_1 & \rho_{V2} \sigma_V \sigma_2 & \cdots & \sigma_V^2
\end{bmatrix}
= \mathbf{D R D},
\end{equation}
where \(\mathbf{D} = \text{diag}(\sigma_1, \ldots, \sigma_V)\) contains the view-specific standard deviations, to learn pairwise dependencies between view-specific variational distributions.
\end{definition}

Accordingly, by decomposing the error covariance $\mathbf{\Sigma}^d= \mathbf{D R D}$, we provide a general framework that unifies existing approaches through the separation of variance $\mathbf{D}$ and correlation matrix $\mathbf{R}$. When $\mathbf{R} = \mathbf{I}$, our method degenerates to DVIMC~\citep{DVIMVC} with independence assumption; when $\mathbf{R}$ has unit diagonal and identical off-diagonal entries equal to a fixed scalar $\rho$ (i.e., $R_{ii}=1$ and $R_{ij}=\rho$ for $i\neq j$), it degenerates to CoDE~\citep{CoDE} with a uniform correlation. Our approach generalizes beyond these limitations by modeling a full correlation matrix $\mathbf{R}$ that can adaptively model heterogeneous dependencies between views through a learnable Cholesky decomposition parameterization, enabling end-to-end optimization without manual tuning. Importantly, learning $\mathbf{R}$ models dependence among estimation errors across views after conditioning on the shared latent variable. It does not remove complementary information in $\mathbf{x}_{v}$ but calibrates uncertainty-aware fusion so that correlated errors are not treated as independent evidence, unlike PoE-style aggregation that assumes independence.

To understand $\mathbf{R}$'s capacity to deviate from independence-based aggregation, we provide a theoretical deviation bound from the identity matrix:

\begin{theorem}[Frobenius Identity Deviation Bound]
\label{thm:fro-deviation-final}
Given $\mathbf{R} = [\rho_{ij}]_{i,j=1}^V \succ 0$ where $\rho_{ii} = 1$ and $\rho_{ij} \in [-1,1]$ for $i \neq j$, the Frobenius deviation from identity matrix $\mathbf{I}$ is upper-bounded as
\begin{equation}
\|\mathbf{R} - \mathbf{I}\|_F \le \sqrt{V(V - 1)}.
\end{equation}
\end{theorem}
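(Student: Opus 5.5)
The bound follows from a direct entrywise computation; no spectral argument is needed. Since $\rho_{ii}=1$, the matrix $\mathbf{R}-\mathbf{I}$ has zero diagonal and off-diagonal entries $\rho_{ij}$. Its squared Frobenius norm is therefore
\begin{equation*}
\|\mathbf{R}-\mathbf{I}\|_F^2=\sum_{i\neq j}\rho_{ij}^2 .
\end{equation*}

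The plan is to bound each term by one, using the hypothesis $\rho_{ij}\in[-1,1]$, so that $\rho_{ij}^2\le 1$. Summing over the $V(V-1)$ ordered pairs $(i,j)$ with $i\neq j$ gives $\|\mathbf{R}-\mathbf{I}\|_F^2\le V(V-1)$. Taking square roots yields the claim.

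For completeness, I would remark that the entry bound is not an extra assumption. It follows from positive definiteness, because every $2\times 2$ principal minor $1-\rho_{ij}^2$ must be positive. Hence the inequality is in fact strict, $|\rho_{ij}|<1$, when $\mathbf{R}\succ 0$.

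I would also note that the constant cannot be improved. As $\mathbf{R}$ approaches the singular all-ones matrix through positive definite matrices with $\rho_{ij}=\rho\to 1$, the ratio $\|\mathbf{R}-\mathbf{I}\|_F/\sqrt{V(V-1)}$ approaches $1$.

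There is no real obstacle in this argument. The only point needing care is counting ordered rather than unordered pairs: there are $V(V-1)$ of them, which the symmetry $\rho_{ij}=\rho_{ji}$ makes explicit.
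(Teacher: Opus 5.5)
Your proof is correct and is the same as the paper's: expand $\|\mathbf{R}-\mathbf{I}\|_F^2=\sum_{i\neq j}\rho_{ij}^2$ and bound each of the $V(V-1)$ terms by one. Your extra remarks are also correct and go beyond the paper: $|\rho_{ij}|<1$ already follows from the $2\times 2$ principal minors of $\mathbf{R}\succ 0$, and the constant is sharp in the limit along the equicorrelation family with $\rho\to 1$.
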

\vspace{-0.4cm}

The proof of the bound can be found in Appendix \ref{appendix:Frobenius_bound}. The bound formally characterizes the maximal Frobenius deviation of a valid correlation matrix from the identity matrix, offering a theoretical guarantee on the extent of structural variability induced by cross-view dependency modeling. We further provide theoretical results demonstrating the identifiability of the learned correlations in Appendix \ref{appendix:identifiable}.

Based on the above analysis, denoting a binary mask vector as $\mathbf{M} \in \{0,1\}^{VD\times1}$ (adapted from the per-sample view indicator) to indicate the availability of view-specific observations, the joint posterior distribution for incomplete multi-view data can then be formulated as (see derivation in Appendix~\ref{appendix:PosteriorDistribution}):

\vspace{-0.5cm}
\begin{equation}
q(\mathbf{z} \mid \{\mathbf{x}_{v}\}_{v \in \mathcal{V}}) \sim \mathcal{N}(\bm{\tilde{\mu}},\bm{\tilde{\sigma}^2}) = \mathcal{N}(\mathbf{A}_{\mathbf{M}}^{-1} \mathbf{B}_{\mathbf{M}}, \mathbf{A}_{\mathbf{M}}^{-1})
\label{eq:poster_agg}
\end{equation}
where the aggregation matrices are computed as:
\begin{align}
\mathbf{A}_{\mathbf{M}} &= \mathbbm{1}^\top (\mathbf{\Sigma}^{-1} \odot \mathbf{M}\mathbf{M}^\top) \mathbbm{1} \\
\mathbf{B}_{\mathbf{M}} &= \mathbbm{1}^\top (\mathbf{\Sigma}^{-1} \odot \mathbf{M}\mathbf{M}^\top) (\bm{\mu} \odot \mathbf{M})
\end{align}
\vspace{-0.8cm}

Here, the posterior aggregation is controlled by two matrices derived from the correlation structure and mask, where $\mathbf{A}_{\mathbf{M}}$ represents the precision matrix of the aggregated posterior, and $\mathbf{B}_{\mathbf{M}}$ contains the precision-weighted mean estimates. The Hadamard product $\odot$ with the mask ensures that only available observations contribute to the aggregation process.\footnote{Note, explicit inversion of $\mathbf{\Sigma}$ can be avoided if desired (see discussion in Appendix \ref{appendix:inv}).}

\subsection{ELBO for Incomplete Multi-view Clustering}

The cluster assignment posterior $q_{\phi}(\mathbf{c}|\{\mathbf{x}_{v}\}_{v \in \mathcal{V}})$ can be approximated through the VaDE \citep{Vade} trick by $q_{\phi}(\mathbf{z}|\{\mathbf{x}_{v}\}_{v \in \mathcal{V}})$ via Bayes' rule:
\begin{equation}
\label{eq:qcx}
q_{\phi}(\mathbf{c}|\{\mathbf{x}_{v}\}_{v \in \mathcal{V}}) 
= q(\mathbf{c} \mid \mathbf{z}) 
\equiv \frac{p(\mathbf{c})\,p(\mathbf{z} \mid \mathbf{c})}{\sum_{\mathbf{c}} p(\mathbf{c})\,p(\mathbf{z} \mid \mathbf{c})},
\end{equation}
where $p(\mathbf{c}) = \text{Cat}(\bm{\tau})$ and $p(\mathbf{z} \mid \mathbf{c}) \sim \mathcal{N}(\mathbf{z} \mid \bm{\mu}_c, \bm{\Sigma}_c)$ being a Gaussian mixture prior. Here, $\text{Cat}(\bm{\tau})$ refers to the categorical distribution with $\bm{\tau_c}$ ($\sum\bm{\tau_c}=1$) being the prior probability for cluster $c$. Then the evidence lower bound (ELBO) derived from Eq.~\ref{eq:kl_div} for one sample can be expressed as\footnote{The full derivation is included in the Appendix~\ref{appendix:ELBO}.}:

\vspace{-1cm}
\begin{multline}
\mathcal{L}_{\text{ELBO}}(\{\mathbf{x}_{v}\}_{v \in \mathcal{V}}) = 
\mathbb{E}_{q_{\phi}(\mathbf{z} \mid \{\mathbf{x}_{v}\}_{v \in \mathcal{V}})} 
\left[ 
\sum_{v \in \mathcal{V}} \log p(\mathbf{x}_{v} \mid \mathbf{z}) 
\right] \\
- \mathbb{E}_{q_{\phi}(\mathbf{c} \mid \{\mathbf{x}_{v}\}_{v \in \mathcal{V}})} 
\left[
 D_{\text{KL}}\left(q_{\phi}(\mathbf{z} \mid \{\mathbf{x}_{v}\}_{v \in \mathcal{V}}) \,\|\, p(\mathbf{z} \mid \mathbf{c})\right)
\right] \\
- D_{\text{KL}}\left(q_{\phi}(\mathbf{c} \mid \{\mathbf{x}_{v}\}_{v \in \mathcal{V}}) \,\|\, p(\mathbf{c})\right).
\label{ELBO}
\vspace{-1cm}
\end{multline}

Note that the first reconstruction term in Eq.~\ref{ELBO} ensures that the latent variables sampled from the joint posterior distribution are capable of generating all observed views. Therefore, generative models encourage learning latent variables that also embed view-specific information, eliminating the need to factorize the latent space.

\subsection{Training Objective}

Following~\citep{DVIMVC}, we include a distribution alignment loss to encourage consistency between the aggregated posterior and the individual view-specific posteriors:

\vspace{-0.6cm}
\begin{equation}
\mathcal{L}_{\text{align}}
= \frac{1}{|\mathcal{V}|} \sum_{v \in \mathcal{V}} D_{\text{KL}} \left( q_{\phi} \left( \mathbf{z} \mid \{\mathbf{x}_u\}_{u \in \mathcal{V}} \right) \middle\| q_{\phi_v} \left( \mathbf{z} \mid \mathbf{x}_v \right) \right).
\end{equation}

This alignment regularizer prevents the aggregated posterior from ignoring information encoded by each view-specific posterior $q_{\phi_v}(\mathbf{z}\mid\mathbf{x}_v)$, thereby retaining view-wise complementary cues in the fused latent space.

Overall, with balancing hyperparameter $\alpha$, the total training objective is \footnote{Pseudo-code, complexity and time-cost analysis are discussed in Appendix~\ref{appendix:pseudo-code},\ref{appendix:complexity} and \ref{appendix:time_cost}, respectively.}:

\vspace{-0.6cm}
\begin{equation}
\label{eq:total_loss}
\mathcal{L}_{\text{total}} = \frac{1}{N}\sum_{i=1}^{N}\left(-\mathcal{L}_{\text{ELBO}}^{(i)} + \alpha \mathcal{L}_{\text{align}}^{(i)}\right).
\end{equation}
\vspace{-0.6cm}

\textbf{Optimization of Correlation Matrix $\mathbf{R}$.} In Eq.~\ref{eq:total_loss}, both of the KL divergence terms $ D_{\text{KL}}\left(q_{\phi}(\mathbf{z} \mid \{\mathbf{x}_{v}\}_{v \in \mathcal{V}}) \,\|\, p(\mathbf{z} \mid \mathbf{c})\right)$ and $ \sum_{v \in \mathcal{V}} D_{\text{KL}} \left( q_{\phi} \left( \mathbf{z} \mid \{\mathbf{x}_u\}_{u \in \mathcal{V}} \right) \middle\| q_{\phi_v} \left( \mathbf{z} \mid \mathbf{x}_v \right) \right)$ can be seen as functions of $\mathbf{R}$. Assuming Gaussian distributions, their combination yields:
\begin{align}
\label{eq:simplified}
& \frac{1}{2} \Bigg\{
\underbrace{
    \text{tr}\left(\bm{\Sigma}_c^{-1} \mathbf{\Omega}\right)
    + \|\bm{\mu}_c - \bm{\eta}\|_{\bm{\Sigma}_c^{-1}}^2
    + \ln |\bm{\Sigma}_c| - \ln |\mathbf{\Omega}|
}_{\text{Cross-view Correlation Learning}} \notag \\
& + 
\alpha \underbrace{
\sum_{v \in \mathcal{V}} 
\left[
    \text{tr}\left( \bm{\Sigma}_v^{-1} \mathbf{\Omega} \right)
    + \|\bm{\mu}_v - \bm{\eta}\|_{\bm{\Sigma}_v^{-1}}^2
    + \ln |\bm{\Sigma}_v| - \ln |\mathbf{\Omega}|
\right]
}_{\text{View-specific Consistency Alignment}}
\Bigg\},
\end{align}
which is the term being optimized in an end-to-end manner to learn the correlation matrix $\mathbf{R}$. For notational brevity, $\mathbf{\Omega} \triangleq \mathbf{A}_{\mathbf{M}}^{-1}$, 
$\bm{\eta} \triangleq \mathbf{\Omega} \mathbf{B}_{\mathbf{M}}$, and 
$\|\mathbf{x}\|_{\mathbf{S}}^2 \triangleq \mathbf{x}^\top \mathbf{S} \mathbf{x}$. 

\section{Experiment}\label{sec:exp}

\subsection{Experimental Setup}\label{sec:exp_setup}

\textbf{Datasets.} 
 The experiments are carried out on the following six standard IMVC datasets. \textbf{Handwritten}~\citep{HandWritten} contains 2,000 digit samples ranging from `0' to `9', where each sample is described using six types of features extracted from handwritten numerals. \textbf{Caltech5V}~\citep{Caltech5v} consists of 1,400 images across 7 object categories, with each image represented by five types of visual features. \textbf{Scene15}~\citep{Scene15} includes 4,485 images covering 15 scene categories. Each image is represented using three different visual features. \textbf{Fashion}~\citep{Fashion} is constructed from the Fashion-MNIST dataset, containing 10,000 images from 10 clothing categories. Each sample is represented from three different views.  \textbf{NoisyMNIST}~\citep{NoisyMNIST} is a large-scale dataset that consists of 2 views, 10 classes and 70,000 samples. 
\textbf{CUB Image-Caption}~\citep{CUB600} is a multi-modal dataset consisting of 600 images with corresponding text descriptions of 10 bird classes. Overall, the six datasets offer complementary coverage across scales, view counts, and both feature-based and raw-pixel views, as well as multi-modal inputs, facilitating a comprehensive evaluation under incomplete multi-view clustering.

\textbf{Incomplete multi-view data processing.} Following previous works~\citep{DVIMVC}, we construct the incomplete multi-view datasets by randomly selecting $p\%$ ($p = \{10, 30, 50, 70\}$) of the samples and randomly deleting views~\footnote{Extreme scenarios when $p=\{0,90\}$ are discussed in Appendix~\ref{appendix:extreme}.}, under the constraint that every sample retains at least one available view. To ensure fair comparisons, results are averaged over five runs, with the same five masks being used across methods.\\

\textbf{Baselines.} We compare our ACOVA with eight recent state-of-the-art methods, namely
BSV~\citep{BSVCONCAT},   
CONCAT~\citep{BSVCONCAT},
DCP~\citep{DCP},  
DSIMVC~\citep{DSIMVC},  
MVP~\citep{MVP},   
CPSPAN~\citep{CSPAN},  
DVIMC~\citep{DVIMVC},
and
PMIMC~\citep{PMIMC}. \\
\textbf{Evaluation Measures.}
We evaluate clustering performance using the four standard metrics for IMVC~\citep{amigo2009comparison}: clustering accuracy (ACC), normalized mutual information (NMI), adjusted Rand index (ARI), and purity (PUR). Higher values indicate better clustering performance for all four metrics. 

Additional setup details can be found in Appendix \ref{appendix:B}.

\begin{table*}[t]
\centering
\caption{Clustering performance under various missing-view rates on all datasets. Best results and second best are highlighted in \textbf{bold} and \underline{underlined}, respectively. Note that all results are averaged over five runs.}
\resizebox{0.9\linewidth}{!}{
\renewcommand{\arraystretch}{1}
\begin{tabular}{c|c|cccc|cccc|cccc|cccc}
\toprule
      & \textbf{Missing Rate} & \multicolumn{4}{c|}{\textbf{10\%}} & \multicolumn{4}{c|}{\textbf{30\%}} & \multicolumn{4}{c|}{\textbf{50\%}} & \multicolumn{4}{c}{\textbf{70\%}} \\
\midrule
      & \textbf{Metrics} & \textbf{ACC}$\uparrow$ & \textbf{NMI}$\uparrow$ & \textbf{ARI}$\uparrow$ & \textbf{PUR}$\uparrow$ & \textbf{ACC}$\uparrow$ & \textbf{NMI}$\uparrow$ & \textbf{ARI}$\uparrow$ & \textbf{PUR}$\uparrow$ & \textbf{ACC}$\uparrow$ & \textbf{NMI}$\uparrow$ & \textbf{ARI}$\uparrow$ & \textbf{PUR}$\uparrow$ & \textbf{ACC}$\uparrow$ & \textbf{NMI}$\uparrow$ & \textbf{ARI}$\uparrow$ & \textbf{PUR}$\uparrow$ \\
\midrule
\multirow{9}{*}{\rotatebox{90}{\textbf{Scene15}}}   
& BSV    & 0.3595 & 0.3577 & 0.1784 & 0.4002 & 0.3305 & 0.3206 & 0.1253 & 0.3630 & 0.2944 & 0.2905 & 0.0796 & 0.3321 & 0.2593 & 0.2568 & 0.0472 & 0.2903 \\
& Concat & 0.3703 & 0.3820 & 0.2037 & 0.4263 & 0.3360 & 0.3318 & 0.1540 & 0.3825 & 0.2943 & 0.2995 & 0.1201 & 0.3436 & 0.2596 & 0.2665 & 0.0982 & 0.3053 \\
& DCP    & 0.4071 & 0.4414 & 0.2488 & 0.4342 & 0.3958 & 0.4264 & 0.2350 & 0.4265 & 0.3879 & 0.4095 & 0.2359 & 0.4166 & 0.3784 & 0.3849 & 0.2113 & 0.4026 \\
& CPSPAN & 0.3574 & 0.3457 & 0.1911 & 0.4103 & 0.3161 & 0.2783 & 0.1541 & 0.3439 & 0.2553 & 0.2070 & 0.1063 & 0.2786 & 0.2025 & 0.1423 & 0.0621 & 0.2198 \\
& DSIMVC & 0.2795 & 0.2976 & 0.1430 & 0.3218 & 0.2734 & 0.2896 & 0.1383 & 0.3131 & 0.2697 & 0.2828 & 0.1335 & 0.3133 & 0.2654 & 0.2741 & 0.1290 & 0.3062 \\
& DVIMC  & \underline{0.4863} & \underline{0.4780} & \underline{0.3192} & \underline{0.5021}
& \underline{0.4678} & \underline{0.4440} & \underline{0.3097} & 0.4823 & 0.4371 & 0.4163 & 0.2737 & 0.4577 & 0.4136 & 0.3950 & 0.2519 & 0.4286 \\
& MVP   & 0.4451 & 0.4361 & 0.2696 & 0.4830 & 0.4458 & 0.4341 & 0.2723 & \underline{0.4866} & \underline{0.4564} & \underline{0.4314} & \underline{0.2967} & \textbf{0.5170} & \underline{0.4407} & \textbf{0.4285} & \underline{0.2725} & \textbf{0.4895} \\
& PMIMC  & 0.3317 & 0.3409 & 0.1778 & 0.3602 & 0.3071 & 0.3175 & 0.1484 & 0.3500 & 0.3125 & 0.3180 & 0.1477 & 0.3513 & 0.3178 & 0.3173 & 0.1492 & 0.3509 \\
& ACOVA (Ours) & \textbf{0.5040} & \textbf{0.4804} & \textbf{0.3273} & \textbf{0.5270} & \textbf{0.4807} & \textbf{0.4560} & \textbf{0.3106} & \textbf{0.5119} & \textbf{0.4710} & \textbf{0.4360} & \textbf{0.2973} & \underline{0.4987} & \textbf{0.4428} & \underline{0.4083} & \textbf{0.2733} & \underline{0.4636} \\
\midrule
\multirow{9}{*}{\rotatebox{90}{\textbf{Caltech5V}}}  
& BSV    & 0.5678 & 0.4364 & 0.3408 & 0.5827 & 0.5035 & 0.3699 & 0.2308 & 0.5170 & 0.4483 & 0.3235 & 0.1488 & 0.4617 & 0.3859 & 0.2798 & 0.0890 & 0.4019 \\
& Concat & 0.4750 & 0.3300 & 0.2519 & 0.4957 & 0.4450 & 0.2941 & 0.1998 & 0.4593 & 0.4279 & 0.2756 & 0.1595 & 0.4386 & 0.3600 & 0.2338 & 0.0891 & 0.3757 \\
& DCP    & 0.5754 & 0.5719 & 0.4612 & 0.6054 & 0.5363 & 0.5269 & 0.4199 & 0.5751 & 0.5063 & 0.4729 & 0.3406 & 0.5393 & 0.3982 & 0.3363 & 0.1378 & 0.4056 \\
& CPSPAN & 0.7689 & 0.6677 & 0.6088 & 0.7750 & 0.6874 & 0.5653 & 0.4666 & 0.6888 & 0.6010 & 0.4789 & 0.3467 & 0.6030 & 0.4861 & 0.3664 & 0.2222 & 0.4904 \\
& DSIMVC & 0.7760 & 0.6998 & 0.6280 & 0.7816 & 0.7841 & 0.6947 & 0.6335 & 0.7841 & 0.7339 & 0.6868 & 0.5989 & 0.7413 & 0.7029 & 0.5838 & 0.5083 & 0.7047 \\
& DVIMC  & \underline{0.9013} & \underline{0.8207} & \underline{0.7999} & \underline{0.9013} & \underline{0.8921} & 0.8011 & 0.7850 & \underline{0.8921} & \underline{0.8609} & 0.7507 & 0.7322 & \underline{0.8609} & 0.8514 & 0.7345 & 0.7195 & 0.8514 \\
& MVP    & 0.8083 & 0.6969 & 0.6594 & 0.8083 & 0.8070 & 0.7139 & 0.6723 & 0.8146 & 0.8103 & 0.7063 & 0.6700 & 0.8157 & 0.7677 & 0.6483 & 0.5934 & 0.7693 \\
& PMIMC  & 0.8963 & 0.8205 & 0.7966 & 0.8963 & 0.8917 & \underline{0.8124} & \underline{0.7858} & 0.8917 & 0.8601 & \underline{0.7591} & \underline{0.7381} & 0.8601 & \underline{0.8570} & \underline{0.7522} & \underline{0.7304} & \underline{0.8570} \\
& ACOVA (Ours) & \textbf{0.9149} & \textbf{0.8507} & \textbf{0.8287} & \textbf{0.9149} & \textbf{0.9131} & \textbf{0.8435} & \textbf{0.8226} & \textbf{0.9131} & \textbf{0.8953} & \textbf{0.8078} & \textbf{0.7918} & \textbf{0.8953} & \textbf{0.8660} & \textbf{0.7531} & \textbf{0.7340} & \textbf{0.8660} \\
\midrule
\multirow{9}{*}{\rotatebox{90}{\textbf{Handwritten}}}     
& BSV    & 0.7087 & 0.6647 & 0.5625 & 0.7433 & 0.7105 & 0.6344 & 0.4716 & 0.7171 & 0.5842 & 0.5444 & 0.3028 & 0.5970 & 0.5159 & 0.4802 & 0.1874 & 0.5208 \\
& Concat & 0.7684 & 0.7214 & 0.6412 & 0.7837 & 0.7147 & 0.6441 & 0.5012 & 0.7190 & 0.6108 & 0.5437 & 0.3492 & 0.6166 & 0.5021 & 0.4498 & 0.2511 & 0.5105 \\
& DCP    & 0.6459 & 0.7094 & 0.5663 & 0.6737 & 0.6639 & 0.6951 & 0.5189 & 0.6877 & 0.7045 & 0.7018 & 0.5651 & 0.7334 & 0.6913 & 0.6398 & 0.4856 & 0.7083 \\
& CPSPAN & 0.8710 & 0.7969 & 0.7586 & 0.8727 & 0.7524 & 0.6927 & 0.5603 & 0.7638 & 0.6430 & 0.6050 & 0.3862 & 0.6595 & 0.6470 & 0.5828 & 0.3289 & 0.6486 \\
& DSIMVC & 0.8014 & 0.7960 & 0.7282 & 0.8158 & 0.8180 & 0.8082 & 0.7455 & 0.8275 & 0.8304 & 0.8122 & 0.7543 & 0.8342 & 0.8284 & 0.7879 & 0.7312 & 0.8284 \\
& DVIMC  & \underline{0.9169} & \underline{0.8604} & \underline{0.8547} & \underline{0.9201} & 0.8954 & \underline{0.8470} & \underline{0.8210} & \underline{0.9020} & \underline{0.8980} & \underline{0.8373} & \underline{0.8175} & \underline{0.9054} & 0.7808 & 0.7905 & 0.7197 & 0.8099 \\
& MVP    & 0.7833 & 0.8002 & 0.7114 & 0.8119 & 0.8095 & 0.8000 & 0.7270 & 0.8208 & 0.7474 & 0.7853 & 0.6834 & 0.7715 &  0.7653 & 0.7863 & 0.6956 & 0.7848   \\
& PMIMC  & 0.8826 & 0.8188 & 0.7846 & 0.8834 & \underline{0.8998} & 0.8213 & 0.7962 & 0.8998 & 0.8613 & 0.8096 & 0.7637 & 0.8695 & \underline{0.8522} & \underline{0.7994} & \underline{0.7505} & \underline{0.8587} \\
& ACOVA (Ours) & \textbf{0.9363} & \textbf{0.8692} & \textbf{0.8645} & \textbf{0.9363} & \textbf{0.9376} & \textbf{0.8709} & \textbf{0.8669} & \textbf{0.9376} & \textbf{0.9146} & \textbf{0.8489} & \textbf{0.8325} & \textbf{0.9146} & \textbf{0.8999} & \textbf{0.8351} & \textbf{0.8176} & \textbf{0.9021} \\
\midrule
\multirow{9}{*}{\rotatebox{90}{\textbf{Fashion}}}    
& BSV    & 0.4832 & 0.4606 & 0.2986 & 0.5210 & 0.4424 & 0.4114 & 0.2210 & 0.4779 & 0.4100 & 0.3734 & 0.1626 & 0.4361 & 0.3645 & 0.3348 & 0.1040 & 0.3916 \\
& Concat & 0.6546 & 0.6518 & 0.5451 & 0.7028 & 0.5002 & 0.4618 & 0.3646 & 0.5272 & 0.4146 & 0.3433 & 0.2509 & 0.4350 & 0.2988 & 0.1976 & 0.1307 & 0.3247 \\
& DCP    & 0.8547 & \underline{0.8864} & 0.8119 & 0.8652 & 0.7663 & 0.8340 & 0.7285 & 0.7899 & 0.7533 & 0.7995 & 0.6904 & 0.7643 & 0.6717 & 0.7430 & 0.6079 & 0.6879 \\
& CPSPAN & 0.7130  & 0.7566 & 0.6303 & 0.7548  & 0.6981  & 0.7524 & 0.6179 & 0.7446  & 0.6886  & 0.7527 & 0.6237 & 0.7436  & 0.6672  & 0.7416 & 0.6043 & 0.7273  \\
& DSIMVC & \underline{0.8993} & 0.8573 & 0.8178 & 0.8993 & \underline{0.8810} & \textbf{0.8772} & 0.8162 & 0.8810 & 0.8342 & 0.8076 & 0.7380 & 0.8359 & 0.8048 & 0.7733 & 0.6938 & 0.8056 \\
& DVIMC  & 0.8856 & 0.8799 & \underline{0.8284} & \underline{0.9024} & 0.8806 & 0.8718 & \underline{0.8236} & \underline{0.8986} &  \underline{0.8629} & \underline{0.8449} & \underline{0.7965} & \textbf{0.8867} & \underline{0.8579} & 0.8290 & 0.7729 & \underline{0.8725} \\
& MVP    & 0.8691 & 0.8743 & 0.8175 & 0.8955 & 0.8259 & 0.8537 & 0.7744 & 0.8722 & 0.7982 & 0.8372 & 0.7484 & 0.8485 & 0.8386 & \textbf{0.8325} & \underline{0.7758} & 0.8665 \\
& PMIMC  & 0.6986 & 0.7563 & 0.6282 & 0.7427 & 0.7039 & 0.7615 & 0.6365 & 0.7535 & 0.7118 & 0.7586 & 0.6295 & 0.7577 & 0.7059 & 0.7583 & 0.6362 & 0.7540 \\
& ACOVA (Ours) & \textbf{0.9056} & \textbf{0.8876} & \textbf{0.8516} & \textbf{0.9129} & \textbf{0.8848} & \underline{0.8735} & \textbf{0.8266} & \textbf{0.9009} & \textbf{0.8793} & \textbf{0.8479} & \textbf{0.8002} & \underline{0.8794} 
& \textbf{0.8722} & \underline{0.8312} & \textbf{0.7833} & \textbf{0.8745} \\
\midrule
\multirow{9}{*}{\rotatebox{90}{\textbf{NoisyMNIST}}}     
& BSV    & 0.5144 & 0.4542 & 0.3258 & 0.5618 & 0.4184 & 0.3343 & 0.1896 & 0.4456 & 0.2939 & 0.1912 & 0.0754 & 0.3090 & 0.2672 & 0.1566 & 0.0452 & 0.2693 \\
& Concat & 0.4338 & 0.3910 & 0.2727 & 0.4418 & 0.3873 & 0.3229 & 0.1891 & 0.4036 & 0.3463 & 0.2592 & 0.1252 & 0.3561 & 0.2805 & 0.1832 & 0.0615 & 0.2864 \\
& DCP    & 0.8945 & 0.9149 & 0.8786 & 0.9319 & 0.8997 & 0.8952 & 0.8608 & 0.9198 & \underline{0.9150} & 0.8586 & 0.8401 & \underline{0.9150} & \underline{0.9225} & \underline{0.8335} & \underline{0.8431} & \underline{0.9225} \\
& CPSPAN & 0.5665 & 0.5870 & 0.4402 & 0.6058 & 0.6103 & 0.6050 & 0.4724 & 0.6357 & 0.5774 & 0.5824 & 0.4439 & 0.6097 & 0.5486 & 0.5766 & 0.4271 & 0.5932 \\
& DSIMVC & 0.8207 & 0.7735 & 0.7276 & 0.8327 & 0.7337 & 0.6790 & 0.6148 & 0.7464 & 0.6152 & 0.5769 & 0.4848 & 0.6361 & 0.6245 & 0.5822 & 0.4867 & 0.6343 \\
& DVIMC  & \underline{0.9372} & \underline{0.9345} & \underline{0.9157} & \underline{0.9466} & \textbf{0.9365} & \underline{0.9060} & \underline{0.8987} & \textbf{0.9368} & 0.8998 & \underline{0.8699} & \underline{0.8538} & 0.9019 & 0.8741 & 0.8184 & 0.7999 & 0.8763 \\
& MVP    & 0.8657 & 0.8778 & 0.7864 & 0.8667 & 0.8590 & 0.8635 & 0.8222 & 0.8777 & 0.8355 & 0.8195 & 0.7792 & 0.8464 & 0.6660 & 0.6744 & 0.5721 & 0.6694 \\
& PMIMC  & 0.7127 & 0.6985 & 0.5927 & 0.7432 & 0.7187 & 0.6593 & 0.5623 & 0.7458 & 0.6616 & 0.5739 & 0.4947 & 0.6695 & 0.6044 & 0.5344 & 0.4393 & 0.6220 \\
& ACOVA (Ours) & \textbf{0.9663} & \textbf{0.9478} & \textbf{0.9440} & \textbf{0.9664} & \underline{0.9344} & \textbf{0.9125} & \textbf{0.9016} & \underline{0.9348} & \textbf{0.9599} & \textbf{0.9001} & \textbf{0.9144} & \textbf{0.9599} & \textbf{0.9377} & \textbf{0.8579} & \textbf{0.8697} & \textbf{0.9377} \\
\midrule
\multirow{9}{*}{\rotatebox{90}{\textbf{CUB Image-Caption}}}   
& BSV & 0.6723 & 0.6604 & 0.4818 & 0.6823 & 0.6080 & 0.5858 & 0.3376 & 0.6163 & 0.5270 & 0.5241 & 0.2295 & 0.5387 & 0.4493 & 0.4474 & 0.1266 & 0.4630 \\
& Concat & 0.6890 & 0.6685 & 0.4979 & 0.6947 & 0.5837 & 0.5790 & 0.3322 & 0.5987 & 0.5187 & 0.5107 & 0.2068 & 0.5323 & 0.4663 & 0.4658 & 0.1391 & 0.4770 \\
& DCP & 0.5227 & 0.5822 & 0.2867 & 0.5833 & 0.5163 & 0.5781 & 0.3008 & 0.5797 & 0.4040 & 0.4430 & 0.1196 & 0.4587 & 0.2653 & 0.2337 & 0.0571 & 0.2867 \\
& CPSPAN & 0.6973 & 0.6766 & 0.5420 & 0.7153 & 0.6067 & 0.5914 & 0.4153 & 0.6160 & 0.5407 & 0.5198 & 0.2762 & 0.5487 & 0.4957 & 0.4797 & 0.1953 & 0.5033\\
& DSIMVC & 0.6380 & 0.5744 & 0.4280 & 0.6427 & 0.6420 & 0.5663 & 0.4236 & 0.6420 & 0.5810 & 0.5419 & 0.3824 & 0.5860 & 0.5410 & 0.4965 & 0.3275 & 0.5440 \\
& DVIMC & 0.6573 & 0.6585 & 0.5246 & 0.6807 & 0.6703 & 0.6502 & 0.5131 & 0.6807 & 0.6057 & 0.6026 & 0.4538 & 0.6160 & 0.4710 & 0.4694 & 0.2795 & 0.4803 \\
& MVP & 0.6497 & 0.6317 & 0.5051 & 0.6667 & \underline{0.6840} & 0.6545 & 0.5256 & 0.6903 & 0.6163 & 0.5750 & 0.4401 & 0.6287 & \underline{0.6440} & \underline{0.6036} & \underline{0.4655} & \underline{0.6510} \\
& PMIMC & \underline{0.7183} & \underline{0.7131} &  \underline{0.5795} & \underline{0.7383} & 0.6820 & \underline{0.6847} &  \underline{0.5360} & \underline{0.6977} & \underline{0.6517} & \underline{0.6198} &  \underline{0.4846} & \underline{0.6590} & 0.5907 & 0.5480 & 0.4034 & 0.6060 \\
& ACOVA (Ours) & \textbf{0.7970} & \textbf{0.7695} & \textbf{0.6581} & \textbf{0.7970} & \textbf{0.7563} & \textbf{0.7335} & \textbf{0.6215} & \textbf{0.7590} & \textbf{0.6823} & \textbf{0.6568} & \textbf{0.5267} & \textbf{0.6917} & \textbf{0.6554} & \textbf{0.6136} & \textbf{0.4730} & \textbf{0.6584} \\

\bottomrule
\end{tabular}
}
\vspace{-0.1cm}
\label{tab:SOTA}
\end{table*}

\subsection{Comparison with State-of-the-Art Methods}
As shown in Table \ref{tab:SOTA}, we compare ACOVA with eight state-of-the-art IMVC methods across six real-world datasets. It can be observed that: (1) Compared to the rest of the methods, ACOVA consistently ranks first or second across all datasets and missing rate settings,
validating the effectiveness of our adaptive correlation learning strategy. (2) Compared to existing variational IMVC methods, such as DVIMC, which builds upon a strong independence assumption, and MVP, which leverages cyclic permutation to impute latent variables, ACOVA achieves superior performance in most cases, 
demonstrating the effectiveness of our cross-view correlation modeling and learning mechanism. Notably, this performance gain comes with negligible additional complexity. ACOVA introduces only 3 and 21 extra learnable parameters in the 2-view and 6-view settings, respectively, relative to DVIMC backbones with over 6 million and 16 million parameters.

\begin{table*}[t]
\centering
\caption{Ablation results on Handwritten and Scene15 datasets with different missing-view rates. `\textit{w/}' and `\textit{w/o}' mean `\textit{with}' and `\textit{without}', respectively. ‘corr.’ refers to considering cross-view correlations. The best result in each column is denoted in \textbf{bold}. Note that all results are averaged over five runs.}
\resizebox{0.95\linewidth}{!}{
\renewcommand{\arraystretch}{1.15}
\begin{tabular}{c|c|cccc|cccc|cccc|cccc}
\toprule
\multirow{2}{*}{} & \multirow{2}{*}{\textbf{Variant}}  & \multicolumn{4}{c|}{\textbf{10\%}} & \multicolumn{4}{c|}{\textbf{30\%}} & \multicolumn{4}{c|}{\textbf{50\%}} & \multicolumn{4}{c}{\textbf{70\%}}  \\ \cline{3-18} 
  & & \textbf{ACC}$\uparrow$ & \textbf{NMI}$\uparrow$ & \textbf{ARI}$\uparrow$ & \textbf{PUR}$\uparrow$ & \textbf{ACC}$\uparrow$ & \textbf{NMI}$\uparrow$ & \textbf{ARI}$\uparrow$ & \textbf{PUR}$\uparrow$ & \textbf{ACC}$\uparrow$ & \textbf{NMI}$\uparrow$ & \textbf{ARI}$\uparrow$ & \textbf{PUR}$\uparrow$ & \textbf{ACC}$\uparrow$ & \textbf{NMI}$\uparrow$ & \textbf{ARI}$\uparrow$ & \textbf{PUR}$\uparrow$ \\

\midrule
\multirow{9}{*}{\rotatebox{90}{\textbf{Handwritten}}}  
&Learn $\mathbf{R}$ (\textit{w/} corr.) & \textbf{0.9363} & 0.8692 & 0.8645 & \textbf{0.9363} & \textbf{0.9376} & \textbf{0.8709} & \textbf{0.8669} & \textbf{0.9376} & \textbf{0.9146} & \textbf{0.8489} & \textbf{0.8325} & \textbf{0.9146} & 0.8999 & \textbf{0.8351} & \textbf{0.8176} & 0.9021 \\ \cline{2-18} 
&\textit{w/o} corr.                     & 0.9169 & 0.8604 & 0.8547 & 0.9201 & 0.8954 & 0.8470 & 0.8210 & 0.9020 & 0.8980 & 0.8373 & 0.8175 & 0.9054 & 0.7808 & 0.7905 & 0.7197 & 0.8099 \\
&fixed $\rho = 0.1$                     & 0.9339 & 0.8676 & 0.8595 & 0.9339 & 0.9249 & 0.8534 & 0.8418 & 0.9249 & 0.8860 & 0.8288 & 0.8031 & 0.8898 & 0.8881 & 0.8231 & 0.7986 & 0.8918 \\
&fixed $\rho = 0.3$                     & 0.8874 & 0.8513 & 0.8231 & 0.8945 & 0.9217 & 0.8509 & 0.8355 & 0.9217 & 0.8929 & 0.8335 & 0.8093 & 0.8972 & 0.8859 & 0.8219 & 0.7962 & 0.8903 \\
&fixed $\rho = 0.5$                     & 0.9088 & 0.8590 & 0.8391 & 0.9118 & 0.9301 & 0.8612 & 0.8510 & 0.9301 & 0.8710 & 0.8291 & 0.7957 & 0.8821 & 0.8867 & 0.8239 & 0.7962 & 0.8908 \\
&fixed $\rho = 0.7$                     & 0.9069 & 0.8568 & 0.8322 & 0.9091 & 0.9317 & 0.8638 & 0.8545 & 0.9317 & 0.8963 & 0.8410 & 0.8161 & 0.9001 & \textbf{0.9115} & 0.8319 & 0.8145 & \textbf{0.9115} \\
&fixed $\rho = 0.9$                     & 0.9362 & \textbf{0.8749} & \textbf{0.8649} & 0.9362 & 0.9172 & 0.8445 & 0.8263 & 0.9172 & 0.8786 & 0.8330 & 0.8036 & 0.8828 & 0.8977 & 0.8140 & 0.7934 & 0.8977 \\
\cline{2-18} 
&\textit{w/o} $\mathcal{L}_{\text{ELBO}}$   & 0.6240 & 0.5284 & 0.3673 & 0.6240 & 0.5980 & 0.4548 & 0.3227 & 0.5980 & 0.5495 & 0.3924 & 0.2874 & 0.5545 & 0.4845 & 0.3357 & 0.2301 & 0.5020 \\
&\textit{w/o} $\mathcal{L}_{\text{align}}$ & 0.8235 & 0.7768 & 0.7036 & 0.8235 & 0.6930 & 0.6378 & 0.5179 & 0.6930 & 0.6645 & 0.6259 & 0.5179 & 0.6675 & 0.6090 & 0.5664 & 0.4574 & 0.6100 \\
\midrule
\multirow{9}{*}{\rotatebox{90}{\textbf{Scene15}}}  
&Learn $\mathbf{R}$ (\textit{w/} corr.) & 0.5040 & \textbf{0.4804} & \textbf{0.3273} & 0.5270 & \textbf{0.4807} & \textbf{0.4560} & \textbf{0.3106} & \textbf{0.5119} & \textbf{0.4710} & \textbf{0.4360} & \textbf{0.2973} & \textbf{0.4987} & \textbf{0.4428} & \textbf{0.4083} & \textbf{0.2733} & \textbf{0.4636} \\ \cline{2-18} 
&\textit{w/o} corr.                     & 0.4863 & 0.4780 & 0.3192 & 0.5021 & 0.4678 & 0.4440 & 0.3097 & 0.4823 & 0.4371 & 0.4163 & 0.2737 & 0.4577 & 0.4136 & 0.3950 & 0.2519 & 0.4286 \\
&fixed $\rho = 0.1$                     & 0.4963 & 0.4722 & 0.3237 & 0.5220 & 0.4702 & 0.4466 & 0.2977 & 0.4941 & 0.4653 & 0.4292 & 0.2955 & 0.4966 & 0.4181 & 0.3878 & 0.2518 & 0.4287 \\
&fixed $\rho = 0.3$                     & \textbf{0.5057} & 0.4743 & 0.3272 & \textbf{0.5281} & 0.4746 & 0.4458 & 0.3011 & 0.4986 & 0.4598 & 0.4275 & 0.2901 & 0.4885 & 0.4165 & 0.3882 & 0.2508 & 0.4309 \\
&fixed $\rho = 0.5$                     & 0.4963 & 0.4673 & 0.3214 & 0.5218 & 0.4689 & 0.4414 & 0.3005 & 0.4891 & 0.4569 & 0.4224 & 0.2863 & 0.4815 & 0.4105 & 0.3893 & 0.2521 & 0.4228 \\
&fixed $\rho = 0.7$                     & 0.4761 & 0.4543 & 0.3023 & 0.5031 & 0.4742 & 0.4255 & 0.2907 & 0.4940 & 0.4590 & 0.4295 & 0.2818 & 0.4765 & 0.4192 & 0.3827 & 0.2503 & 0.4353 \\
&fixed $\rho = 0.9$                     & 0.4649 & 0.4505 & 0.2938 & 0.4939 & 0.4609 & 0.4187 & 0.2710 & 0.4727 & 0.3918 & 0.3877 & 0.2083 & 0.4144 & 0.3656 & 0.3575 & 0.2239 & 0.3765 \\ \cline{2-18} 
&\textit{w/o} $\mathcal{L}_{\text{ELBO}}$   & 0.1599 & 0.1605 & 0.0519 & 0.1632 & 0.1407 & 0.0870 & 0.0050 & 0.1431 & 0.1405 & 0.0854 & 0.0220 & 0.1411 & 0.1394 & 0.0746 & 0.0170 & 0.1402 \\
&\textit{w/o} $\mathcal{L}_{\text{align}}$ & 0.4566 & 0.4624 & 0.2926 & 0.4941 & 0.3974 & 0.4000 & 0.2335 & 0.4193 & 0.3393 & 0.3412 & 0.1902 & 0.3454 & 0.2105 & 0.1798 & 0.0821 & 0.2157 \\
\bottomrule
\end{tabular}
}
\label{tab:ablation}
\end{table*}

\subsection{Ablation Study}

\textbf{Effectiveness of Learning the Correlation.} To investigate the effectiveness of learning cross-view correlations, we compare our method (Learn $\mathbf{R}$ (\textit{w/} corr.)) to an ablated version (\textit{w/o} corr.), where we assume view independence during posterior aggregation. Note, ACOVA in this case simplifies to baseline~\citep{DVIMVC}. As shown in Table~\ref{tab:ablation}, we consistently outperform `\textit{w/o} corr.' across all missing rates on both datasets, indicating the effectiveness and robustness of adaptively modeling and learning view correlations during posterior aggregation. To provide a more comprehensive analysis, we further conduct the grid search over fixed correlation coefficients in [0.1, 0.3, 0.5, 0.7, 0.9].  Note, while the fixed grid search sometimes yields competitive results, its optimal value varies significantly across datasets and missing rates, making it impractical to determine without ground-truth supervision in real clustering scenarios, especially for more severe missing conditions. In contrast, our correlation learning strategy adaptively captures the intrinsic relationships between views without manual tuning or supervision, demonstrating both effectiveness and robustness of our method.

\textbf{Effectiveness of Training Objectives.} Our objective function consists of two components: $\mathcal{L}_{\text{ELBO}}$ and $\mathcal{L}_{\text{align}}$. As demonstrated in Table~\ref{tab:ablation}. Removing either of them leads to significant performance drops, verifying their complementary roles, where the $\mathcal{L}_{\text{ELBO}}$ ensures effective representation learning and $\mathcal{L}_{\text{align}}$ promotes cross-view consistency.

\subsection{Correlation Learning Analysis}
To further demonstrate the view relationships captured by the correlation matrix $\mathbf{R}$ and validate our motivation, we conducted controlled simulation experiments on the Fashion dataset. Specifically, we created two views by duplicating the original images. To simulate heterogeneous views, we adopted a mutually exclusive noise injection strategy where each sample had exactly one corrupted view (view 1 for randomly selected 50\% of samples, view 2 for the rest). We introduced pepper noise rates in $ \{0.00, 0.50, 1.00\}$ to control the degree of view heterogeneity, and missing rates in $ \{0\%, 10\%, 30\%, 50\%, 70\%\}$ to simulate incomplete scenarios.

Fig.~\ref{fig:correlation} visualizes the learned off-diagonal correlations in $\mathbf{R}$ across all combinations of noise and missing rates. A clear bi-directional declining trend emerges: as the noise rate increases from 0.00 to 1.00, the two views gradually become less correlated due to increasing heterogeneity. Similarly, as the missing rate increases, the correlation values in $\mathbf{R}$ progressively decrease across all noise conditions, as incomplete information limits the shared structure between views. These results validate that our learned $\mathbf{R}$ adaptively captures meaningful cross-view relationships under both heterogeneous and incomplete conditions, supporting the necessity of adaptive correlation learning for realistic multi-view clustering. Notably, even under maximal noise and high missingness, the off-diagonal correlations do not collapse fully to zero (the prior). We hypothesize that this arises from the generative model itself, which requires the latent variables to retain some meaningful information to generate the views.
More discussion on the correlation matrix can be found in Appendix~\ref{appendix:C}.

\begin{figure}[t]
    \centering
    \includegraphics[width=0.85\linewidth]{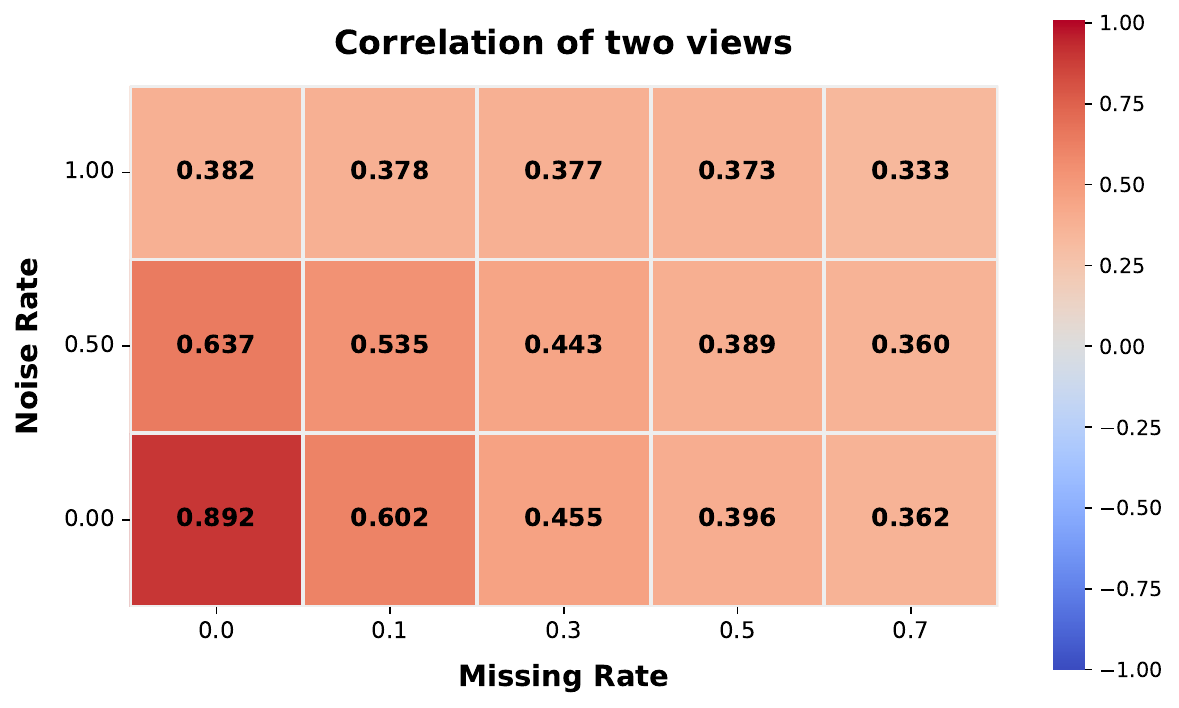}
    \caption{Learned correlations under different noise and missing rates.}
    \label{fig:correlation}
    \vspace{-0.7cm}
\end{figure}

\begin{figure*}[h]

    \centering
    \subfloat[t-SNE of CONCAT]{%
        \includegraphics[width=0.23\linewidth,height=0.23\linewidth]{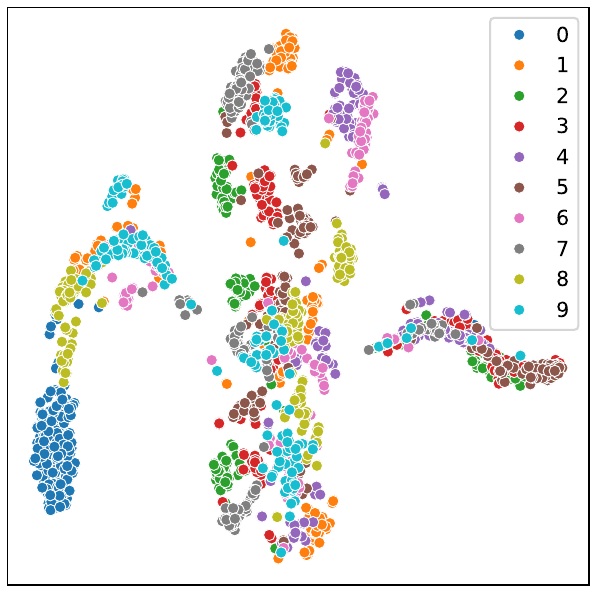}
        \label{fig:tsneCONCAT}
    }
    \subfloat[t-SNE of DVIMC]{%
        \includegraphics[width=0.23\linewidth,height=0.23\linewidth]{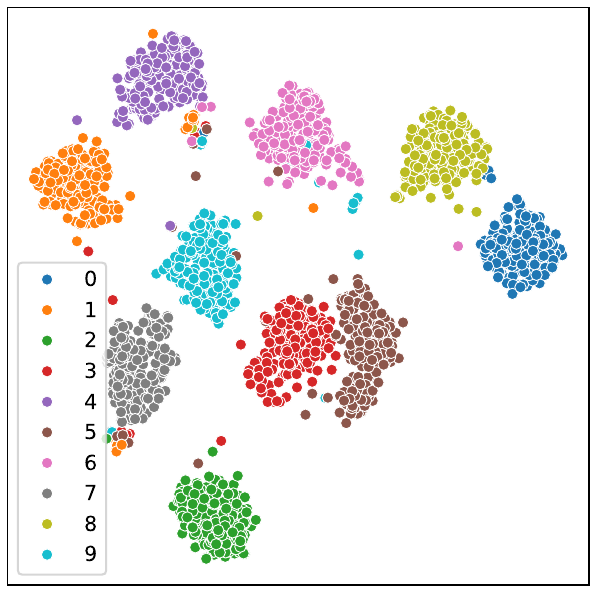}
        \label{fig:tsneBaseline}
    }
    \subfloat[t-SNE of Ours]{%
        \includegraphics[width=0.23\linewidth,height=0.23\linewidth]{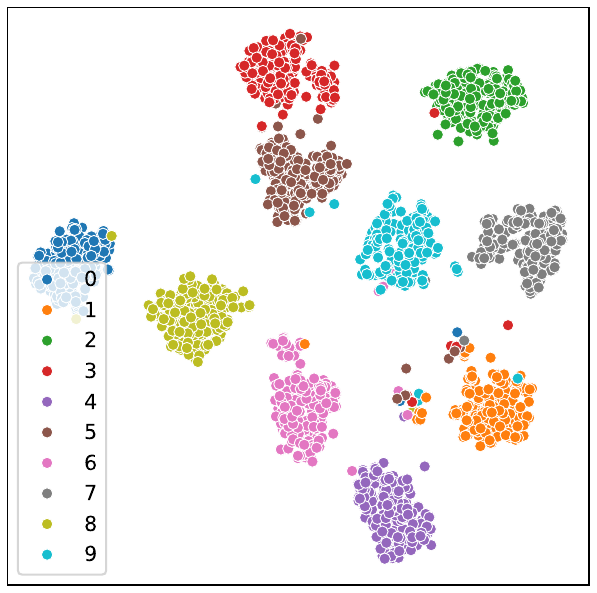}
        \label{fig:tsneOurs}
    }
    \subfloat[Correlation Matrix $\mathbf{R}$]{%
        \includegraphics[width=0.25\linewidth,height=0.23\linewidth]{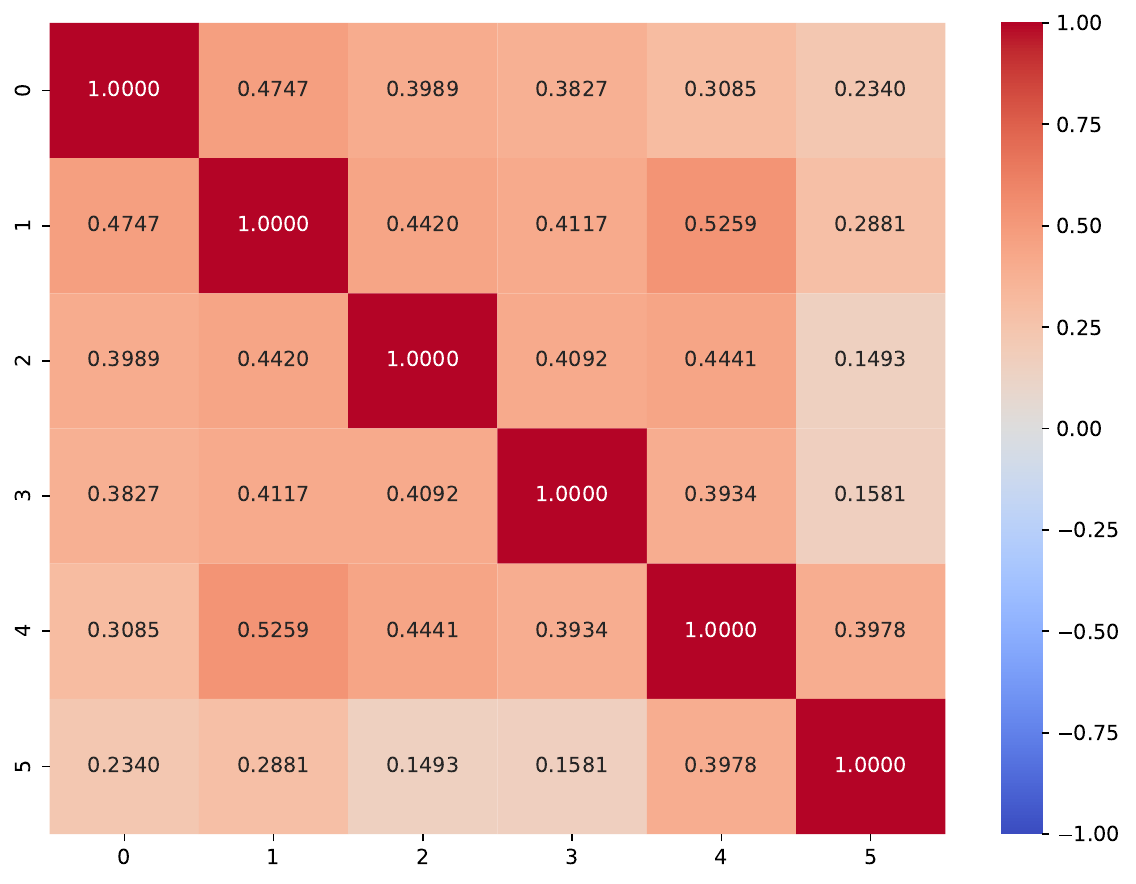}
        \label{fig:Correlation}
    }
    
    \caption{t-SNE and learned correlation matrix visualization on the Handwritten dataset.}
    \label{fig:tSNE}
\end{figure*}

\subsection{Visualization and Parameter Sensitivity Analysis}

\textbf{Visualization.} Fig.~\ref{fig:tSNE}(a-c) compares the t-SNE visualization between CONCAT~\citep{BSVCONCAT}, DVIMC~\citep{DVIMVC} and our method with a missing rate of 10\%. Our approach shows clearer separation and fewer outliers, indicating more discriminative representations by learning cross-view correlations adaptively from incomplete data. In addition, we plot the correlation matrix $\mathbf{R}$ learned from the incomplete Handwritten datasets with 50\% missing rates in Fig.~\ref{fig:tSNE}(d), where we observe meaningful view dependencies. The views for this dataset corresponding to: Fourier coefficients, Profile correlations, Karhunen-Loève coefficients, Zernike moments, Pixel averages, and morphological features (in that order). From Fig.~\ref{fig:tSNE}(d), we notice that most views have the least correlation with the last view, which intuitively makes sense, as the morphological features are significantly different from the frequency- or intensity-based features in the other views. Further, we notice high correlation between profile correlations and pixel averages, where both are linked to the underlying intensity distributions. We also observe high correlation between Karhunen-Loève coefficients and pixel averages, which indicates that much of the variation in the dataset is due to average intensities, which also aligns well with what we expect for this dataset. More visualization on different datasets and missing rates are included in Appendix~\ref{appendix:C}.

\textbf{Parameter Sensitivity Analysis.} This section empirically analyzes the effect on ACC of the trade-off parameter $\alpha$ in Eq.~\ref{eq:total_loss} across different missing rates for the Scene15 and Caltech5V datasets. As shown in Fig.~\ref{fig:Para}, $\alpha$ in the range of [5,20] achieves higher performance. 
Additional sensitivity studies and analyses can be found in Appendix~\ref{appendix:C}.

\section{Discussion and Conclusion}
\label{sec:concl}
In this work, we propose ACOVA, an imputation-free variational framework that adaptively learns inter-view correlations for IMVC. By going beyond the conventional independence assumption and modeling cross-view correlation through view-specific estimation error, our method captures the cross-view dependencies. Besides, we propose to jointly optimize the posterior representations and cross-view correlation, enabling more effective latent embeddings and significantly boosting clustering performance, especially under high missing-view conditions. Extensive experiments on multiple benchmarks verify the effectiveness and robustness of our approach compared to prior IMVC methods.

\begin{figure}[t]
    \centering
    \subfloat[ACC on Scene15]{%
        \includegraphics[width=0.4\linewidth]{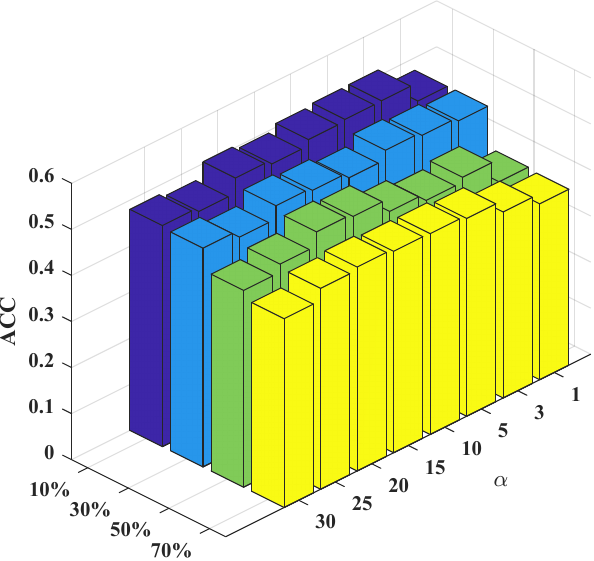}
        \label{fig:alpha_scene15}
    }
    \subfloat[ACC on Caltech5V]{%
        \includegraphics[width=0.4\linewidth]{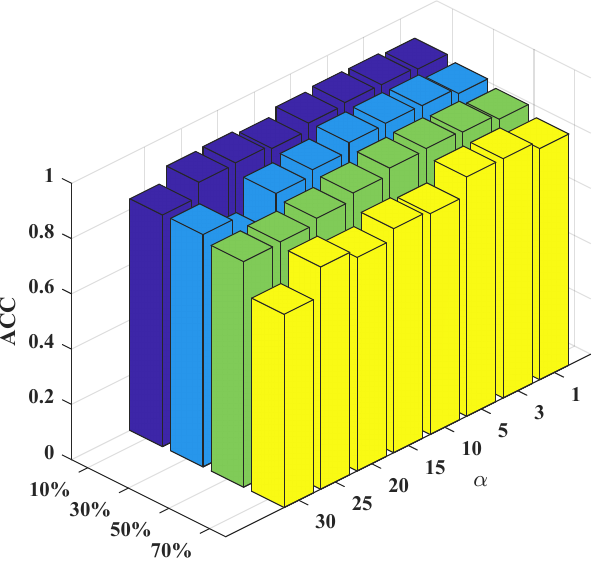}
        \label{fig:alpha_caltech5v}
    }
    \caption{Clustering performance with different parameter $\alpha$ and missing rate settings on Scene15 and Caltech5V dataset.}
    \label{fig:Para}
    \vspace{-0.5cm}
\end{figure}

\section*{Impact Statement}

This work advances a correlation-aware variational aggregation perspective for incomplete multi-view learning, demonstrating that explicitly modeling inter-view dependencies (rather than assuming conditional independence) can lead to substantial performance gains. By adaptively learning cross-view correlations within a unified variational formulation, the approach mitigates the need for fixed correlation assumptions or costly hyperparameter searches. Although evaluated in incomplete multi-view clustering, where data missingness amplifies the importance of correlation modeling, our framework is general and can be used as a drop-in replacement for CoDE or the popular Mixture/Product of Experts, when dependency modeling is beneficial. This includes both other unsupervised and supervised settings, some natural follow-ups being image synthesis for fusing modalities in generative adversarial networks~\cite{huang2022multimodal}, comparative text assessment in large language models~\cite{liusie2024efficient}, and distillation aggregation for diffusion policies in diffusion models~\cite{zhou2024variational}.

\section*{Acknowledgment}

This work was supported by the Beijing Natural Science-Xiaomi Innovation Joint Foundation (No. L253007). It was further supported by the Research Council of Norway (NFR), through its Center for Research-based Innovation (grant no. 309439) and FRIPRO (grant no. 303514 and 360068) funding schemes.

\bibliography{example_paper}
\bibliographystyle{icml2026}

\newpage
\appendix
\onecolumn

\newpage

\section{Theoretical Derivations and Analysis}\label{appendix:A}

\subsection{Mathematical Background}
\label{appendix:background}

This section provides the mathematical foundations~\citep{Vade,DMVCVAE,DIMVC} for the key equations in variational incomplete multi-view clustering preliminaries in Section~\ref{sec:Preliminaries}.

\subsubsection{Derivation of the Joint Probability (Equation \ref{eq:jointprobability})}

The generative process assumes a probabilistic model where cluster assignment $\mathbf{c}$ is drawn from a categorical distribution, shared latent representation $\mathbf{z}$ depends on the cluster assignment, and each view $\mathbf{x}_v$ is conditionally independent of other views given $\mathbf{z}$. For the joint probability $p(\{\mathbf{x}_{v}\}_{v=1}^{V}, \mathbf{z}, \mathbf{c})$, we could first apply the chain rule of probability to get:
\begin{align}
p(\{\mathbf{x}_{v}\}_{v=1}^{V}, \mathbf{z}, \mathbf{c}) &= p(\mathbf{c}) \cdot p(\mathbf{z} \mid \mathbf{c}) \cdot p(\{\mathbf{x}_{v}\}_{v=1}^{V} \mid \mathbf{z}, \mathbf{c}).
\end{align}

Given the latent representation $\mathbf{z}$, we assume the observations are independent of the cluster assignment: $\mathbf{x}_v \perp \mathbf{c} | \mathbf{z}$, which gives $p(\{\mathbf{x}_{v}\}_{v=1}^{V} \mid \mathbf{z}, \mathbf{c}) = p(\{\mathbf{x}_{v}\}_{v=1}^{V} \mid \mathbf{z})$. Besides, given the shared representation $\mathbf{z}$, we assume different views are conditionally independent: $\mathbf{x}_i \perp \mathbf{x}_j | \mathbf{z}$ for $i \neq j$, yielding $p(\{\mathbf{x}_{v}\}_{v=1}^{V} \mid \mathbf{z}) = \prod_{v=1}^{V} p(\mathbf{x}_{v} \mid \mathbf{z})$. This leads directly to Eq.\ref{eq:jointprobability}:
\begin{equation}
p(\{\mathbf{x}_{v}\}_{v=1}^{V}, \mathbf{z}, \mathbf{c}) = p(\mathbf{c})\,p(\mathbf{z} \mid \mathbf{c})\,\prod_{v=1}^{V} p(\mathbf{x}_{v} \mid \mathbf{z}).
\end{equation}
\subsubsection{Derivation of the Posterior Distribution (Equation \ref{eq:true_posterior})}
For incomplete multi-view data where only views $v \in \mathcal{V}$ are observed, the posterior distribution follows from Bayes' theorem:
\begin{equation}
p(\mathbf{z},\mathbf{c}|\{\mathbf{x}_{v}\}_{v \in \mathcal{V}}) = \frac{p(\{\mathbf{x}_{v}\}_{v \in \mathcal{V}}, \mathbf{z}, \mathbf{c})}{p(\{\mathbf{x}_{v}\}_{v \in \mathcal{V}})}.
\end{equation}

For the incomplete setting, missing views are implicitly marginalized out. Under the conditional independence assumption, the joint likelihood for observed views becomes:
\begin{equation}
p(\{\mathbf{x}_{v}\}_{v \in \mathcal{V}} \mid \mathbf{z}) = \prod_{v \in \mathcal{V}} p(\mathbf{x}_{v} \mid \mathbf{z}).
\end{equation}
The marginal likelihood requires summing over all latent variables, yielding Eq.\ref{eq:true_posterior}:
\begin{equation}
p(\mathbf{z},\mathbf{c}|\{\mathbf{x}_{v}\}_{v \in \mathcal{V}}) = \frac{p(\{\mathbf{x}_{v}\}_{v \in \mathcal{V}}|\mathbf{z})p(\mathbf{z},\mathbf{c})}{\int_{\mathbf{z}}\sum_{\mathbf{c}=1}^C p(\{\mathbf{x}_{v}\}_{v \in \mathcal{V}}|\mathbf{z},\mathbf{c})p(\mathbf{z},\mathbf{c})d\mathbf{z}}.
\end{equation}

\subsubsection{Variational Approximation (Equation \ref{eq:kl_div})}

The exact posterior is computationally intractable due to the complex integration and summation in the denominator. Variational inference approximates this with a tractable distribution $q_{\phi}(\mathbf{z},\mathbf{c}|\{\mathbf{x}_{v}\}_{v \in \mathcal{V}})$ by minimizing the Kullback-Leibler divergence (Eq.\ref{eq:kl_div}):
\begin{equation}
D_{\text{KL}}(q_{\phi}(\mathbf{z},\mathbf{c}|\{\mathbf{x}_{v}\}_{v \in \mathcal{V}})||p(\mathbf{z},\mathbf{c}|\{\mathbf{x}_{v}\}_{v \in \mathcal{V}})).
\end{equation}
As for optimization, minimizing the KL divergence is equivalent to maximizing the Evidence Lower BOund (ELBO), and please refer to \ref{appendix:ELBO} for more detailed derivations.

\subsection{Proof of the Frobenius Identity Deviation Bound}
\label{appendix:Frobenius_bound}

We provide the proof of the Theorem of Frobenius Identity Deviation Bound, which establishes an upper bound on the Frobenius norm deviation of a correlation matrix from the identity.

\begin{proof}
Let $\mathbf{R} \in \mathbb{R}^{V \times V}$ be a correlation matrix, i.e., a symmetric matrix with unit diagonal entries $R_{ii} = 1$ for all $i$, and off-diagonal entries satisfying $|R_{ij}| \le 1$ for all $i \ne j$. We compute the Frobenius norm of the deviation from the identity:
\begin{equation}
\|\mathbf{R} - \mathbf{I}\|_F^2
= \sum_{i,j} (R_{ij} - \delta_{ij})^2
= \sum_{i \ne j} R_{ij}^2,
\end{equation}
where $\delta_{ij}$ is the Kronecker delta, and we used the fact that $R_{ii} - 1 = 0$. Since $|R_{ij}| \le 1$ for all $i \ne j$, it follows that:
\begin{equation}
\sum_{i \ne j} R_{ij}^2 \le \sum_{i \ne j} 1 = V(V - 1).
\end{equation}

Taking the square root of both sides yields the desired result:
 
\begin{equation}
\|\mathbf{R} - \mathbf{I}\|_F \le \sqrt{V(V - 1)}.
\end{equation}
\end{proof}

\subsection{Proof of Identifiability of the Covariance Parameterization}\label{appendix:identifiable}
In the following, we demonstrate the identifiability of the proposed covariance parameterization.
\begin{theorem}[Identifiability of the Covariance Parameterization]
Let $\mathbf{D} = \mathrm{diag}(\bm\sigma_1,\dots,\bm\sigma_V)$ and 
$\mathbf{D}' = \mathrm{diag}(\sigma'_1,\dots,\sigma'_V)$ be diagonal matrices with strictly
positive entries. Let $\mathbf{L}$ and $\mathbf{L}'$ be lower--triangular matrices with strictly
positive diagonal entries, and define correlation matrices
\begin{equation}
\mathbf{R} = \operatorname{corr}(\mathbf{L}\mathbf{L}^\top), 
\qquad 
\mathbf{R}' = \operatorname{corr}(\mathbf{L}'\mathbf{L}'^\top),
\end{equation}
i.e.\ $\mathbf{R}_{ij} = (\mathbf{L}\mathbf{L}^\top)_{ij}/\sqrt{(\mathbf{L}\mathbf{L}^\top)_{ii}(\mathbf{L}\mathbf{L}^\top)_{jj}}$ (and
analogously for $\mathbf{R}'$). For the $d$-th dimension, define the covariance matrices
\begin{equation}
\mathbf{\Sigma}_d = \mathbf{D} \mathbf{R} \mathbf{D},
\qquad 
\mathbf{\Sigma}'_d = \mathbf{D}' \mathbf{R}' \mathbf{D}'.
\end{equation}

If $\mathbf{\Sigma}_d = \mathbf{\Sigma}'_d$, then
\begin{equation}
\mathbf{D} = \mathbf{D}', \qquad \mathbf{R} = \mathbf{R}', \qquad \mathbf{L} = \mathbf{L}'.
\end{equation}

In particular, the mapping $(\mathbf{L}, \mathbf{D}) \mapsto \mathbf{\Sigma}_d$ is identifiable.
\end{theorem}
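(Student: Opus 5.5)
The plan is to peel the parameterization apart in three layers: $\mathbf{\Sigma}_d \mapsto \mathbf{D}$, then $(\mathbf{\Sigma}_d,\mathbf{D}) \mapsto \mathbf{R}$, then $\mathbf{R} \mapsto \mathbf{L}$. Each layer is inverted using one structural fact: unit diagonal of $\mathbf{R}$, invertibility of $\mathbf{D}$, and uniqueness of the Cholesky factorization, respectively.

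\emph{Step 1 (variances).} Since $\mathbf{D}$ is diagonal and $R_{ii}=1$, we have $(\mathbf{D}\mathbf{R}\mathbf{D})_{ii} = \sigma_i^2 R_{ii} = \sigma_i^2$, and likewise $(\mathbf{D}'\mathbf{R}'\mathbf{D}')_{ii} = \sigma_i'^2$. Equating the diagonals of $\mathbf{\Sigma}_d=\mathbf{\Sigma}'_d$ gives $\sigma_i^2=\sigma_i'^2$. Strict positivity of the entries then yields $\sigma_i=\sigma_i'$, so $\mathbf{D}=\mathbf{D}'$.

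\emph{Step 2 (correlations).} Because $\mathbf{D}$ has strictly positive diagonal, it is invertible. Hence
\[
\mathbf{R}=\mathbf{D}^{-1}\mathbf{\Sigma}_d\mathbf{D}^{-1}=\mathbf{D}'^{-1}\mathbf{\Sigma}'_d\mathbf{D}'^{-1}=\mathbf{R}'.
\]
Equivalently, $\rho_{ij}=(\mathbf{\Sigma}_d)_{ij}/\sqrt{(\mathbf{\Sigma}_d)_{ii}(\mathbf{\Sigma}_d)_{jj}}$, so $\mathbf{R}=\operatorname{corr}(\mathbf{\Sigma}_d)$ is a function of $\mathbf{\Sigma}_d$ alone.

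\emph{Step 3 (Cholesky factor), the main obstacle.} Set $\mathbf{S}=\operatorname{diag}\big(\sqrt{(\mathbf{L}\mathbf{L}^\top)_{11}},\dots,\sqrt{(\mathbf{L}\mathbf{L}^\top)_{VV}}\big)$. Then $\mathbf{L}\mathbf{L}^\top=\mathbf{S}\mathbf{R}\mathbf{S}$. Let $\mathbf{C}$ denote the unique Cholesky factor of $\mathbf{R}\succ 0$ (lower triangular with positive diagonal). The matrix $\mathbf{S}\mathbf{C}$ is also lower triangular with positive diagonal and satisfies $(\mathbf{S}\mathbf{C})(\mathbf{S}\mathbf{C})^\top=\mathbf{L}\mathbf{L}^\top$. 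Uniqueness of the Cholesky factorization therefore gives $\mathbf{L}=\mathbf{S}\mathbf{C}$, and similarly $\mathbf{L}'=\mathbf{S}'\mathbf{C}$.

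This only determines $\mathbf{L}$ up to a positive row rescaling. Indeed, $\operatorname{corr}(\cdot)$ is invariant under $\mathbf{L}\mapsto\mathbf{S}''\mathbf{L}$ for any positive diagonal $\mathbf{S}''$: for instance, $\mathbf{L}'=2\mathbf{L}$ produces the same $\mathbf{R}$ and hence the same $\mathbf{\Sigma}_d$. So the claim $\mathbf{L}=\mathbf{L}'$ cannot hold for unrestricted $\mathbf{L}$, and I would not try to prove it as stated. Instead I would prove the corrected statement:
\begin{itemize}
\item[(i)] $\mathbf{D}=\mathbf{D}'$ and $\mathbf{R}=\mathbf{R}'$ always;
\item[(ii)] $\mathbf{L}=\mathbf{L}'$ if and only if the rows of $\mathbf{L}$ and $\mathbf{L}'$ have equal Euclidean norms.
\end{itemize}
Part (ii) follows because $\mathbf{S}_{ii}=\|\mathbf{L}_{i,:}\|_2$, so equal row norms means $\mathbf{S}=\mathbf{S}'$ and then $\mathbf{L}=\mathbf{S}\mathbf{C}=\mathbf{S}'\mathbf{C}=\mathbf{L}'$. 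In particular, $\mathbf{L}$ is identifiable under the canonical normalization of unit row norms, where $\mathbf{L}=\mathbf{C}$. The map $(\mathbf{L},\mathbf{D})\mapsto\mathbf{\Sigma}_d$ is then identifiable on the quotient by row scaling, which is exactly the set of pairs $(\mathbf{R},\mathbf{D})$ that the model actually uses.
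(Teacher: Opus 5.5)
Your proposal is correct. Where it departs from the paper, the paper is the one in error.

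Your Steps 1 and 2 are the paper's argument in slightly different form. The paper writes $\mathbf{R}=\mathbf{A}\mathbf{R}'\mathbf{A}$ with $\mathbf{A}=\mathbf{D}^{-1}\mathbf{D}'$ and gets $a_i^2=1$ from the unit diagonals. You instead read $\sigma_i^2=(\sigma_i')^2$ directly off the diagonal of $\mathbf{\Sigma}_d$. Both then cancel the invertible $\mathbf{D}$ to obtain $\mathbf{R}=\mathbf{R}'$.

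For the last conclusion, the paper asserts that $\mathbf{R}=\mathbf{R}'$ implies $\mathbf{L}\mathbf{L}^\top=\mathbf{L}'\mathbf{L}'^\top$, and then invokes uniqueness of the Cholesky factor. That implication is false. For every positive diagonal $\mathbf{S}$ we have $\operatorname{corr}(\mathbf{S}\mathbf{L}\mathbf{L}^\top\mathbf{S})=\operatorname{corr}(\mathbf{L}\mathbf{L}^\top)$, and $\mathbf{S}\mathbf{L}$ is again lower triangular with positive diagonal. So your example $\mathbf{L}'=2\mathbf{L}$ (with $\mathbf{D}'=\mathbf{D}$) meets every hypothesis and gives $\mathbf{\Sigma}_d=\mathbf{\Sigma}'_d$, yet $\mathbf{L}\neq\mathbf{L}'$. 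The claim $\mathbf{L}=\mathbf{L}'$, and with it identifiability of $(\mathbf{L},\mathbf{D})$, is not true, so you were right not to try to prove it.

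Your factorization $\mathbf{L}=\mathbf{S}\mathbf{C}$ is where the Cholesky-uniqueness argument actually belongs. Here $\mathbf{S}_{ii}=\|\mathbf{L}_{i,:}\|_2$ and $\mathbf{C}$ is the Cholesky factor of $\mathbf{R}$. This shows that the fibre of $\mathbf{L}\mapsto\mathbf{R}$ is exactly the orbit under positive row scaling, which has dimension $V$, consistent with the count $V(V+1)/2-V(V-1)/2$. Hence $(\mathbf{D},\mathbf{R})$ is identifiable, while $\mathbf{L}$ is identifiable only after a normalization such as unit row norms.

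For the method itself this is harmless, since only $\mathbf{R}$ enters $\mathbf{\Sigma}^d$. It does mean the learnable $\mathbf{L}$ has $V$ flat directions in the objective. The theorem should be stated for $(\mathbf{D},\mathbf{R})$, or for a row-normalized $\mathbf{L}$, as in your corrected version.
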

\begin{proof}
Assume $\mathbf{\Sigma}_d = \mathbf{\Sigma}'_d$, \emph{i.e.},
\begin{equation}
\mathbf{D} \mathbf{R} \mathbf{D} = \mathbf{D}' \mathbf{R}' \mathbf{D}'. \label{eq:ident-start}
\end{equation}
Since $\mathbf{D}$ is invertible, left- and right-multiplying
Eq.~\ref{eq:ident-start} by $\mathbf{D}^{-1}$ yields
\begin{equation}
\mathbf{R} = \mathbf{A} \mathbf{R}' \mathbf{A}, 
\qquad 
\mathbf{A} := \mathbf{D}^{-1} \mathbf{D}'.
\end{equation}
Here $\mathbf{A} = \mathrm{diag}(\mathbf{a}_1,\dots,\mathbf{a}_V)$ with all $\mathbf{a}_i > 0$.
Because $\mathbf{R}$ and $\mathbf{R}'$ are correlation matrices, we have 
$\mathbf{R}_{ii} = \mathbf{R}'_{ii} = 1$ for all $i$. Taking the diagonal entries of the equation
$\mathbf{R} = \mathbf{A} \mathbf{R}' \mathbf{A}$ gives
\begin{equation}
1 = \mathbf{R}_{ii} = (\mathbf{A} \mathbf{R}' \mathbf{A})_{ii} = \mathbf{a}_i^2 \mathbf{R}'_{ii} = \mathbf{a}_i^2,
\qquad i = 1,\dots,V.
\end{equation}
Hence $\mathbf{a}_i = 1$ for all $i$, implying $\mathbf{A} = \mathbf{I}$ and thus $\mathbf{D}' = \mathbf{D}$.
Substituting $\mathbf{D}' = \mathbf{D}$ into Eq.~\ref{eq:ident-start} gives
\begin{equation}
\mathbf{D} \mathbf{R} \mathbf{D} = \mathbf{D} \mathbf{R}' \mathbf{D}
\quad\Rightarrow\quad
\mathbf{R} = \mathbf{R}'.
\end{equation}
By construction, $\mathbf{R}$ and $\mathbf{R}'$ are correlation matrices obtained from
$\mathbf{L}\mathbf{L}^\top$ and $\mathbf{L}'\mathbf{L}'^\top$, respectively. Equality $\mathbf{R} = \mathbf{R}'$ implies
$\mathbf{L}\mathbf{L}^\top = \mathbf{L}'\mathbf{L}'^\top$. For any positive definite matrix, the Cholesky factor
that is lower triangular with strictly positive diagonal entries is unique.
Therefore, $\mathbf{L} = \mathbf{L}'$.
Thus $\mathbf{\Sigma}_d = \mathbf{\Sigma}'_d$ implies $\mathbf{D} = \mathbf{D}'$, $\mathbf{R} = \mathbf{R}'$, and $\mathbf{L} = \mathbf{L}'$, proving
identifiability of the parameterization. 
\end{proof}


Note, the only non-identifiable quantities are the internal neural-network weights themselves, which is expected and does not affect the identifiability of the covariance structure.

\subsection{Complete Derivation of Posterior Distribution }
\label{appendix:PosteriorDistribution}
In this section, we provide a detailed derivation of the aggregated posterior distribution under incomplete multi-view settings. Specifically, given a subset of available modalities indexed by a binary mask $\mathbf{M} \in \{0, 1\}^{V}$ for each sample, we derive the variational posterior $q(\mathbf{z} \mid \{\mathbf{x}_{v}\}_{v \in \mathcal{V}})$. This derivation is presented for one latent dimension, and stacking all $D$ dimensions recovers the main-text $VD$-dimensional form. Note that since our inference model produces view-specific latent predictions $\boldsymbol{\mu}$, this posterior can be equivalently expressed as $q(\mathbf{z}|\boldsymbol{\mu}, \mathbf{M})$, where $\boldsymbol{\mu}$ contains the sufficient statistics from the encoder networks.

\begin{equation}
\label{eq:posterior_agg}
q(\mathbf{z} \mid \{\mathbf{x}_{v}\}_{v \in \mathcal{V}}) = q(\mathbf{z}|\boldsymbol{\mu}, \mathbf{M}) \sim \mathcal{N}(\bm{\tilde{\mu}}, \bm{\tilde{\sigma}^2}) = \mathcal{N}(\mathbf{A}_{\mathbf{M}}^{-1} \mathbf{B}_{\mathbf{M}}, \mathbf{A}_{\mathbf{M}}^{-1})
\end{equation}

We begin by modeling the latent estimation as:
\begin{equation}
\label{eq:mu_model}
\bm{\mu} = \mathbbm{1} \cdot \mathbf{z} + \bm{\epsilon}, \quad \bm{\epsilon} \sim \mathcal{N}(\mathbf{0}, \bm{\Sigma}),
\end{equation}
where $\bm{\mu} \in \mathbb{R}^V$ is the vector of view-specific latent predictions, $\mathbf{z} \in \mathbb{R}$ is the shared latent variable for this one-dimensional derivation, and $\bm{\Sigma} \in \mathbb{R}^{V \times V}$ is the correlated noise covariance. Here, $\mathbbm{1}$ is a binary design matrix to enable dimensional alignment. The expectation and covariance of $\bm{\mu}$ are thus:
\begin{align}
\mathbb{E}[\bm{\mu}] &= \mathbb{E}[\mathbbm{1} \cdot \mathbf{z} + \bm{\epsilon}] = \mathbbm{1} \cdot \mathbf{z}, \nonumber \\
\mathrm{Cov}[\bm{\mu}] &= \mathrm{Cov}[\bm{\epsilon}] = \bm{\Sigma} \label{eq:mu_moments}
\end{align}

That is, 
\begin{equation}
\label{eq:mu_dist}
\bm{\mu} \sim \mathcal{N}(\mathbbm{1} \cdot \mathbf{z}, \bm{\Sigma})
\end{equation}

When only a subset of views are observed, represented by mask $\mathbf{M}$, we define masked observations as:
\begin{align}
\bm{\mu}_{\mathbf{M}} &= \bm{\mu} \odot \mathbf{M}, \nonumber \\
\bm{\mu}_{\mathbf{M}} &\sim \mathcal{N}(\mathbbm{1} \cdot \mathbf{z} \odot \mathbf{M}, \bm{\Sigma}_{\mathbf{M}}), \label{eq:masked_mu}
\end{align}
where the masked covariance is given by:
\begin{equation}
\label{eq:masked_cov}
\bm{\Sigma}_{\mathbf{M}} = \bm{\Sigma} \odot (\mathbf{M} \mathbf{M}^{\top})
\end{equation}

Assuming an improper flat prior distribution on $\mathbf{z}$, the variational posterior is proportional to the likelihood function. Therefore,

\begin{align}
q(\mathbf{z}|\boldsymbol{\mu}, \mathbf{M}) &\propto \exp\left[-\frac{1}{2}(\boldsymbol{\mu} \odot \mathbf{M} - \mathbbm{1}\mathbf{z})^\top(\boldsymbol{\Sigma}^{-1} \odot \mathbf{MM}^\top)(\boldsymbol{\mu} \odot \mathbf{M} - \mathbbm{1}\mathbf{z})\right] \nonumber \\
&= \exp\left[-\frac{1}{2}[(\boldsymbol{\mu} \odot \mathbf{M})^\top(\boldsymbol{\Sigma}^{-1} \odot \mathbf{MM}^\top)(\boldsymbol{\mu} \odot \mathbf{M}) - 2(\boldsymbol{\mu} \odot \mathbf{M})^\top(\boldsymbol{\Sigma}^{-1} \odot \mathbf{MM}^\top)\mathbbm{1}\mathbf{z} \right. \nonumber \\
&\quad\left. + \mathbf{z}^\top\mathbbm{1}^\top(\boldsymbol{\Sigma}^{-1} \odot \mathbf{MM}^\top)\mathbbm{1}\mathbf{z}]\right] \nonumber \\
&= \exp\left[-\frac{1}{2}[\mathbf{z}^\top\mathbf{A}_{\mathbf{M}}\mathbf{z} - 2\mathbf{z}^\top\mathbf{B}_{\mathbf{M}} + \text{C}]\right] \nonumber \\
&= \exp\left[-\frac{1}{2}[(\mathbf{z} - \mathbf{A}_{\mathbf{M}}^{-1}\mathbf{B}_{\mathbf{M}})^\top\mathbf{A}_{\mathbf{M}}(\mathbf{z} - \mathbf{A}_{\mathbf{M}}^{-1}\mathbf{B}_{\mathbf{M}}) - \mathbf{B}_{\mathbf{M}}^\top\mathbf{A}_{\mathbf{M}}^{-1}\mathbf{B}_{\mathbf{M}} + \text{C}]\right] \nonumber \\
&\propto \exp\left[-\frac{1}{2}(\mathbf{z} - \mathbf{A}_{\mathbf{M}}^{-1}\mathbf{B}_{\mathbf{M}})^\top\mathbf{A}_{\mathbf{M}}(\mathbf{z} - \mathbf{A}_{\mathbf{M}}^{-1}\mathbf{B}_{\mathbf{M}})\right],
\label{eq:posterior_derivation}
\end{align}

where

\begin{align}
\mathbf{A}_{\mathbf{M}} &= \mathbbm{1}^\top(\boldsymbol{\Sigma}^{-1} \odot \mathbf{MM}^\top)\mathbbm{1} \nonumber \\
\mathbf{B}_{\mathbf{M}} &= \mathbbm{1}^\top(\boldsymbol{\Sigma}^{-1} \odot \mathbf{MM}^\top)(\boldsymbol{\mu} \odot \mathbf{M})
\end{align}

The gradient and Hessian of $\log q(\mathbf{z}|\boldsymbol{\mu}, \mathbf{M})$ with respect to $\mathbf{z}$ are:
\begin{align}
\nabla_{\mathbf{z}} \log q(\mathbf{z}|\boldsymbol{\mu}, \mathbf{M}) &= -\mathbf{A}_{\mathbf{M}}\mathbf{z} + \mathbf{B}_{\mathbf{M}} \nonumber \\
\nabla^2_{\mathbf{z}} \log q(\mathbf{z}|\boldsymbol{\mu}, \mathbf{M}) &= -\mathbf{A}_{\mathbf{M}} \label{eq:posterior_derivatives}
\end{align}

Therefore, we obtain the final form of the aggregated variational posterior distribution:
\begin{equation}
\label{eq:posterior_final}
q(\mathbf{z} \mid \{\mathbf{x}_{v}\}_{v \in \mathcal{V}}) =q(\mathbf{z}|\boldsymbol{\mu}, \mathbf{M}) \sim \mathcal{N}(\mathbf{A}_{\mathbf{M}}^{-1}\mathbf{B}_{\mathbf{M}}, \mathbf{A}_{\mathbf{M}}^{-1})\tag*{\qed}
\end{equation}

\subsection{Complete Derivation of ELBO}
\label{appendix:ELBO}
In this section, we provide a detailed derivation of the evidence lower bound (ELBO), \emph{i.e.}:
\begin{multline}
\mathcal{L}_{\text{ELBO}}(\{\mathbf{x}_{v}\}_{v \in \mathcal{V}}) = 
\mathbb{E}_{q_{\phi}(\mathbf{z} \mid \{\mathbf{x}_{v}\}_{v \in \mathcal{V}})} 
\left[ 
\sum_{v \in \mathcal{V}} \log p(\mathbf{x}_{v} \mid \mathbf{z}) 
\right] \\
- \mathbb{E}_{q_{\phi}(\mathbf{c} \mid \{\mathbf{x}_{v}\}_{v \in \mathcal{V}})} 
\left[
 D_{\text{KL}}\left(q_{\phi}(\mathbf{z} \mid \{\mathbf{x}_{v}\}_{v \in \mathcal{V}}) \,\|\, p(\mathbf{z} \mid \mathbf{c})\right)
\right] - D_{\text{KL}}\left(q_{\phi}(\mathbf{c} \mid \{\mathbf{x}_{v}\}_{v \in \mathcal{V}}) \,\|\, p(\mathbf{c})\right)
\end{multline}

We begin with the Kullback-Leibler (KL) divergence between the variational joint distribution $q_\phi(\mathbf{z}, \mathbf{c}|\{\mathbf{x}_{v}\}_{v\in\mathcal{V}})$ and the true joint posterior $p(\mathbf{z}, \mathbf{c}|\{\mathbf{x}_{v}\}_{v\in\mathcal{V}})$:

\begin{align}
& D_{\text{KL}}(q_\phi(\mathbf{z}, \mathbf{c}|\{\mathbf{x}_{v}\}_{v\in\mathcal{V}}) \| p(\mathbf{z}, \mathbf{c}|\{\mathbf{x}_{v}\}_{v\in\mathcal{V}})) \nonumber \\
&= \mathbb{E}_{q_\phi(\mathbf{z}, \mathbf{c}|\{\mathbf{x}_{v}\}_{v\in\mathcal{V}})} \left[ \log \frac{q_\phi(\mathbf{z}, \mathbf{c}|\{\mathbf{x}_{v}\}_{v\in\mathcal{V}})}{p(\mathbf{z}, \mathbf{c}|\{\mathbf{x}_{v}\}_{v\in\mathcal{V}})} \right] \nonumber \\
&= \mathbb{E}_{q_\phi(\mathbf{z}, \mathbf{c}|\{\mathbf{x}_{v}\}_{v\in\mathcal{V}})} \left[ \log \frac{q_\phi(\mathbf{z}, \mathbf{c}|\{\mathbf{x}_{v}\}_{v\in\mathcal{V}}) \cdot p(\{\mathbf{x}_{v}\}_{v\in\mathcal{V}})}{p(\mathbf{z}, \mathbf{c}, \{\mathbf{x}_{v}\}_{v\in\mathcal{V}})} \right] \nonumber \\
&= \mathbb{E}_{q_\phi(\mathbf{z}, \mathbf{c}|\{\mathbf{x}_{v}\}_{v\in\mathcal{V}})} \left[ \log \frac{q_\phi(\mathbf{z}, \mathbf{c}|\{\mathbf{x}_{v}\}_{v\in\mathcal{V}})}{p(\mathbf{z}, \mathbf{c}, \{\mathbf{x}_{v}\}_{v\in\mathcal{V}})} \right] + \log p(\{\mathbf{x}_{v}\}_{v\in\mathcal{V}}) \label{eq:kl_expansion}
\end{align}

Since KL divergence is always non-negative, we have:
\begin{align}
\log p(\{\mathbf{x}_{v}\}_{v\in\mathcal{V}}) &\geq \mathbb{E}_{q_\phi(\mathbf{z}, \mathbf{c}|\{\mathbf{x}_{v}\}_{v\in\mathcal{V}})} \left[ \log \frac{p(\mathbf{z}, \mathbf{c}, \{\mathbf{x}_{v}\}_{v\in\mathcal{V}})}{q_\phi(\mathbf{z}, \mathbf{c}|\{\mathbf{x}_{v}\}_{v\in\mathcal{V}})} \right] \nonumber \\
&= \mathcal{L}_{\text{ELBO}}(\{\mathbf{x}_{v}\}_{v \in \mathcal{V}}) \label{eq:elbo_bound}
\end{align}

Given the factorization $q_\phi(\mathbf{z}, \mathbf{c}|\{\mathbf{x}_{v}\}_{v\in\mathcal{V}}) = q_\phi(\mathbf{z}|\{\mathbf{x}_{v}\}_{v\in\mathcal{V}})q_\phi(\mathbf{c}|\{\mathbf{x}_{v}\}_{v\in\mathcal{V}})$ and the joint distribution:
\begin{equation}
p(\mathbf{z}, \mathbf{c}, \{\mathbf{x}_{v}\}_{v\in\mathcal{V}}) = p(\mathbf{c})p(\mathbf{z}|\mathbf{c})\prod_{v\in\mathcal{V}}p(\mathbf{x}_{v}|\mathbf{z}),
\label{eq:joint_factorization}
\end{equation}

we can then derive the Evidence Lower Bound (ELBO):
\begin{align}
& \mathcal{L}_{\text{ELBO}}(\{\mathbf{x}_{v}\}_{v\in\mathcal{V}}) \nonumber \\
&= \mathbb{E}_{q_\phi(\mathbf{z}, \mathbf{c}|\{\mathbf{x}_{v}\}_{v\in\mathcal{V}})} \left[ \log \frac{p(\mathbf{z}, \mathbf{c}, \{\mathbf{x}_{v}\}_{v\in\mathcal{V}})}{q_\phi(\mathbf{z}, \mathbf{c}|\{\mathbf{x}_{v}\}_{v\in\mathcal{V}})} \right] \nonumber \\
&= \mathbb{E}_{q_\phi(\mathbf{z}, \mathbf{c}|\{\mathbf{x}_{v}\}_{v\in\mathcal{V}})} \left[ \log \frac{p(\mathbf{c})p(\mathbf{z}|\mathbf{c})\prod_{v\in\mathcal{V}}p(\mathbf{x}_{v}|\mathbf{z})}{q_\phi(\mathbf{z}|\{\mathbf{x}_{v}\}_{v\in\mathcal{V}})q_\phi(\mathbf{c}|\{\mathbf{x}_{v}\}_{v\in\mathcal{V}})} \right] \nonumber \\
&= \mathbb{E}_{q_\phi(\mathbf{z},\mathbf{c}|\{\mathbf{x}_{v}\}_{v\in\mathcal{V}})} \Bigg[ \sum_{v\in\mathcal{V}} \log p(\mathbf{x}_{v}|\mathbf{z}) + \log p(\mathbf{c}) \nonumber \\
&\qquad + \log p(\mathbf{z}|\mathbf{c}) - \log q_\phi(\mathbf{z}|\{\mathbf{x}_{v}\}_{v\in\mathcal{V}}) - \log q_\phi(\mathbf{c}|\{\mathbf{x}_{v}\}_{v\in\mathcal{V}}) \Bigg] \nonumber \\
&= \mathbb{E}_{q_\phi(\mathbf{z}|\{\mathbf{x}_{v}\}_{v\in\mathcal{V}})} \left[ \sum_{v\in\mathcal{V}} \log p(\mathbf{x}_{v}|\mathbf{z}) \right] \nonumber \\
&\quad - \mathbb{E}_{q_\phi(\mathbf{c}|\{\mathbf{x}_{v}\}_{v\in\mathcal{V}})} \left[ D_{\text{KL}}(q_\phi(\mathbf{z}|\{\mathbf{x}_{v}\}_{v\in\mathcal{V}}) \| p(\mathbf{z}|\mathbf{c})) \right] - D_{\text{KL}}(q_\phi(\mathbf{c}|\{\mathbf{x}_{v}\}_{v\in\mathcal{V}}) \| p(\mathbf{c})) \tag*{\qed}
\label{eq:elbo_final}
\end{align}

\subsection{Note on Efficient and Stable Inversion}\label{appendix:inv}

Our formulation expresses ${\mathbf{\Sigma}^d}=\mathbf{D}\mathbf{R}\mathbf{D}$, with
$\mathbf{R}_{ij} = \frac{(\mathbf{L}\mathbf{L}^\top)_{ij}}{\sqrt{(\mathbf{L}\mathbf{L}^\top)_{ii}
(\mathbf{L}\mathbf{L}^\top)_{jj}}}$,
where $\mathbf{L}$ is a learned lower-triangular matrix with strictly positive diagonal entries.
This parameterization guarantees $\mathbf{R} \succ 0$.
Directly forming or inverting $\mathbf{R}^{-1}$ or ${\mathbf{\Sigma}^d}^{-1}$ could in certain
cases be numerically unstable. Although we have not observed this in our experiments,
this can be avoided altogether as we do not have to explicitly compute the inverse and can
instead exploit the structure of the Cholesky decomposition using forward and backward
substitution, as outlined in the following:

Given our learned $\mathbf{L}$, the correlation matrix and its inverse are
$\mathbf{R} = \mathbf{D}_R^{-1}(\mathbf{L}\mathbf{L}^\top)\mathbf{D}_R^{-1}$
and
$\mathbf{R}^{-1} = \mathbf{D}_R(\mathbf{L}\mathbf{L}^\top)^{-1}\mathbf{D}_R$,
where
$\mathbf{D}_R = \operatorname{diag}\!\bigl(\sqrt{(\mathbf{L}\mathbf{L}^\top)_{11}},\ldots,\sqrt{(\mathbf{L}\mathbf{L}^
\top)_{VV}}\bigr)$.
Consequently,
${\mathbf{\Sigma}^d}^{-1} = \mathbf{D}^{-1}\mathbf{R}^{-1}\mathbf{D}^{-1}
= (\mathbf{D}^{-1}\mathbf{D}_R)\,\mathbf{L}^{-\top}\mathbf{L}^{-1}\,(\mathbf{D}_R\mathbf{D}^{-1})$,
in which $\mathbf{L}^{-\top}$ and $\mathbf{L}^{-1}$ can be handled via forward and backward
substitution instead of explicit matrix inversion, being numerically more stable as well
as requiring $\mathcal{O}(V^2)$ rather than $\mathcal{O}(V^3)$.

\subsection{Algorithm Description}
\label{appendix:pseudo-code}
The pseudo-code of optimizing our ACOVA is summarized in Algorithm~\ref{alg:algorithm}.

\begin{algorithm}[!b]

\caption{Optimizing process of \textbf{ACOVA}}
\label{alg:algorithm}
    \begin{algorithmic}[1]
        \REQUIRE Incomplete multi-view data $\big\{\mathcal{X}\big\}$, missing-view indicator matrix $\mathcal{M}$, number of clusters $C$, trade-off parameters ${\alpha}$, and total training epoch $T$.
        \ENSURE The clustering result $\bar{\mathbf{Y}}$.
            \STATE Initialize model parameters $\{\bm{\phi}_v,\bm{\theta}_v\}$, cluster distribution parameters $\{\bm{\tau}_c,\bm{\mu}_c,\bm{\Sigma}^2_c\}$, correlation matrix $\mathbf{R}$.
                \WHILE{not reaching $T$}  
                    \STATE Compute the distribution parameters $\{\bm{\mu}_v,\bm{\sigma}_v^2 \}_{v \in \mathcal{V}}$ through view-specific encoders.
                    \STATE Compute the inter-view error covariance matrix $\mathbf{\Sigma}^d$ by Eq.~\ref{eq:DRD}.
                    \STATE Compute the mean and variance $\{\bm{\tilde{\mu}},\bm{\tilde{\sigma}^2}\}$ and obtain the aggregated posterior distribution by Eq.~\ref{eq:poster_agg}.
                    \STATE Compute the cluster assignment posterior $q_{\phi}(\mathbf{c}|\{\mathbf{x}_{v}\}_{v \in \mathcal{V}})$  by Eq.~\ref{eq:qcx}.
                    \STATE Compute \big\{$p_{\bm{\theta}_v}(\mathbf{x}_{v}\mid \mathbf{z},\mathbf{c} )\big\}_{v \in \mathcal{V}}$ using the view-specific decoders with reparameterization.
                   \STATE Update $\{\bm{\phi}_v,\bm{\theta}_v\}$,$\{\bm{\tau}_c,\bm{\mu}_c,\bm{\Sigma}^2_c\}$ and $\mathbf{R}$ by optimizing Eq.~\ref{eq:total_loss}.
        \ENDWHILE    
        \STATE Calculate the clustering assignment of each sample by Eq.~\ref{eq:qcx} and predict the cluster by the max probability.

    \end{algorithmic}
\end{algorithm}

\subsection{Complexity Analysis}
\label{appendix:complexity}
In our framework, the primary computational operations include feature encoding, cross-view aggregation, and posterior inference. Let $N$ be the number of samples, $V$ the number of views. For simplicity, we assume that all views are mapped to a common latent dimension $D$ after encoding, regardless of their original dimensionality $d_v$. The view-specific feature encoding process thus takes $\mathcal{O}(NVD)$. The cross-view correlation modeling, which is the core of our method, involves two steps: (1) constructing $V \times V$ covariance matrices for each sample and latent dimension with complexity $\mathcal{O}(NDV^2)$, and (2) performing matrix inversion on these covariance matrices, which dominates the computational cost with $\mathcal{O}(NDV^3)$ when computing the inverse explicitly, but can be reduced to $\mathcal{O}(NDV^2)$ by exploiting the Cholesky factorization and performing forward and backward substitutions instead of explicitly computing matrix inverses (see Appendix A.6). Besides, other reconstruction and KL divergence terms in posterior inference cost $\mathcal{O}(NVD)$. Therefore, the overall time complexity for our naive implementation is $\mathcal{O}(NDV^3)$ but can be optimized to $\mathcal{O}(NDV^2)$. Since $V$ is usually small (\emph{e.g.}, $V < 10$ and $V \ll N$), our method remains highly efficient in practical incomplete multi-view clustering scenarios, scaling linearly with the number of samples $N$. 
See Appendix~\ref{appendix:time_cost} for an empirical comparison of run-times.

\subsection{Discussion on Information Imbalance in Aggregation}
In line with standard VAE training under partial observations, we learn a shared latent z for each sample using only its observed views. Missing modalities are not fed to encoders and are not treated as observed evidence in the ELBO. More specifically, the approximate posterior $q_\phi(\mathbf{z} \mid \lbrace\mathbf{x}_{v}\rbrace_{v \in \mathcal{V}})$ is obtained through an aggregation mechanism combining the available view-specific posteriors $q_{\phi_{v}}(\mathbf{z} \mid \mathbf{x}_{v})$. This paradigm is shared by IMVC methods such as DVIMC, which in this way learn a unified representation despite missing views.

Here we provide a concrete analysis of how the missing-view mask $\mathbf{M}$ impacts our proposed aggregation through $q(\mathbf{z}\mid\{\mathbf{x}_{v}\}_{v\in\mathcal{V}})
\sim \mathcal{N}(\bm{\tilde{\mu}},\bm{\tilde{\sigma}}^{2})
= \mathcal{N}(\mathbf{A}_{\mathbf{M}}^{-1}\mathbf{B}_{\mathbf{M}},\mathbf{A}_{\mathbf{M}}^{-1})$ where $\mathbf{A}_{\mathbf{M}}=\mathbbm{1}^{\top}\!\left(\mathbf{\Sigma}^{-1}\odot\mathbf{M}\mathbf{M}^{\top}\right)\mathbbm{1}$ and $\mathbf{B}_{\mathbf{M}}=\mathbbm{1}^{\top}\!\left(\mathbf{\Sigma}^{-1}\odot\mathbf{M}\mathbf{M}^{\top}\right)(\bm{\mu}\odot\mathbf{M})$. For simplicity, here we consider a single latent dimension, where this reduces to the quadratic form $\mathbf{A}_{\mathbf{M}}=\mathbf{M}^{\top}\boldsymbol{\Sigma}^{-1}\mathbf{M}$. Fewer observed views (more zeros in $\mathbf{M}$) remove corresponding rows/columns from this quadratic form, so $\mathbf{A}_{\mathbf{M}}$ cannot increase when views are missing. This means that if there are fewer views, the missing-view mask ensures that $\mathbf{A}_{\mathbf{M}}$ will be smaller, leading to larger posterior variance, explicitly preventing overconfidence when evidence is scarce. This effectively addresses the problem that samples with fewer views can lead to overconfidence and ensures that the information imbalance is taken into account (more views, more information, more confident).

Another factor of information imbalance is related to the quality of the views (how redundant/noisy). To address this, each encoder produces $\sigma_v^2$, such that a view with higher uncertainty contributes less to $\mathbf{A}_{\mathbf{M}}$ and $\mathbf{B}_{\mathbf{M}}$ as $\mathbf{\Sigma} = \mathbf{D}\mathbf{R}\mathbf{D}$, with $\mathbf{D}=\mathrm{diag}(\sigma_1,\ldots,\sigma_V)$. This reduces the influence of weak/noisy views and mitigates imbalance caused by heterogeneous view quality. Finally, the learned cross-view correlations address one key problem of PoE, where PoE sums precisions and becomes overconfident. We instead avoid this problem, where similar or duplicate views outvote single distinct views by accounting for dependency, yielding calibrated results. 

Here we consider a simple example in which we have two views $x_1$ and $x_2$, and the true joint posterior variable is $z =8$. The two view-specific encoders estimate $\mu_1=4$ and $\mu_2=8$, with uncertainties $\sigma^2_1=3$ and $\sigma^2_2=1$, and correlation $\rho_{12} = 0.55$. Following~\cite{winkler}, the expectation of the joint posterior distribution (the estimate of $z$) can be written as the weighted average of the view-specific estimates $\tilde{\mu}=\omega_1\mu_1+\omega_2\mu_2$, where the weights are $\omega_1 = (\sigma_2^2-\rho_{12}\sigma_1\sigma_2)/(\sigma_1^2+\sigma_2^2-2\rho_{12}\sigma_1\sigma_2)$ and $\omega_2 = (\sigma_1^2-\rho_{12}\sigma_1\sigma_2)/(\sigma_1^2+\sigma_2^2-2\rho_{12}\sigma_1\sigma_2)$. Therefore, the estimate of the joint posterior distribution according to our proposed aggregation method is $\tilde{\mu}=0.02\times4+0.98\times8$, leaning towards the relatively more accurate and less uncertain estimate. In contrast, neglecting the correlation between view-specific encoders underestimates the true joint posterior variable as $\tilde{\mu}=0.25\times4+0.75\times8$.

\begin{table}[t]
  \centering
  \caption{Detailed statistics for the six multi-view clustering datasets used in our experiments.}
  \label{tab:datasets}
    \resizebox{\linewidth}{!}{
  \begin{tabular}{lccccc}
    \toprule
    \textbf{Dataset} & \textbf{\#Views} & \textbf{\#View Dimensions}  & \textbf{\#Instances} & \textbf{\#Categories} \\
    \midrule
    Handwritten~\citep{HandWritten}      & 6  &  76, 216, 64, 47, 240, 6 & 2,000   & 10 \\
    Caltech5V~\citep{Caltech5v}        & 5  &  40, 254, 1984, 512, 928 & 1,400   & 7    \\
    Scene15~\citep{Scene15}           & 3  &  20, 59, 40              & 4,485   & 15 \\
    Fashion~\citep{Fashion}  & 3 & 784, 784, 784 & 10,000 & 10 \\
    NoisyMNIST~\citep{NoisyMNIST}  & 2 & 784, 784  & 70,000 & 10 \\
    CUB Image-Captions~\citep{CUB600}  & 2 & 1024, 300  & 600 & 10 \\
    \bottomrule
  \end{tabular}
  }
\end{table}

\begin{table}[t]
\centering
\caption{Statistics and brief summary of eight comparison methods.}
\renewcommand{\arraystretch}{1.3}
  \resizebox{\linewidth}{!}{
\begin{tabular}{lcc>{\raggedright\arraybackslash}p{10.cm}}
\toprule
\textbf{Method} & \textbf{Venue} & \textbf{\#View} & \textbf{Description} \\
\midrule
BSV~\citep{BSVCONCAT}       & IJCAI &  Multiple   & Imputes missing views using the view-wise average values and performs clustering using the best-performing single view.  \\
CONCAT~\citep{BSVCONCAT}   & IJCAI &  Multiple   & Similar to BSV that imputes missing views using the view-wise average values, but concatenates features across all views after imputation and then performs \textit{K}-Means clustering. \\
DSIMVC~\citep{DSIMVC}    & ICML  &  Multiple &   A bi-level optimization-based safe incomplete multi-view clustering framework that dynamically imputes missing views from semantic neighbors and automatically selects reliable imputed samples.\\
DCP~\citep{DCP}     & PAMI &  Two        & An information-theoretical framework that unifies cross-view consistency learning and missing-view recovery via dual contrastive learning and dual prediction.  \\
CPSPAN~\citep{CSPAN}  & CVPR  &  Multiple   & Jointly performs cross-view partial sample alignment and prototype-level alignment to address the challenges of prototype shift under missing-view settings.  \\
DVIMC~\citep{DVIMVC}   & AAAI  &  Multiple   & Employ PoE-based VAEs to construct a shared latent space and mitigate information imbalance via a coherence loss.  \\
MVP~\citep{MVP}    & ICLR  &  Multiple   & A variational framework that explicitly infers missing views in latent space and enhances consistency via cyclic permutation-based regularization.  \\
PMIMC~\citep{PMIMC}    & TIP   &  Multiple   & An IMVC method that solves the prototype-unaligned problem and performance instability problem by prototype contrastive learning loss and prototype-based imputation strategies.  \\
\bottomrule
\end{tabular}
}
\label{tab:imvc_summary}
\vspace{-1em}
\end{table}

\begin{table}[t]
\centering
\caption{Hyperparameter settings for all datasets. ``ExpLR'' is short for ExponentialLR.} 
\renewcommand{\arraystretch}{1}
  \resizebox{\linewidth}{!}{
\begin{tabular}{lcccccc}
\toprule
\textbf{Hyperparameters} & \textbf{Handwritten} & \textbf{Caltech5V} & \textbf{Scene15} & \textbf{Fashion} & \textbf{NoisyMNIST} & \textbf{CUB} \\
\midrule
Pretraining Epochs & 200 & 200 & 200 & 200 & 200 & 200   \\  
Training Epochs & 300 & 300 & 300 & 300 & 300  & 300 \\ 
Batch Size & 256 & 256 & 256 & 256 & 512 & 256 \\  
Optimizer & Adam & Adam & Adam & Adam & Adam & Adam \\  
Scheduler & ExpLR & ExpLR & ExpLR & ExpLR & ExpLR & ExpLR\\
Network Learning Rate            & $3e^{-4}$  & $1e^{-3}$  & $1e^{-3}$ & $1e^{-3}$ & $1e^{-3}$ & $5e^{-4}$\\
Prior Distribution Learning Rate & $1e^{-2}$  & $1e^{-1}$  & $5e^{-2}$ & $5e^{-2}$ & $5e^{-2}$ & $5e^{-2}$\\ 
Correlation Learning Rate        & $1e^{-2}$  & $1e^{-2}$  & $1e^{-2}$ & $1e^{-2}$ & $1e^{-2}$ & $1e^{-2}$ \\
Balance parameter $\alpha$ & 15 & 5 &  20 & 20 & 10 & 60\\
Latent feature dimension $D$ & 10  & 15 & 10 & 10 & 10 & 10\\
\bottomrule
\end{tabular}
}
\vspace{-1em}
\label{tab:hyperparameters}
\end{table}

\section{Experiment Settings}
\label{appendix:B}
\subsection{Statistics for Datasets}

In this section, we provide details of the six real-world datasets used in the experiment in Table~\ref{tab:datasets}.

\subsection{Incomplete Multi-view Data Generation}

Following prior IMVC works \citep{DVIMVC, DSIMVC}, we generate incomplete multi-view data by randomly masking a proportion $p\%$ of the samples. 
Formally, given a dataset with $N$ samples and $V$ views, we define a binary mask matrix $\mathcal{M} \in \{0,1\}^{N \times V}$, where $\mathcal{M}_{i,v}=1$ if the $i$-th sample is observed in view $v$, and $\mathcal{M}_{i,v}=0$ otherwise. 
The generation process is as follows: (1) randomly select $p\%$ of the samples as incomplete; (2) for each selected sample, randomly assign a non-empty proper subset of views to be retained, ensuring that at least one view is observed and at least one is missing; (3) all remaining $(1-p)\%$ of the samples are kept fully observed with $\mathcal{M}_{i,v}=1$ for all $v \in \{1,\dots,V\}$. 
For example, in a three-view setting with views $\{A,B,C\}$, incomplete samples may have only one view (e.g., $A$, $B$, or $C$) or two views (e.g., $\{A,B\}$, $\{B,C\}$, or $\{A,C\}$).

\subsection{Comparison Methods}

In this section, we provide details on the eight comparison approaches that are considered in the experiments (see Table~\ref{tab:imvc_summary}).

\begin{figure}[t]
    \centering
    \subfloat[Handwritten]{%
\includegraphics[width=0.45\linewidth,height=0.3\linewidth]{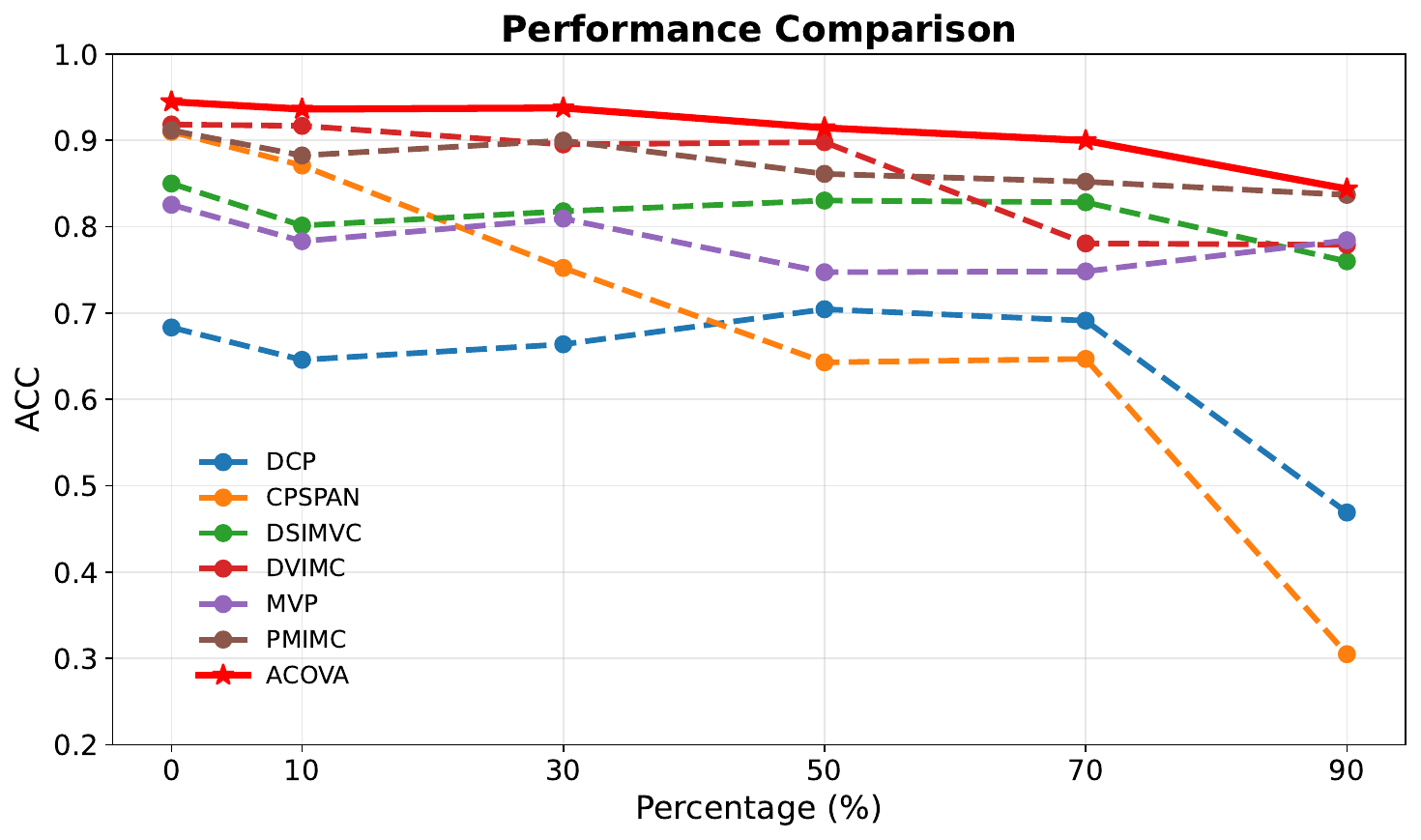}
        \label{fig:extreme_handwritten}
    }
    \subfloat[Scene15]{%
\includegraphics[width=0.45\linewidth,height=0.3\linewidth]{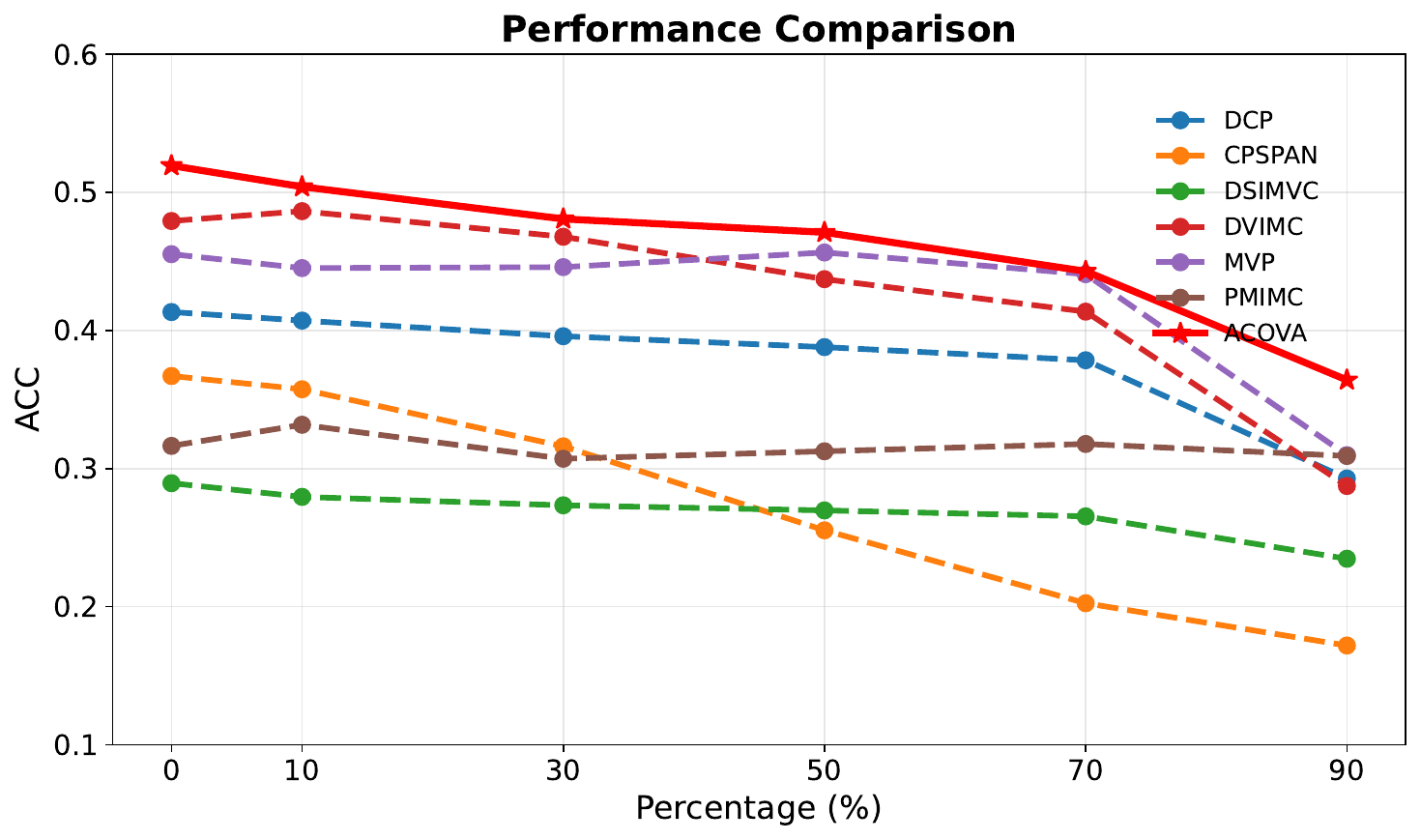}
        \label{fig:extreme_scene15}
    }
    \caption{Performance curves vs. missing rates. The proposed approach consistently outperforms prior approaches even in the extreme scenarios.}
    \label{fig:extreme}
    \vspace{-0.4cm}
\end{figure}

\subsection{Implementation Details}

All experiments are conducted on the PyTorch platform, running on a Linux server equipped with an Intel(R) Xeon(R) Gold 5218R CPU @ 2.10GHz and an NVIDIA GeForce RTX 3090 GPU. For fair comparison, we run five experimental trials for each method and report the average results. All compared methods are implemented with their recommended hyperparameter settings or through a parameter search for better performance. For network architecture, each view-specific encoder consists of three fully-connected layers with dimensions [500, 500, 2,000], followed by ReLU activation functions,  while the dimensions of the decoder are reversed. 
The hyperparameters for the six datasets are listed in Table~\ref{tab:hyperparameters}.

\section{Additional Experimental Results}\label{appendix:C}
\subsection{Complete Results on the Six Datasets}

In Table~\ref{tab:sotaSTD}, we report the complete results, including the mean and standard deviation over five runs for all methods across the six datasets.
In Table~\ref{ablation_std1} and Table~\ref{ablation_std2}, we also report the complete results of the ablation study on the Handwritten and Scene15 datasets, respectively.

\begin{sidewaystable}[htbp]
  \centering
  \caption{Comparison of clustering performance on the benchmark datasets under different missing-view rates (MR) of 10\%, 30\%, 50\%, 70\%. The best results are highlighted in \textbf{bold}, while the second-best are \underline{underlined}. Each experiment is conducted five times, with the mean and standard deviation reported.}
  \renewcommand{\arraystretch}{1}
  \resizebox{\linewidth}{!}{
\begin{tabular}{c|cccc|cccc|cccc|cccc}
\toprule
\textbf{MR} & \multicolumn{4}{c|}{\textbf{10\%}} & \multicolumn{4}{c|}{\textbf{30\%}} & \multicolumn{4}{c|}{\textbf{50\%}} & \multicolumn{4}{c}{\textbf{70\%}} \\
\midrule
\textbf{Metrics} & \textbf{ACC}$\uparrow$ & \textbf{NMI}$\uparrow$ & \textbf{ARI}$\uparrow$ & \textbf{PUR}$\uparrow$ & \textbf{ACC}$\uparrow$ & \textbf{NMI}$\uparrow$ & \textbf{ARI}$\uparrow$ & \textbf{PUR}$\uparrow$ & \textbf{ACC}$\uparrow$ & \textbf{NMI}$\uparrow$ & \textbf{ARI}$\uparrow$ & \textbf{PUR}$\uparrow$ & \textbf{ACC}$\uparrow$ & \textbf{NMI}$\uparrow$ & \textbf{ARI}$\uparrow$ & \textbf{PUR}$\uparrow$ \\
\midrule
\multicolumn{17}{c}{\textbf{Scene15}} \\
\midrule
BSV    & 0.3595±0.0061 & 0.3577±0.0054 & 0.1784±0.0041 & 0.4002±0.0092 & 0.3305±0.0086 & 0.3206±0.0071 & 0.1253±0.0051 & 0.3630±0.0097 & 0.2944±0.0100 & 0.2905±0.0063 & 0.0796±0.0046 & 0.3321±0.0132 & 0.2593±0.0085 & 0.2568±0.0030 & 0.0472±0.0015 & 0.2903±0.0029 \\
Concat & 0.3703±0.0038 & 0.3820±0.0022 & 0.2037±0.0024 & 0.4263±0.0031 & 0.3360±0.0067 & 0.3318±0.0035 & 0.1540±0.0036 & 0.3825±0.0059 & 0.2943±0.0139 & 0.2995±0.0083 & 0.1201±0.0057 & 0.3436±0.0089 & 0.2596±0.0137 & 0.2665±0.0020 & 0.0982±0.0042 & 0.3053±0.0084 \\
DCP    & 0.4071±0.0109 & 0.4414±0.0098 & 0.2488±0.0110 & 0.4342±0.0158 & 0.3958±0.0072 & 0.4264±0.0052 & 0.2350±0.0062 & 0.4265±0.0084 & 0.3879±0.0216 & 0.4095±0.0101 & 0.2359±0.0063 & 0.4166±0.0161 & 0.3784±0.0084 & 0.3849±0.0125 & 0.2113±0.0113 & 0.4026±0.0100 \\
CPSPAN & 0.3574±0.0201 & 0.3457±0.0148 & 0.1911±0.0142 & 0.4103±0.0188 & 0.3161±0.0137 & 0.2783±0.0118 & 0.1541±0.0076 & 0.3439±0.0173 & 0.2553±0.0071 & 0.2070±0.0076 & 0.1063±0.0049 & 0.2786±0.0058 & 0.2025±0.0117 & 0.1423±0.0138 & 0.0621±0.0101 & 0.2198±0.0126 \\
DSIMVC & 0.2795±0.0142 & 0.2976±0.0107 & 0.1430±0.0063 & 0.3218±0.0110 & 0.2734±0.0101 & 0.2896±0.0087 & 0.1383±0.0052 & 0.3131±0.0067 & 0.2697±0.0059 & 0.2828±0.0069 & 0.1335±0.0043 & 0.3133±0.0074 & 0.2654±0.0090 & 0.2741±0.0082 & 0.1290±0.0067 & 0.3062±0.0079 \\
DVIMC  & \underline{0.4863±0.0103} & \underline{0.4780±0.0059} & \underline{0.3192±0.0085} & \underline{0.5021±0.0109} & \underline{0.4678±0.0054} & \underline{0.4440±0.0033} & \underline{0.3097±0.0033} & 0.4823±0.0089 & 0.4371±0.0568 & 0.4163±0.0402 & 0.2737±0.0399 & 0.4577±0.0654 & 0.4136±0.0266 & 0.3950±0.0228 & 0.2519±0.0189 & 0.4286±0.0293 \\
MVP    & 0.4451±0.0188 & 0.4361±0.0112 & 0.2696±0.0126 & 0.4830±0.0161 & 0.4458±0.0064 & 0.4341±0.0061 & 0.2723±0.0076 & \underline{0.4866±0.0099} & \underline{0.4564±0.0068} & \underline{0.4314±0.0074} &\underline{0.2967±0.0095}& \textbf{0.5170±0.0130} & \underline{0.4407±0.0163} & \textbf{0.4285±0.0091} & \underline{0.2725±0.0117} & \textbf{0.4895±0.0171} \\
PMIMC  & 0.3317±0.0032 & 0.3409±0.0053 & 0.1778±0.0032 & 0.3602±0.0073 & 0.3071±0.0118 & 0.3175±0.0028 & 0.1484±0.0036 & 0.3500±0.0086 & 0.3125±0.0060 & 0.3180±0.0014 & 0.1477±0.0026 & 0.3513±0.0061 & 0.3178±0.0071 & 0.3173±0.0041 & 0.1492±0.0036 & 0.3509±0.0064 \\
ACOVA (Ours) & \textbf{0.5040±0.0153} & \textbf{0.4804±0.0050} & \textbf{0.3273±0.0088} & \textbf{0.5270±0.0202} & \textbf{0.4807±0.0136} & \textbf{0.4560±0.0090} & \textbf{0.3106±0.0079} & \textbf{0.5119±0.0164} & \textbf{0.4710±0.0138} & \textbf{0.4360±0.0034} & \textbf{0.2973±0.0084} &\underline{0.4987±0.0126} & \textbf{0.4428±0.0140} & \underline{0.4083±0.0055} & \textbf{0.2733±0.0031} & \underline{0.4636±0.0161} \\
\midrule
\multicolumn{17}{c}{\textbf{Caltech5V}} \\
\midrule
BSV    & 0.5678±0.0051 & 0.4364±0.0045 & 0.3408±0.0046 & 0.5827±0.0050 & 0.5035±0.0086 & 0.3699±0.0049 & 0.2308±0.0032 & 0.5170±0.0064 & 0.4483±0.0062 & 0.3235±0.0046 & 0.1488±0.0027 & 0.4617±0.0053 & 0.3859±0.0043 & 0.2798±0.0055 & 0.0890±0.0032 & 0.4019±0.0062 \\
Concat & 0.4750±0.0337 & 0.3300±0.0136 & 0.2519±0.0232 & 0.4957±0.0186 & 0.4450±0.0128 & 0.2941±0.0076 & 0.1998±0.0077 & 0.4593±0.0091 & 0.4279±0.0140 & 0.2756±0.0133 & 0.1595±0.0081 & 0.4386±0.0132 & 0.3600±0.0241 & 0.2338±0.0201 & 0.0891±0.0110 & 0.3757±0.0172 \\
DCP    & 0.5754±0.0187 & 0.5719±0.0175 & 0.4612±0.0339 & 0.6054±0.0171 & 0.5363±0.0206 & 0.5269±0.0132 & 0.4199±0.0212 & 0.5751±0.0113 & 0.5063±0.0209 & 0.4729±0.0255 & 0.3406±0.0454 & 0.5393±0.0236 & 0.3982±0.0289 & 0.3363±0.0156 & 0.1378±0.0234 & 0.4056±0.0276 \\
CPSPAN & 0.7689±0.0185 & 0.6677±0.0221 & 0.6088±0.0298 & 0.7750±0.0176 & 0.6874±0.0359 & 0.5653±0.0265 & 0.4666±0.0310 & 0.6888±0.0343 & 0.6010±0.0249 & 0.4789±0.0343 & 0.3467±0.0184 & 0.6030±0.0249 & 0.4861±0.0389 & 0.3664±0.0329 & 0.2222±0.0309 & 0.4904±0.0373 \\
DSIMVC & 0.7760±0.0536 & 0.6998±0.0245 & 0.6280±0.0395 & 0.7816±0.0440 & 0.7841±0.0261 & 0.6947±0.0206 & 0.6335±0.0289 & 0.7841±0.0261 & 0.7339±0.0370 & 0.6868±0.0088 & 0.5989±0.0172 & 0.7413±0.0270 & 0.7029±0.0193 & 0.5838±0.0212 & 0.5083±0.0227 & 0.7047±0.0162 \\
DVIMC  & \underline{0.9013±0.0103} & \underline{0.8207±0.0137} & \underline{0.7999±0.0182} & \underline{0.9013±0.0103} & \underline{0.8921±0.0093} & 0.8011±0.0166 & 0.7850±0.0170 & \underline{0.8921±0.0093} & \underline{0.8609±0.0123} & 0.7507±0.0186 & 0.7322±0.0186 & \underline{0.8609±0.0123} & 0.8514±0.0050 & 0.7345±0.0103 & 0.7195±0.0080 & 0.8514±0.0050 \\
MVP    & 0.8083±0.0148 & 0.6969±0.0212 & 0.6594±0.0242 & 0.8083±0.0148 & 0.8070±0.0213 & 0.7139±0.0172 & 0.6723±0.0208 & 0.8146±0.0132 & 0.8103±0.0231 & 0.7063±0.0235 & 0.6700±0.0283 & 0.8157±0.0184 &  0.7677±0.0354 & 0.6483±0.0333 & 0.5934±0.0432 & 0.7693±0.0337 \\
PMIMC  & 0.8963±0.0120 & 0.8205±0.0171 & 0.7966±0.0251 & 0.8963±0.0120 & 0.8917±0.0128 & \underline{0.8124±0.0171} & \underline{0.7858±0.0251} & 0.8917±0.0128 & 0.8601±0.0071 & \underline{0.7591±0.0137} & \underline{0.7381±0.0141} & 0.8601±0.0071 & \underline{0.8570±0.0119} & \underline{0.7522±0.0092} & \underline{0.7304±0.0141} & \underline{0.8570±0.0119} \\
ACOVA (Ours) & \textbf{0.9149±0.0129} & \textbf{0.8507±0.0190} & \textbf{0.8287±0.0237} & \textbf{0.9149±0.0129} & \textbf{0.9131±0.0186} & \textbf{0.8435±0.0233} & \textbf{0.8226±0.0334} & \textbf{0.9131±0.0186} & \textbf{0.8953±0.0151} & \textbf{0.8078±0.0230} & \textbf{0.7918±0.0270} & \textbf{0.8953±0.0151} & \textbf{0.8660±0.0075} & \textbf{0.7531±0.0146} & \textbf{0.7340±0.0146} & \textbf{0.8660±0.0075} \\

\midrule
\multicolumn{17}{c}{\textbf{Handwritten}} \\
\midrule
BSV    & 0.7087±0.0113 & 0.6647±0.0036 & 0.5625±0.0083 & 0.7433±0.0022 & 0.7105±0.0435 & 0.6344±0.0255 & 0.4716±0.0336 & 0.7171±0.0368 & 0.5842±0.0214 & 0.5444±0.0145 & 0.3028±0.0193 & 0.5970±0.0182 & 0.5159±0.0286 & 0.4802±0.0197 & 0.1874±0.0175 & 0.5208±0.0273 \\
Concat & 0.7684±0.0283 & 0.7214±0.0112 & 0.6412±0.0149 & 0.7837±0.0174 & 0.7147±0.0202 & 0.6441±0.0094 & 0.5012±0.0131 & 0.7190±0.0120 & 0.6108±0.0596 & 0.5437±0.0466 & 0.3492±0.0172 & 0.6166±0.0524 & 0.5021±0.0203 & 0.4498±0.0129 & 0.2511±0.0117 & 0.5105±0.0157 \\
DCP    & 0.6459±0.0434 & 0.7094±0.0154 & 0.5663±0.0251 & 0.6737±0.0296 & 0.6639±0.0462 & 0.6951±0.0270 & 0.5189±0.0638 & 0.6877±0.0357 & 0.7045±0.0411 & 0.7018±0.0218 & 0.5651±0.0397 & 0.7334±0.0354 & 0.6913±0.0457 & 0.6398±0.0287 & 0.4856±0.0328 & 0.7083±0.0352 \\
CPSPAN & 0.8710±0.0485 & 0.7969±0.0267 & 0.7586±0.0462 & 0.8727±0.0452 & 0.7524±0.0423 & 0.6927±0.0183 & 0.5603±0.0301 & 0.7638±0.0327 & 0.6430±0.0415 & 0.6050±0.0145 & 0.3862±0.0130 & 0.6595±0.0331 & 0.6470±0.0349 & 0.5828±0.0219 & 0.3289±0.0251 & 0.6486±0.0335 \\
DSIMVC & 0.8014±0.0429 & 0.7960±0.0355 & 0.7282±0.0471 & 0.8158±0.0301 & 0.8180±0.0466 & 0.8082±0.0427 & 0.7455±0.0570 & 0.8275±0.0352 & 0.8304±0.0327 & 0.8122±0.0256 & 0.7543±0.0358 & 0.8342±0.0252 & 0.8284±0.0093 & 0.7879±0.0110 & 0.7312±0.0124 & 0.8284±0.0093 \\
DVIMC  & \underline{0.9169±0.0545} & \underline{0.8604±0.0268} & \underline{0.8547±0.0514} & \underline{0.9201±0.0481} & 0.8954±0.0515 & \underline{0.8470±0.0336} & \underline{0.8210±0.0527} & \underline{0.9020±0.0409} & \underline{0.8980±0.0486} & \underline{0.8373±0.0199} & \underline{0.8175±0.0397} & \underline{0.9054±0.0340} & 0.7808±0.0560 & 0.7905±0.0253 & 0.7197±0.0412 & 0.8099±0.0403 \\
MVP    & 0.7833±0.0402 & 0.8002±0.0174 & 0.7114±0.0256 & 0.8119±0.0297 & 0.8095±0.0169 & 0.8000±0.0267 & 0.7270±0.0265 & 0.8208±0.0117 & 0.7474±0.0597 & 0.7853±0.0111 & 0.6834±0.0371 & 0.7715±0.0408 & 0.7653±0.0415 & 0.7863±0.0152 & 0.6956±0.0336 & 0.7848± 0.0338 \\
PMIMC  & 0.8826±0.0367 & 0.8188±0.0132 & 0.7846±0.0266 & 0.8834±0.0352 & \underline{0.8998±0.0115} & 0.8213±0.0129 & 0.7962±0.0194 & 0.8998±0.0115 & 0.8613±0.0557 & 0.8096±0.0254 & 0.7637±0.0519 & 0.8695±0.0427 & \underline{0.8522±0.0588} & \underline{0.7994±0.0295} & \underline{0.7505±0.0574} & \underline{0.8587±0.0503} \\
ACOVA (Ours) & \textbf{0.9363±0.0067} & \textbf{0.8692±0.0113} & \textbf{0.8645±0.0133} & \textbf{0.9363±0.0067} & \textbf{0.9376±0.0073} & \textbf{0.8709±0.0125} & \textbf{0.8669±0.0144} & \textbf{0.9376±0.0073} & \textbf{0.9146±0.0321} & \textbf{0.8489±0.0280} & \textbf{0.8325±0.0453} & \textbf{0.9146±0.0321} & \textbf{0.8999±0.0526} & \textbf{0.8351±0.0325} & \textbf{0.8176±0.0541} & \textbf{0.9021±0.0483} \\

\midrule
\multicolumn{17}{c}{\textbf{Fashion}} \\
\midrule
BSV    & 0.4832±0.0163 & 0.4606±0.0100 & 0.2986±0.0076 & 0.5210±0.0113 & 0.4424±0.0167 & 0.4114±0.0080 & 0.2210±0.0048 & 0.4779±0.0094 & 0.4100±0.0245 & 0.3734±0.0074 & 0.1626±0.0050 & 0.4361±0.0121 & 0.3645±0.0144 & 0.3348±0.0025 & 0.1040±0.0011 & 0.3916±0.0085 \\
Concat & 0.6546±0.0575 & 0.6518±0.0254 & 0.5451±0.0369 & 0.7028±0.0499 & 0.5002±0.0302 & 0.4618±0.0223 & 0.3646±0.0198 & 0.5272±0.0214 & 0.4146±0.0058 & 0.3433±0.0120 & 0.2509±0.0114 & 0.4350±0.0089 & 0.2988±0.0335 & 0.1976±0.0200 & 0.1307±0.0098 & 0.3247±0.0250 \\
DCP    & 0.8547±0.0618 & \underline{0.8864±0.0188} & 0.8119±0.0519 & 0.8652±0.0547 & 0.7663±0.0494 & 0.8340±0.0177 & 0.7285±0.0371 & 0.7899±0.0473 & 0.7533±0.0497 & 0.7995±0.0166 & 0.6904±0.0356 & 0.7643±0.0461 & 0.6717±0.0163 & 0.7430±0.0077 & 0.6079±0.0182 & 0.6879±0.0154 \\
CPSPAN & 0.7130±0.0547 & 0.7566±0.0345 & 0.6303±0.0415 & 0.7548±0.0517 & 0.6981±0.0457 & 0.7524±0.0232 & 0.6179±0.0342 & 0.7446±0.0431 & 0.6886±0.0784 & 0.7527±0.0378 & 0.6237±0.0473 & 0.7436±0.0686 & 0.6672±0.0811 & 0.7416±0.0392 & 0.6043±0.0498 & 0.7273±0.0713 \\
DSIMVC & \underline{0.8993±0.0236} & 0.8573±0.0152 & 0.8178±0.0317 & 0.8993±0.0236 & \underline{0.8810±0.0298} & \textbf{0.8772±0.0172} & 0.8162±0.0374 & 0.8810±0.0298 & 0.8342±0.0272 & 0.8076±0.0048 & 0.7380±0.0183 & 0.8359±0.0256 & 0.8048±0.0568  & 0.7733±0.0162  & 0.6938±0.0445  & 0.8056±0.0553  \\
DVIMC  & 0.8856±0.0541 & 0.8799±0.0126 & \underline{0.8284±0.0444} & \underline{0.9024±0.0337} & 0.8806±0.0461 & 0.8718±0.0098 & \underline{0.8236±0.0372} & \underline{0.8986±0.0301} & \underline{0.8629±0.0627} & \underline{0.8449±0.0217} & \underline{0.7965±0.0482} & \textbf{0.8867±0.0505} & \underline{0.8579±0.0485} & 0.8290±0.0102 & 0.7729±0.0406 & \underline{0.8725±0.0310} \\
MVP    & 0.8691±0.0550 & 0.8743±0.0175 & 0.8175±0.0473 & 0.8955±0.0339 & 0.8259±0.0025 & 0.8537±0.0021 & 0.7744±0.0029 & 0.8722±0.0016 & 0.7982±0.0441 & 0.8372±0.0155 & 0.7484±0.0338 & 0.8485±0.0384 & 0.8386±0.0619 & \textbf{0.8325±0.0191} & \underline{0.7758±0.0493} & 0.8665±0.0407 \\
PMIMC  & 0.6986±0.0797 & 0.7563±0.0237 & 0.6282±0.0535 & 0.7427±0.0540 & 0.7039±0.0319 & 0.7615±0.0175 & 0.6365±0.0268 & 0.7535±0.0308 & 0.7118±0.0368 & 0.7586±0.0208 & 0.6295±0.0321 & 0.7577±0.0373 & 0.7059±0.0334 & 0.7583±0.0154 & 0.6362±0.0262 & 0.7540±0.0318 \\
ACOVA (Ours) & \textbf{0.9056±0.0627} & \textbf{0.8876±0.0208} & \textbf{0.8516±0.0494} & \textbf{0.9129±0.0481} & \textbf{0.8848±0.0562} & \underline{0.8735±0.0141} & \textbf{0.8266±0.0460} & \textbf{0.9009±0.0365} & \textbf{0.8793±0.0395} & \textbf{0.8479±0.0151} & \textbf{0.8002±0.0378} & \underline{0.8794±0.0394} & \textbf{0.8722±0.0352} & \underline{0.8312±0.0107} & \textbf{0.7833±0.0339} & \textbf{0.8745±0.0326}\\

\midrule
\multicolumn{17}{c}{\textbf{NoisyMNIST}} \\
\midrule
BSV    & 0.5144±0.0037 & 0.4542±0.0087 & 0.3258±0.0035 & 0.5618±0.0029 & 0.4184±0.0645 & 0.3343±0.0899 & 0.1896±0.0604 & 0.4456±0.0837 & 0.2939±0.0051 & 0.1912±0.0024 & 0.0754±0.0022 & 0.3090±0.0043 & 0.2672±0.0048 & 0.1566±0.0031 & 0.0452±0.0007 & 0.2693±0.0056 \\
Concat & 0.4338±0.0016 & 0.3910±0.0012 & 0.2727±0.0009 & 0.4418±0.0009 & 0.3873±0.0057 & 0.3229±0.0065 & 0.1891±0.0047 & 0.4036±0.0022 & 0.3463±0.0210 & 0.2592±0.0245 & 0.1252±0.0278 & 0.3561±0.0187 & 0.2805±0.0248 & 0.1832±0.0278 & 0.0615±0.0299 & 0.2864±0.0216 \\
DCP    & 0.8945±0.0663 & 0.9149±0.0355 & 0.8786±0.0749 & 0.9319±0.0439 & 0.8997±0.0755 & 0.8952±0.0316 & 0.8608±0.0845 & 0.9198±0.0522 & \underline{0.9150±0.0286} & 0.8586±0.0222 & 0.8401±0.0496 & \underline{0.9150±0.0286} & \underline{0.9225±0.0185} & \underline{0.8335±0.0127} & \underline{0.8431±0.0274} & \underline{0.9225±0.0185} \\
CPSPAN & 0.5665±0.0149 & 0.5870±0.0250 & 0.4402±0.0277 & 0.6058±0.0149 & 0.6103±0.0311 & 0.6050±0.0193 & 0.4724±0.0295 & 0.6357±0.0180 & 0.5774±0.0170 & 0.5824±0.0231 & 0.4439±0.0201 & 0.6097±0.0166 & 0.5486±0.0166 & 0.5766±0.0185 & 0.4271±0.0291 & 0.5932±0.0213 \\
DSIMVC & 0.8207±0.0963 & 0.7735±0.0826 & 0.7276±0.1157 & 0.8327±0.0843 & 0.7337±0.1270 & 0.6790±0.1017 & 0.6148±0.1417 & 0.7464±0.1211 & 0.6152±0.1458 & 0.5769±0.1145 & 0.4848±0.1589 & 0.6361±0.1331 & 0.6245±0.0276 & 0.5822±0.0283 & 0.4867±0.0321 & 0.6343±0.0347 \\
DVIMC  & \underline{0.9372±0.0541} & \underline{0.9345±0.0174} & \underline{0.9157±0.0533} & \underline{0.9466±0.0423} & \textbf{0.9365±0.0395} & \underline{0.9060±0.0174} & \underline{0.8987±0.0409} & \textbf{0.9368±0.0390} & 0.8998±0.0479 & \underline{0.8699±0.0232} & \underline{0.8538±0.0469} & 0.9019±0.0460 & 0.8741±0.0562 & 0.8184±0.0292 & 0.7999±0.0560 & 0.8763±0.0520 \\
MVP    & 0.8657±0.0344 & 0.8778±0.0383 & 0.7864±0.0571 & 0.8667±0.0428 & 0.8590±0.1030 & 0.8635±0.0521 & 0.8222±0.1028 & 0.8777±0.0821 & 0.8355±0.0337 & 0.8195±0.0318 & 0.7792±0.0407 & 0.8464±0.0321 & 0.6660±0.0712 & 0.6744±0.1043 & 0.5721±0.1088 & 0.6694±0.0713 \\
PMIMC  & 0.7127±0.0953 & 0.6985±0.0616 & 0.5927±0.0961 & 0.7432±0.0848 & 0.7187±0.0147 & 0.6593±0.0250 & 0.5623±0.0119 & 0.7458±0.0163 & 0.6616±0.0362 & 0.5739±0.0194 & 0.4947±0.0240 & 0.6695±0.0301 & 0.6044±0.0243 & 0.5344±0.0253 & 0.4393±0.0259 & 0.6220±0.0233 \\
ACOVA (Ours) & \textbf{0.9663±0.0393} & \textbf{0.9478±0.0240} & \textbf{0.9440±0.0508} & \textbf{0.9664±0.0393} & \underline{0.9344±0.0471} & \textbf{0.9125±0.0195} & \textbf{0.9016±0.0487} & \underline{0.9348±0.0465} & \textbf{0.9599±0.0010} & \textbf{0.9001±0.0023} & \textbf{0.9144±0.0020} & \textbf{0.9599±0.0010} & \textbf{0.9377±0.0080} & \textbf{0.8579±0.0111} & \textbf{0.8697±0.0152} & \textbf{0.9377±0.0080} \\

\midrule
\multicolumn{17}{c}{\textbf{CUB Image-Caption}} \\
\midrule
BSV & 0.6723±0.0381 & 0.6604±0.0149 & 0.4818±0.0237 & 0.6823±0.0247 & 0.6080±0.0294 & 0.5858±0.0118 & 0.3376±0.0252 & 0.6163±0.0208 & 0.5270±0.0188 & 0.5241±0.0062 & 0.2295±0.0102 & 0.5387±0.0193 & 0.4493±0.0407 & 0.4474±0.0243 & 0.1266±0.0166 & 0.4630±0.0335 \\
Concat & 0.6890±0.0291 & 0.6685±0.0131 & 0.4979±0.0228 & 0.6947±0.0212 & 0.5837±0.0370 & 0.5790±0.0229 & 0.3322±0.0400 & 0.5987±0.0269 & 0.5187±0.0295 & 0.5107±0.0187 & 0.2068±0.0202 & 0.5323±0.0157 & 0.4663±0.0214 & 0.4658±0.0146 & 0.1391±0.0157 & 0.4770±0.0234 \\
DCP & 0.5227±0.0296 & 0.5822±0.0087 & 0.2867±0.0139 & 0.5833±0.0118 & 0.5163±0.0580 & 0.5781±0.0371 & 0.3008±0.0559 & 0.5797±0.0533 & 0.4040±0.0235 & 0.4430±0.0186 & 0.1196±0.0180 & 0.4587±0.0202 & 0.2653±0.0078 & 0.2337±0.0201 & 0.0571±0.0080 & 0.2867±0.0166 \\
CPSPAN & 0.6973±0.0206 & 0.6766±0.0101 & 0.5420±0.0104 & 0.7153±0.0264 & 0.6067±0.0171 & 0.5914±0.0124 & 0.4153±0.0125 & 0.6160±0.0237 & 0.5407±0.0100 & 0.5198±0.0088 & 0.2762±0.0093 & 0.5487±0.0040 & 0.4957±0.0077 & 0.4797±0.0101 & 0.1953±0.0203 & 0.5033±0.0064  \\
DSIMVC & 0.6380±0.0188 & 0.5744±0.0157 & 0.4280±0.0159 & 0.6427±0.0146 & 0.6420±0.0105 & 0.5663±0.0088 & 0.4236±0.0102 & 0.6420±0.0105 & 0.5810±0.0257 & 0.5419±0.0287 & 0.3824±0.0319 & 0.5860±0.0224 & 0.5410±0.0573 & 0.4965±0.0419 & 0.3275±0.0544 & 0.5440±0.0541 \\
DVIMC & 0.6573±0.0427 & 0.6585±0.0266 & 0.5246±0.0366 & 0.6807±0.0383 & 0.6703±0.0372 & 0.6502±0.0409 & 0.5131±0.0448 & 0.6807±0.0334 & 0.6057±0.0341 & 0.6026±0.0252 & 0.4538±0.0285 & 0.6160±0.0233 & 0.4710±0.0438 & 0.4694±0.0452 & 0.2795±0.0477 & 0.4803±0.0489 \\
MVP & 0.6497±0.0246 & 0.6317±0.0154 & 0.5051±0.0139 & 0.6667±0.0176 & \underline{0.6840±0.0338} & 0.6545±0.0263 & 0.5256±0.0310 & 0.6903±0.0325 & 0.6163±0.0315 & 0.5750±0.0178 & 0.4401±0.0250 & 0.6287±0.0373 & \underline{0.6440±0.0274} & \underline{0.6036±0.0198} & \underline{0.4655±0.0221} & \underline{0.6510±0.0211} \\
PMIMC & \underline{0.7183±0.0069} & \underline{0.7131±0.0108} & \underline{0.5795±0.0072} & \underline{0.7383±0.0053} & 0.6820±0.0491 & \underline{0.6847±0.0221} & \underline{0.5360±0.0412} & \underline{0.6977±0.0420} & \underline{0.6517±0.0290} & \underline{0.6198±0.0163} & \underline{0.4846±0.0228} & \underline{0.6590±0.0253} & 0.5907±0.0515 & 0.5480±0.0315 & 0.4034±0.0367 & 0.6060±0.0475 \\
ACOVA (Ours)  & \textbf{0.7970±0.0143} & \textbf{0.7695±0.0139} & \textbf{0.6581±0.0195} & \textbf{0.7970±0.0143} & \textbf{0.7563±0.0242} & \textbf{0.7335±0.0085} & \textbf{0.6215±0.0182} & \textbf{0.7590±0.0216} & \textbf{0.6823±0.0295} & \textbf{0.6568±0.0193} & \textbf{0.5267±0.0235} & \textbf{0.6917±0.0252} & \textbf{0.6554±0.0310} & \textbf{0.6136±0.0223} & \textbf{0.4730±0.0216} & \textbf{0.6584±0.0229} \\

\bottomrule
\end{tabular}
}
\label{tab:sotaSTD}
\end{sidewaystable}

\begin{table*}[t]
  \caption{Ablation results on Handwritten dataset with different missing-view rates. `\textit{w/}' and `\textit{w/o}' mean `\textit{with}' and `\textit{without}', respectively. ‘corr.’ refers to considering cross-view correlations. The best result in each column is denoted in \textbf{bold}. Note that all results are averaged over five runs, with the mean and standard deviation reported.}
  \renewcommand{\arraystretch}{1}
  \resizebox{\columnwidth}{!}{%
  \begin{tabular}{l|c|cccc|cccc}
  \toprule
  \multicolumn{1}{c|}{\multirow{2}{*}{}}                                      & \multirow{2}{*}{\textbf{Variant}}         & \multicolumn{4}{c|}{\textbf{10\%}}                                                                                                & \multicolumn{4}{c}{\textbf{30\%}}                                                                                                 \\ \cline{3-10} 
  \multicolumn{1}{c|}{}                                                       &                                           & \textbf{ACC}$\uparrow$         & \textbf{NMI}$\uparrow$         & \textbf{ARI}$\uparrow$         & \textbf{PUR}$\uparrow$         & \textbf{ACC}$\uparrow$         & \textbf{NMI}$\uparrow$         & \textbf{ARI}$\uparrow$         & \textbf{PUR}$\uparrow$         \\ \hline
  \multicolumn{1}{c|}{\multirow{20}{*}{\rotatebox{90}{\textbf{Handwritten}}}} & Learn $\mathbf{R}$ (\textit{w/} corr.)    & \textbf{0.9363 ± 0.0067} & 0.8692 ± 0.0113          & 0.8645 ± 0.0133          & \textbf{0.9363 ± 0.0067} & \textbf{0.9376 ± 0.0073} & \textbf{0.8709 ± 0.0125} & \textbf{0.8669 ± 0.0144} & \textbf{0.9376 ± 0.0073} \\ \cline{2-10} 
  \multicolumn{1}{c|}{}                                                       & \textit{w/o} corr.                        & 0.9169 ± 0.0545          & 0.8604 ± 0.0268          & 0.8547 ± 0.0514          & 0.9201 ± 0.0481          & 0.8954 ± 0.0515          & 0.8470 ± 0.0336          & 0.8210 ± 0.0527          & 0.9020 ± 0.0409          \\
  \multicolumn{1}{c|}{}                                                       & fixed $\rho = 0.1$                        & 0.9339 ± 0.0081          & 0.8676 ± 0.0097          & 0.8595 ± 0.0163          & 0.9339 ± 0.0081          & 0.9249 ± 0.0165          & 0.8534 ± 0.0235          & 0.8418 ± 0.0315          & 0.9249 ± 0.0165          \\
  \multicolumn{1}{c|}{}                                                       & fixed $\rho = 0.3$                        & 0.8874 ± 0.0614          & 0.8513 ± 0.0229          & 0.8231 ± 0.0532          & 0.8945 ± 0.0527          & 0.9217 ± 0.0177          & 0.8509 ± 0.0248          & 0.8355 ± 0.0340          & 0.9217 ± 0.0177          \\
  \multicolumn{1}{c|}{}                                                       & fixed $\rho = 0.5$                        & 0.9088 ± 0.0498          & 0.8590 ± 0.0246          & 0.8391 ± 0.0454          & 0.9118 ± 0.0439          & 0.9301 ± 0.0104          & 0.8612 ± 0.0168          & 0.8510 ± 0.0208          & 0.9301 ± 0.0104          \\
  \multicolumn{1}{c|}{}                                                       & fixed $\rho = 0.7$                        & 0.9069 ± 0.0410          & 0.8568 ± 0.0160          & 0.8322 ± 0.0359          & 0.9091 ± 0.0368          & 0.9317 ± 0.0117          & 0.8638 ± 0.0190          & 0.8545 ± 0.0229          & 0.9317 ± 0.0117          \\
  \multicolumn{1}{c|}{}                                                       & fixed $\rho = 0.9$                        & 0.9362 ± 0.0096          & \textbf{0.8749 ± 0.0126} & \textbf{0.8649 ± 0.0190} & 0.9362 ± 0.0096          & 0.9172 ± 0.0137          & 0.8445 ± 0.0210          & 0.8263 ± 0.0268          & 0.9172 ± 0.0137          \\ \cline{2-10} 
  \multicolumn{1}{c|}{}                                                       & \textit{w/o} $\mathcal{L}_{\text{ELBO}}$  & 0.6240 ± 0.0611          & 0.5284 ± 0.0574          & 0.3673 ± 0.0494          & 0.6240 ± 0.0417          & 0.5980 ± 0.0447          & 0.4548 ± 0.0340          & 0.3227 ± 0.0423          & 0.5980 ± 0.0414          \\
  \multicolumn{1}{c|}{}                                                       & \textit{w/o} $\mathcal{L}_{\text{align}}$ & 0.8235 ± 0.0635          & 0.7768 ± 0.0477          & 0.7036 ± 0.0666          & 0.8235 ± 0.0675          & 0.6930 ± 0.0521          & 0.6378 ± 0.0296          & 0.5179 ± 0.0397          & 0.6930 ± 0.0286          \\ \cline{2-10} 
  \multicolumn{1}{c|}{}                                                       & \multirow{2}{*}{\textbf{Variant}}         & \multicolumn{4}{c|}{\textbf{50\%}}                                                                                                & \multicolumn{4}{c}{\textbf{70\%}}                                                                                                 \\ \cline{3-10} 
  \multicolumn{1}{c|}{}                                                       &                                           & \textbf{ACC}$\uparrow$         & \textbf{NMI}$\uparrow$         & \textbf{ARI}$\uparrow$         & \textbf{PUR}$\uparrow$         & \textbf{ACC}$\uparrow$         & \textbf{NMI}$\uparrow$         & \textbf{ARI}$\uparrow$         & \textbf{PUR}$\uparrow$         \\ \cline{2-10} 
  \multicolumn{1}{c|}{}                                                       & Learn $\mathbf{R}$ (\textit{w/} corr.)    & \textbf{0.9146 ± 0.0321} & \textbf{0.8489 ± 0.0280} & \textbf{0.8325 ± 0.0453} & \textbf{0.9146 ± 0.0321} & 0.8999 ± 0.0526          & \textbf{0.8351 ± 0.0325} & \textbf{0.8176 ± 0.0541} & 0.9021 ± 0.0483          \\ \cline{2-10} 
  \multicolumn{1}{c|}{}                                                       & \textit{w/o} corr.                        & 0.8980 ± 0.0486          & 0.8373 ± 0.0199          & 0.8175 ± 0.0397          & 0.9054 ± 0.0340          & 0.7808 ± 0.0560          & 0.7905 ± 0.0253          & 0.7197 ± 0.0412          & 0.8099 ± 0.0403          \\
  \multicolumn{1}{c|}{}                                                       & fixed $\rho = 0.1$                        & 0.8860 ± 0.0542          & 0.8288 ± 0.0345          & 0.8031 ± 0.0554          & 0.8898 ± 0.0478          & 0.8881 ± 0.0524          & 0.8231 ± 0.0268          & 0.7986 ± 0.0461          & 0.8918 ± 0.0451          \\
  \multicolumn{1}{c|}{}                                                       & fixed $\rho = 0.3$                        & 0.8929 ± 0.0515          & 0.8335 ± 0.0311          & 0.8093 ± 0.0514          & 0.8972 ± 0.0435          & 0.8859 ± 0.0565          & 0.8219 ± 0.0269          & 0.7962 ± 0.0489          & 0.8903 ± 0.0477          \\
  \multicolumn{1}{c|}{}                                                       & fixed $\rho = 0.5$                        & 0.8710 ± 0.0661          & 0.8291 ± 0.0332          & 0.7957 ± 0.0603          & 0.8821 ± 0.0527          & 0.8867 ± 0.0514          & 0.8239 ± 0.0238          & 0.7962 ± 0.0438          & 0.8908 ± 0.0434          \\
  \multicolumn{1}{c|}{}                                                       & fixed $\rho = 0.7$                        & 0.8963 ± 0.0464          & 0.8410 ± 0.0163          & 0.8161 ± 0.0380          & 0.9001 ± 0.0390          & \textbf{0.9115 ± 0.0102} & 0.8319 ± 0.0150          & 0.8145 ± 0.0207          & \textbf{0.9115 ± 0.0102} \\
  \multicolumn{1}{c|}{}                                                       & fixed $\rho = 0.9$                        & 0.8786 ± 0.0545          & 0.8330 ± 0.0165          & 0.8036 ± 0.0427          & 0.8828 ± 0.0495          & 0.8977 ± 0.0255          & 0.8140 ± 0.0282          & 0.7934 ± 0.0407          & 0.8977 ± 0.0255          \\ \cline{2-10} 
  \multicolumn{1}{c|}{}                                                       & \textit{w/o} $\mathcal{L}_{\text{ELBO}}$  & 0.5495 ± 0.0583          & 0.3924 ± 0.0491          & 0.2874 ± 0.0545          & 0.5545 ± 0.0372          & 0.4845 ± 0.0587          & 0.3357 ± 0.0551          & 0.2301 ± 0.0483          & 0.5020 ± 0.0425          \\
  \multicolumn{1}{c|}{}                                                       & \textit{w/o} $\mathcal{L}_{\text{align}}$ & 0.6645 ± 0.0775          & 0.6259 ± 0.0530          & 0.5179 ± 0.0684          & 0.6675 ± 0.0589          & 0.6090 ± 0.0550          & 0.5664 ± 0.0463          & 0.4574 ± 0.0592          & 0.6100 ± 0.0610          \\ \bottomrule

  \end{tabular}%
  }
  \label{ablation_std1}
  \end{table*}

\begin{table*}[htbp]
  \caption{Ablation results on Scene15 dataset with different missing-view rates. `\textit{w/}' and `\textit{w/o}' mean `\textit{with}' and `\textit{without}', respectively. ‘corr.’ refers to considering cross-view correlations. The best result in each column is denoted in \textbf{bold}. Note that all results are averaged over five runs, with the mean and standard deviation reported.}
\resizebox{\columnwidth}{!}{%
\begin{tabular}{l|c|cccc|cccc}
\toprule
\multirow{22}{*}{\rotatebox{90}{\textbf{Scene15}}} & \multirow{2}{*}{\textbf{Variant}}         & \multicolumn{4}{c|}{\textbf{10\%}}                                                                                                & \multicolumn{4}{c}{\textbf{30\%}}                                                                                                 \\ \cline{3-10} 
                                                   &                                           & \textbf{ACC}$\uparrow$         & \textbf{NMI}$\uparrow$         & \textbf{ARI}$\uparrow$         & \textbf{PUR}$\uparrow$         & \textbf{ACC}$\uparrow$         & \textbf{NMI}$\uparrow$         & \textbf{ARI}$\uparrow$         & \textbf{PUR}$\uparrow$         \\ \cline{2-10} 
                                                   & Learn $\mathbf{R}$ (\textit{w/} corr.)    & 0.5040$\,\pm$\,0.0153          & \textbf{0.4804}$\,\pm$\,0.0050 & \textbf{0.3273}$\,\pm$\,0.0088 & 0.5270$\,\pm$\,0.0202          & \textbf{0.4807}$\,\pm$\,0.0136 & \textbf{0.4560}$\,\pm$\,0.0090 & \textbf{0.3106}$\,\pm$\,0.0079 & \textbf{0.5119}$\,\pm$\,0.0164 \\ \cline{2-10} 
                                                   & \textit{w/o} corr.                        & 0.4863$\,\pm$\,0.0103          & 0.4780$\,\pm$\,0.0059          & 0.3192$\,\pm$\,0.0085          & 0.5021$\,\pm$\,0.0109          & 0.4678$\,\pm$\,0.0054          & 0.4440$\,\pm$\,0.0033          & 0.3097$\,\pm$\,0.0033          & 0.4823$\,\pm$\,0.0089          \\
                                                   & fixed $\rho = 0.1$                        & 0.4963$\,\pm$\,0.0142          & 0.4722$\,\pm$\,0.0069          & 0.3237$\,\pm$\,0.0105          & 0.5220$\,\pm$\,0.0128          & 0.4702$\,\pm$\,0.0077          & 0.4466$\,\pm$\,0.0037          & 0.2977$\,\pm$\,0.0041          & 0.4941$\,\pm$\,0.0095          \\
                                                   & fixed $\rho = 0.3$                        & \textbf{0.5057}$\,\pm$\,0.0062 & 0.4743$\,\pm$\,0.0099          & 0.3272$\,\pm$\,0.0060          & \textbf{0.5281}$\,\pm$\,0.0075 & 0.4746$\,\pm$\,0.0202          & 0.4458$\,\pm$\,0.0128          & 0.3011$\,\pm$\,0.0136          & 0.4986$\,\pm$\,0.0193          \\
                                                   & fixed $\rho = 0.5$                        & 0.4963$\,\pm$\,0.0190          & 0.4673$\,\pm$\,0.0113          & 0.3214$\,\pm$\,0.0099          & 0.5218$\,\pm$\,0.0200          & 0.4689$\,\pm$\,0.0175          & 0.4414$\,\pm$\,0.0111          & 0.3005$\,\pm$\,0.0129          & 0.4891$\,\pm$\,0.0221          \\
                                                   & fixed $\rho = 0.7$                        & 0.4761$\,\pm$\,0.0261          & 0.4543$\,\pm$\,0.0220          & 0.3023$\,\pm$\,0.0235          & 0.5031$\,\pm$\,0.0278          & 0.4742$\,\pm$\,0.0088          & 0.4255$\,\pm$\,0.0104          & 0.2907$\,\pm$\,0.0067          & 0.4940$\,\pm$\,0.0106          \\
                                                   & fixed $\rho = 0.9$                        & 0.4649$\,\pm$\,0.0181          & 0.4505$\,\pm$\,0.0115          & 0.2938$\,\pm$\,0.0179          & 0.4939$\,\pm$\,0.0158          & 0.4609$\,\pm$\,0.0146          & 0.4187$\,\pm$\,0.0064          & 0.2710$\,\pm$\,0.0148          & 0.4727$\,\pm$\,0.0007          \\ \cline{2-10} 
                                                   & \textit{w/o} $\mathcal{L}_{\text{ELBO}}$  & 0.1599$\,\pm$\,0.0214          & 0.1605$\,\pm$\,0.0192          & 0.0519$\,\pm$\,0.0178          & 0.1632$\,\pm$\,0.0259          & 0.1407$\,\pm$\,0.0508          & 0.0870$\,\pm$\,0.0315          & 0.0050$\,\pm$\,0.0187          & 0.1431$\,\pm$\,0.0495          \\
                                                   & \textit{w/o} $\mathcal{L}_{\text{align}}$ & 0.4566$\,\pm$\,0.0270          & 0.4624$\,\pm$\,0.0098          & 0.2926$\,\pm$\,0.0179          & 0.4941$\,\pm$\,0.0158          & 0.3974$\,\pm$\,0.0148          & 0.4000$\,\pm$\,0.0146          & 0.2335$\,\pm$\,0.0195          & 0.4193$\,\pm$\,0.0310          \\ \cline{2-10} 
                                                   & \multirow{2}{*}{\textbf{Variant}}         & \multicolumn{4}{c|}{\textbf{50\%}}                                                                                                & \multicolumn{4}{c}{\textbf{70\%}}                                                                                                 \\ \cline{3-10} 
                                                   &                                           & \textbf{ACC}$\uparrow$         & \textbf{NMI}$\uparrow$         & \textbf{ARI}$\uparrow$         & \textbf{PUR}$\uparrow$         & \textbf{ACC}$\uparrow$         & \textbf{NMI}$\uparrow$         & \textbf{ARI}$\uparrow$         & \textbf{PUR}$\uparrow$         \\ \cline{2-10} 
                                                   & Learn $\mathbf{R}$ (\textit{w/} corr.)    & \textbf{0.4710}$\,\pm$\,0.0138 & \textbf{0.4360}$\,\pm$\,0.0034 & \textbf{0.2973}$\,\pm$\,0.0084 & \textbf{0.4987}$\,\pm$\,0.0126 & \textbf{0.4428}$\,\pm$\,0.0140 & \textbf{0.4083}$\,\pm$\,0.0055 & \textbf{0.2733}$\,\pm$\,0.0031 & \textbf{0.4636}$\,\pm$\,0.0161 \\ \cline{2-10} 
                                                   & \textit{w/o} corr.                        & 0.4371$\,\pm$\,0.0568          & 0.4163$\,\pm$\,0.0402          & 0.2737$\,\pm$\,0.0399          & 0.4577$\,\pm$\,0.0654          & 0.4136$\,\pm$\,0.0266          & 0.3950$\,\pm$\,0.0228          & 0.2519$\,\pm$\,0.0189          & 0.4286$\,\pm$\,0.0293          \\
                                                   & fixed $\rho = 0.1$                        & 0.4653$\,\pm$\,0.0222          & 0.4292$\,\pm$\,0.0054          & 0.2955$\,\pm$\,0.0128          & 0.4966$\,\pm$\,0.0138          & 0.4181$\,\pm$\,0.0100          & 0.3878$\,\pm$\,0.0112          & 0.2518$\,\pm$\,0.0132          & 0.4287$\,\pm$\,0.0159          \\
                                                   & fixed $\rho = 0.3$                        & 0.4598$\,\pm$\,0.0268          & 0.4275$\,\pm$\,0.0138          & 0.2901$\,\pm$\,0.0163          & 0.4885$\,\pm$\,0.0227          & 0.4165$\,\pm$\,0.0130          & 0.3882$\,\pm$\,0.0105          & 0.2508$\,\pm$\,0.0085          & 0.4309$\,\pm$\,0.0147          \\
                                                   & fixed $\rho = 0.5$                        & 0.4569$\,\pm$\,0.0057          & 0.4224$\,\pm$\,0.0109          & 0.2863$\,\pm$\,0.0092          & 0.4815$\,\pm$\,0.0110          & 0.4105$\,\pm$\,0.0291          & 0.3893$\,\pm$\,0.0061          & 0.2521$\,\pm$\,0.0137          & 0.4228$\,\pm$\,0.0263          \\
                                                   & fixed $\rho = 0.7$                        & 0.4590$\,\pm$\,0.0082          & 0.4295$\,\pm$\,0.0077          & 0.2818$\,\pm$\,0.0096          & 0.4765$\,\pm$\,0.0096          & 0.4192$\,\pm$\,0.0219          & 0.3827$\,\pm$\,0.0136          & 0.2503$\,\pm$\,0.0175          & 0.4353$\,\pm$\,0.0245          \\
                                                   & fixed $\rho = 0.9$                        & 0.3918$\,\pm$\,0.0364          & 0.3877$\,\pm$\,0.0318          & 0.2083$\,\pm$\,0.0349          & 0.4144$\,\pm$\,0.0370          & 0.3656$\,\pm$\,0.0111          & 0.3575$\,\pm$\,0.0067          & 0.2239$\,\pm$\,0.0090          & 0.3765$\,\pm$\,0.0056          \\ \cline{2-10} 
                                                   & \textit{w/o} $\mathcal{L}_{\text{ELBO}}$  & 0.1405$\,\pm$\,0.0189          & 0.0854$\,\pm$\,0.0095          & 0.0220$\,\pm$\,0.0084          & 0.1411$\,\pm$\,0.0195          & 0.1394$\,\pm$\,0.0224          & 0.0746$\,\pm$\,0.0192          & 0.0170$\,\pm$\,0.0158          & 0.1402$\,\pm$\,0.0276          \\
                                                   & \textit{w/o} $\mathcal{L}_{\text{align}}$ & 0.3393$\,\pm$\,0.0701          & 0.3412$\,\pm$\,0.0603          & 0.1902$\,\pm$\,0.0465          & 0.3454$\,\pm$\,0.0687          & 0.2105$\,\pm$\,0.0588          & 0.1798$\,\pm$\,0.0931          & 0.0821$\,\pm$\,0.0595          & 0.2157$\,\pm$\,0.0575          \\ \bottomrule
\end{tabular}%
}
\label{ablation_std2}
\end{table*}

\subsection{Discussion on Extreme Scenarios}
\label{appendix:extreme}
To further evaluate the robustness of our proposed approach, we also consider more extreme missing-rate scenarios. Fig.~\ref{fig:extreme} provides performance curves \textit{vs.} missing-rate for an extended range of [0,90] for the Handwritten and Scene15 datasets. Results demonstrate that the proposed approach consistently outperforms the baselines even in the two extreme cases (missing rate of 90\% and of 0\% (complete)), further supporting the need to go beyond the independence assumption. 

\subsection{Impact of Latent Dimension $D$}

As illustrated in Fig.~\ref{fig:latent_dim}, we empirically investigate the impact of latent dimension $D$ on clustering performance across different missing rates on Scene15 and Caltech5V datasets. We examine $D$ ranging from \{4, 8, 10, 12, 15, 20, 25, 30, 35, 40, 45, 50 \}, and observe that increasing $D$ from 4 to 15 leads to performance improvements, while further increases yield metric degradation. This trend remains consistent across different missing ratios and datasets. Empirically, we recommend setting $D$ in the range of 10-15 for optimal clustering performance. 

\begin{figure}[t]
    \centering
    \subfloat[ACC on Scene15]{%
        \includegraphics[width=0.24\linewidth]{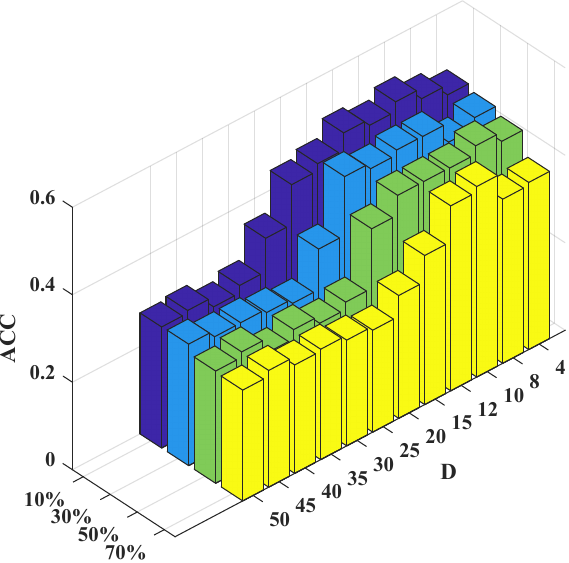}
    }
    \subfloat[NMI on Scene15]{%
        \includegraphics[width=0.24\linewidth]{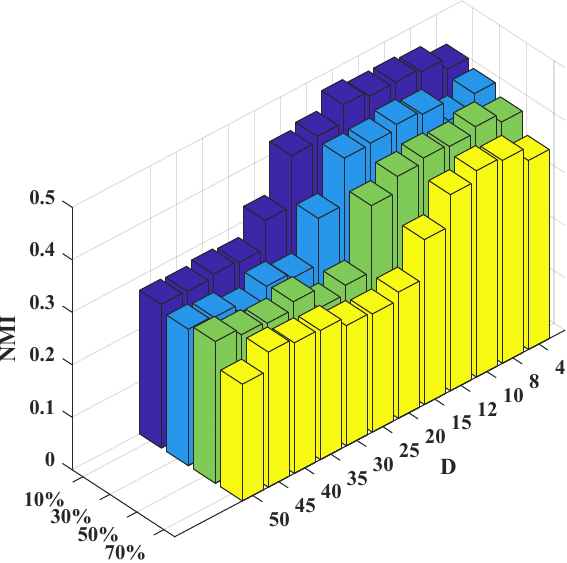}
    }
    \subfloat[ACC on Caltech5V]{%
        \includegraphics[width=0.24\linewidth]{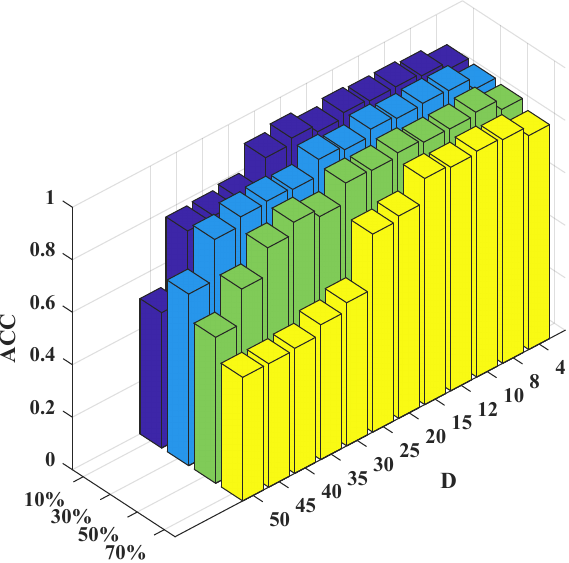}
    }
    \subfloat[NMI on Caltech5V]{%
        \includegraphics[width=0.24\linewidth]{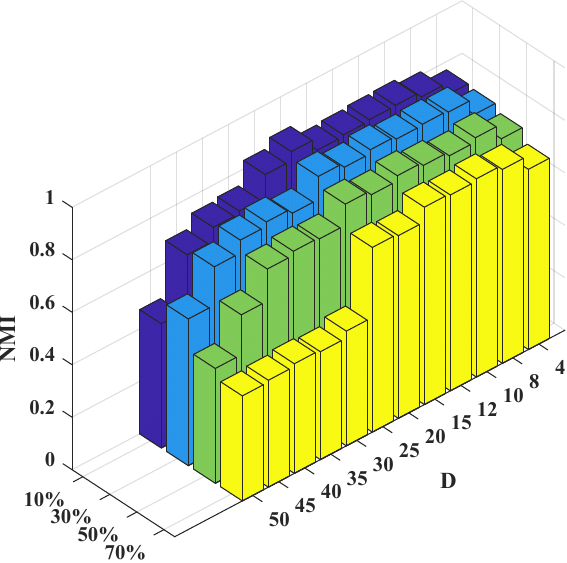}
    }
    
    \caption{Clustering performance with different latent dimension $D$ and missing rate settings on Scene15 and Caltech5V dataset.}
    \label{fig:latent_dim}
    \vspace{-0.4cm}
\end{figure}

\begin{table}[h]
\centering
\caption{Runtime comparison on the Scene15 and NoisyMNIST datasets.}
\resizebox{\columnwidth}{!}{
\begin{tabular}{lcccccccccc}
\toprule
 & \multicolumn{10}{c}{\textbf{Methods}} \\
\cmidrule{3-11}
\textbf{Dataset} & \textbf{Metric} & \textbf{BSV} & \textbf{CONCAT} & \textbf{DSIMVC} & \textbf{DCP} & \textbf{CPSPAN} & \textbf{DVIMC} & \textbf{MVP} & \textbf{PMIMC} & \textbf{ACOVA} \\
\midrule
\multirow{2}{*}{Scene15} & Time & - & - & \(2.35 \times 10^{-2}\) & \(1.47 \times 10^{1}\) & \(6.62 \times 10^{0}\) & \(1.12 \times 10^{-1}\) & \(1.84 \times 10^{0}\) & \(4.63 \times 10^{0}\) & \(1.16 \times 10^{-1}\) \\
 & ACC & 0.2593 & 0.2596 & 0.2654 & 0.3784 & 0.2025 & 0.4136  & 0.4407 & 0.3178 & 0.4428  \\

\cmidrule{1-11}
\multirow{2}{*}{NoisyMNIST} & Time & - & - & \(3.06 \times 10^{0}\) & \(1.48 \times 10^{1}\) & \(2.22 \times 10^{2}\) & \(4.98 \times 10^{0}\) & \(7.61 \times 10^{1}\) & \(1.14 \times 10^{2}\) & \(5.59 \times 10^{0}\) \\
 & ACC & 0.2672 & 0.2805 & 0.6245 & 0.9225 & 0.5486 & 0.8741 & 0.6660 & 0.6044 & 0.9377 \\
\bottomrule
\end{tabular}
\label{tab:time_acc_cost}
}
\end{table}

\begin{figure}[t]
    \centering
    \subfloat[Scene15]{%
        \includegraphics[width=0.45\linewidth]{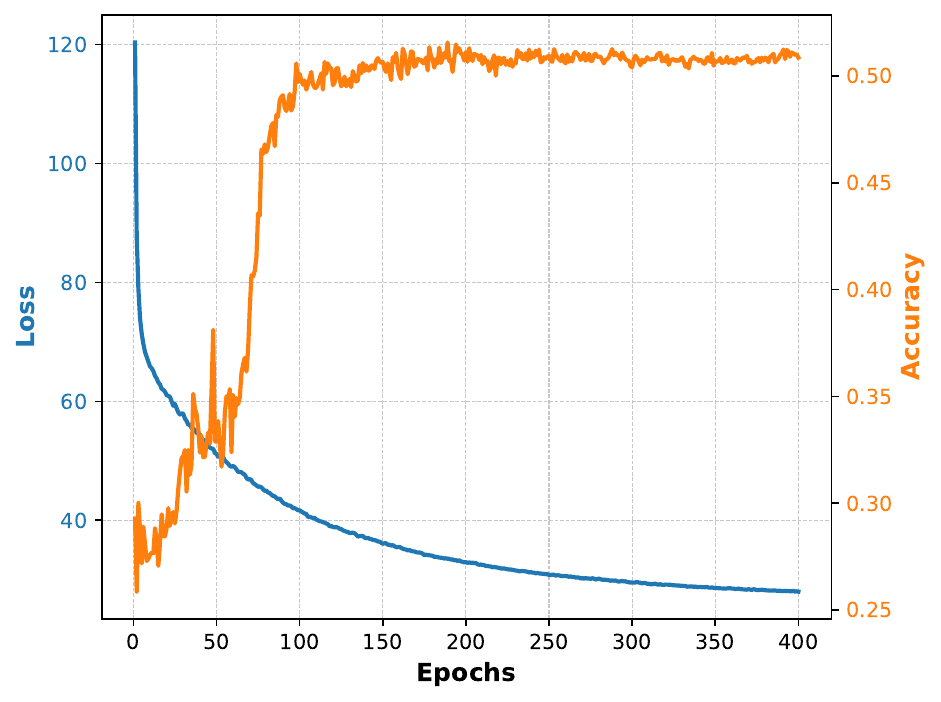}
    }
    \subfloat[Handwritten]{%
        \includegraphics[width=0.45\linewidth]{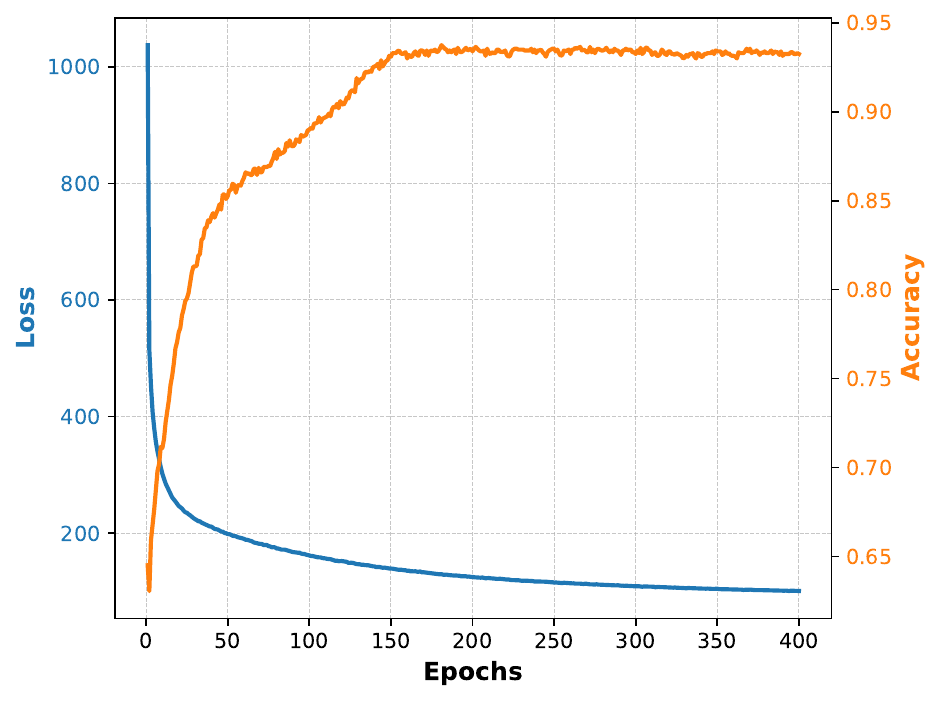}
    }
    \caption{Convergence curves on the Scene15 and Handwritten dataset.}
    \label{fig:convergence}
\end{figure}

\begin{figure}[t]
  \centering
  \includegraphics[width=0.45\linewidth]{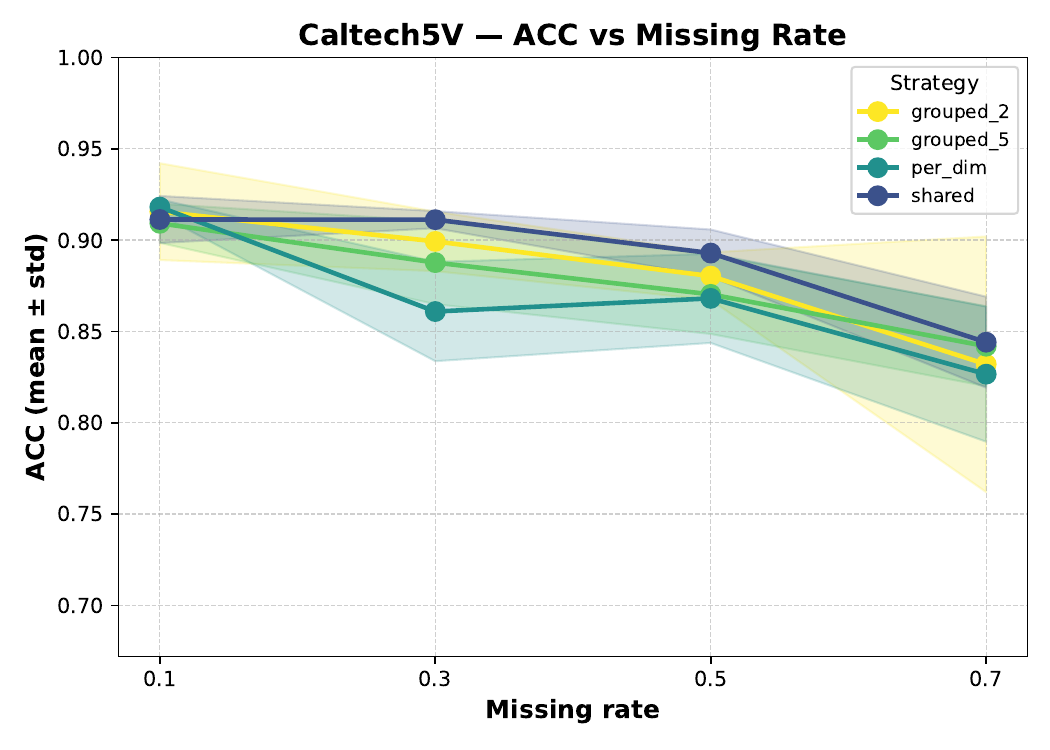}
  \caption{Clustering accuracy (ACC) of different strategies on the Caltech5V dataset 
  under varying missing rates. The proposed shared correlation matrix achieves 
  consistently strong performance compared with other strategies.}
  \label{fig:caltech5v_acc}
\end{figure}

\begin{table}[t]
\centering
\caption{Runtime analysis with varying number of views on Caltech5V dataset.}
\resizebox{0.6\columnwidth}{!}{
\begin{tabular}{ccccc}
\toprule
Number of Views ($V$) & 2 & 3 & 4 & 5 \\
\midrule
Runtime (seconds) & 0.18 ± 0.03 & 0.22 ± 0.03 & 0.26 ± 0.04 & 0.27 ± 0.04 \\
\bottomrule
\end{tabular}
\label{tab:runtime_analysis}
}
    \vspace{-0.4cm}
\end{table}
\subsection{Time Cost Analysis}\label{appendix:time_cost}

In Table~\ref{tab:time_acc_cost} we report the training time and clustering accuracy (ACC) of all compared methods on the Scene15 dataset with a 70\% missing ratio. 
Note that BSV and CONCAT do not require any training procedures, and thus we omit their time cost in the table. Furthermore, for fair comparison, we report the training time for the main algorithm and exclude the pretraining stage. The results demonstrate that our ACOVA achieves an effective trade-off between computational cost and clustering performance, exhibiting moderate training time while maintaining superior clustering accuracy. We further repeat the analysis on the NoisyMNIST dataset, due to its larger scale, demonstrating explicitly the beneficial scaling of our approach when it comes to the number of samples $N$. To show the impact of $V$, we also conducted the runtime analysis for different numbers of views on the Caltech5V dataset in Table \ref{tab:runtime_analysis}. It can be observed that the overhead of the cubic scaling with respect to $V$ is negligible for most common settings.

\subsection{Convergence Analysis}

To verify the convergence, we analyze both the training objective (Loss) and clustering performance (Accuracy) against epochs on the Scene15 and Handwritten datasets with a 10\% missing ratio. As illustrated in Fig.~\ref{fig:convergence}, the training objective exhibits consistent monotonic descent until convergence, while the clustering accuracy steadily increases and eventually saturates. These results all confirm good convergence properties of the proposed ACOVA.

\begin{figure}[h]
    \centering
    \subfloat[Caltech5V]{%
        \includegraphics[width=0.45\linewidth]{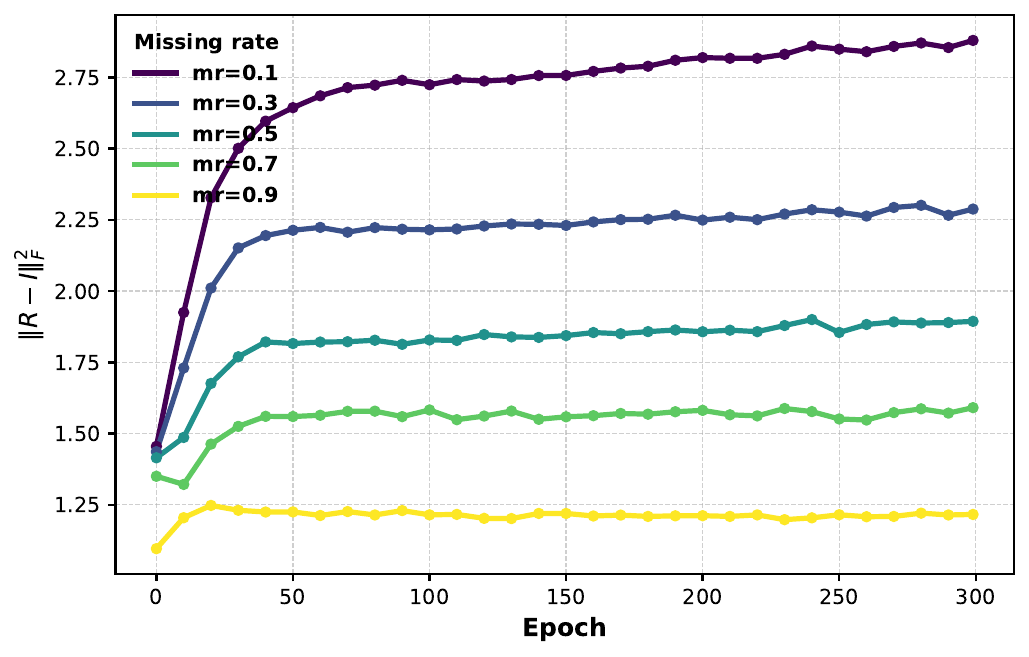}
    }
    \subfloat[Handwritten]{%
        \includegraphics[width=0.45\linewidth]{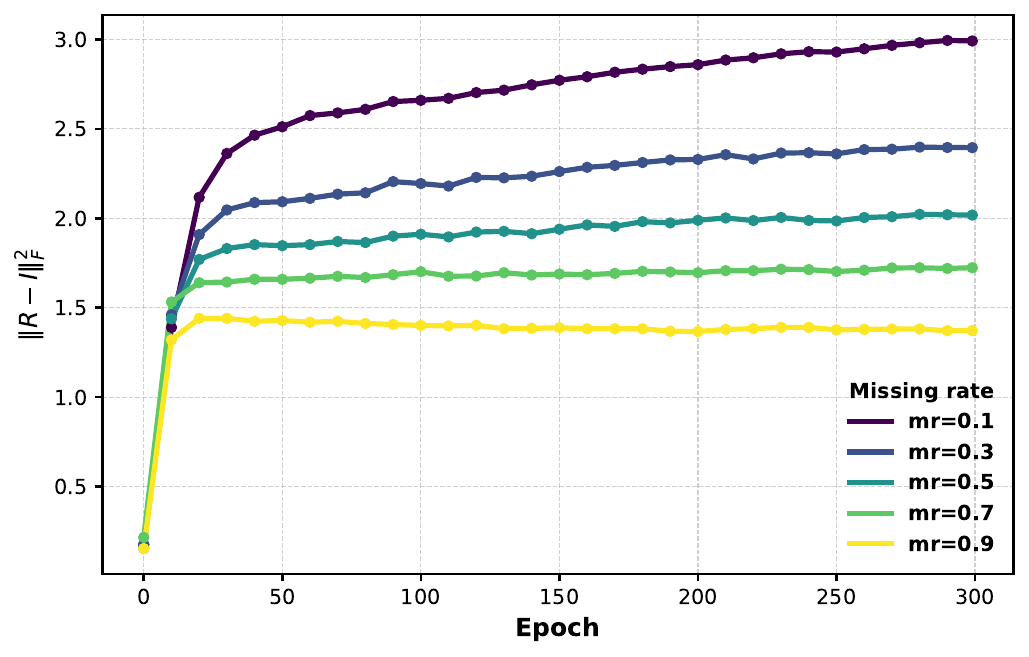}
    }
    \caption{Convergence curves of $\|\mathbf{R} - \mathbf{I}\|_F$ on Caltech5V and Handwritten dataset.}
    \label{fig:convergence_R}
\end{figure}

\subsection{Learning Multiple Correlation Matrices}

As our proposed framework enables end-to-end optimization of $\mathbf{R}$, we can further extend our framework to learn multiple correlation matrices. In Fig.~\ref{fig:caltech5v_acc}, we compare the performance of learning one shared $\mathbf{R}$ to learning a separate $\mathbf{R}$ for each dimension or for groups of dimensions. More specifically, "grouped\_2" and "grouped\_5" correspond to learning two and five $\mathbf{R}$ matrices for groups of $d/2$ and $d/5$ dimensions, respectively. As can be seen in Fig.~\ref{fig:caltech5v_acc}, while performance is relatively similar for the different approaches, we notice that increasing the number of $\mathbf{R}$ matrices does not necessarily increase performance, and can instead lead to performance degradation. This is not surprising as we are considering the clustering setting, where we lack strong supervised guidance, and increasing complexity and model flexibility could potentially result in degenerate solutions. We thus recommend learning a single shared $\mathbf{R}$.

\subsection{Analysis of Learned Correlation Matrix}

In this section, we further analyze the process of learning the correlation matrix. In Fig.~\ref{fig:convergence_R}, we monitor $\|\mathbf{R} - \mathbf{I}\|_F$ throughout training on the Caltech5V and Handwritten datasets. From these results, we observe that $\|\mathbf{R} - \mathbf{I}\|_F$ converges to values between 1.2 and 2.9 for the Caltech5V dataset and values between 1.4 and 3 for the Handwritten dataset. This lies well within the theoretical range of $[0,\sqrt{20}]$ and $[0,\sqrt{30}]$, respectively. We further observe that for large missingness, the correlation matrix is closer to the identity, while for low missingness the correlation between views is higher. This intuitively makes sense as less samples are being observed with paired views. 

Fig.~\ref{fig:handwritten_corr_4x5} and~\ref{fig:caltech5v_corr_4x5} further demonstrate the evolution of $\mathbf{R}$ throughout training. For the Caltech5V dataset, we observe the strongest correlation between the HOG and LBP features, which both capture local texture and edge information. This further validates that our approach learns a meaningful correlation structure.

Finally, we further empirically demonstrate that $\mathbf{R}$ is stable across runs. For this we conduct five runs with different random seeds on the Handwritten dataset and compute the variance across runs for the different elements in $\mathbf{R}$. Fig.~\ref{fig:variance} provides these results for 10\% and 90\% missingness, demonstrating high stability even for large missing-rates.

\newcommand{\MatrixImg}[1]{\adjustbox{valign=m,vspace=0.5pt}{\includegraphics[width=0.19\linewidth]{#1}}}
\begin{figure*}[!t]
\centering
\begin{small}
\begin{tabular}{l@{\hspace{0.2mm}}c@{\hspace{0.2mm}}c@{\hspace{0.2mm}}c@{\hspace{0.2mm}}c@{\hspace{0.2mm}}c}
    & \footnotesize{Epoch 1} & \footnotesize{Epoch 10} & \footnotesize{Epoch 50} & \footnotesize{Epoch 100} & \footnotesize{Epoch 300} \\
    \adjustbox{valign=m}{\rotatebox{90}{\footnotesize{MR=0.1}}} &
      \MatrixImg{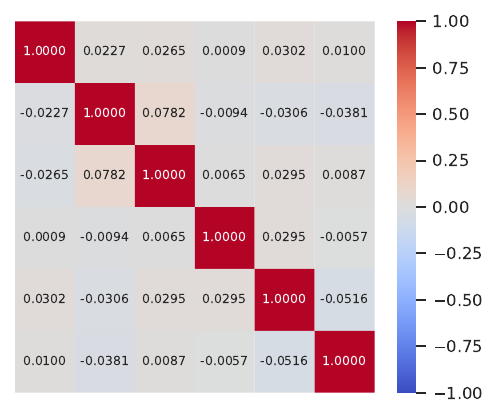} &
      \MatrixImg{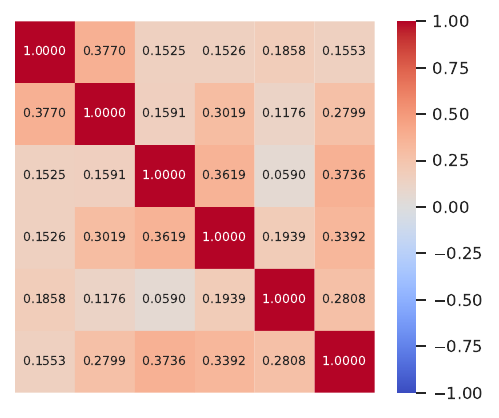} &
      \MatrixImg{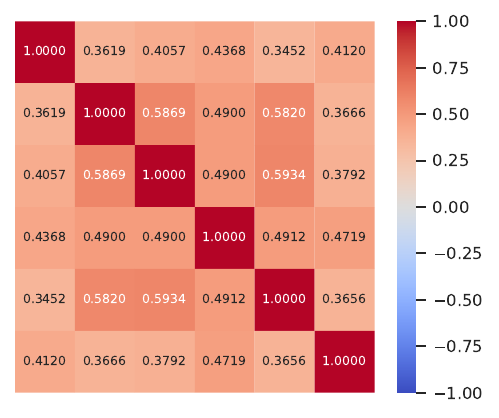} &
      \MatrixImg{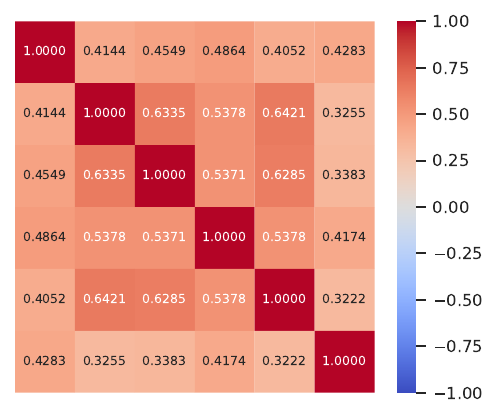} &
      \MatrixImg{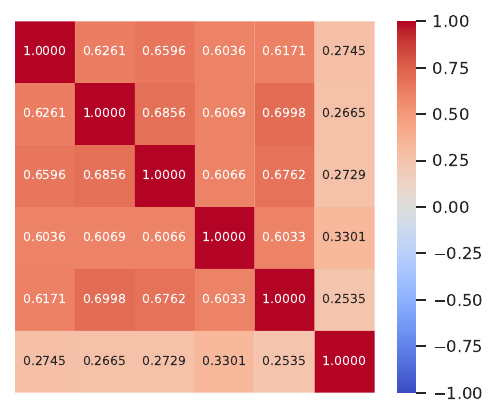} \\
    \adjustbox{valign=m}{\rotatebox{90}{\footnotesize{MR=0.3}}} &
      \MatrixImg{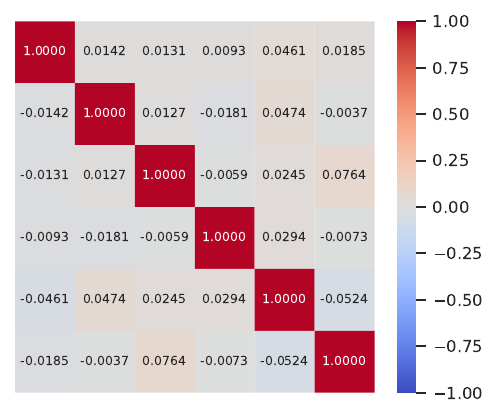} &
      \MatrixImg{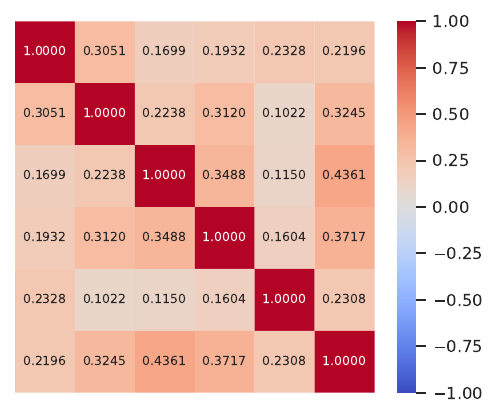} &
      \MatrixImg{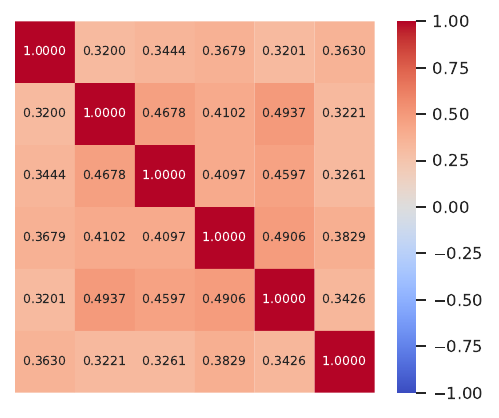} &
      \MatrixImg{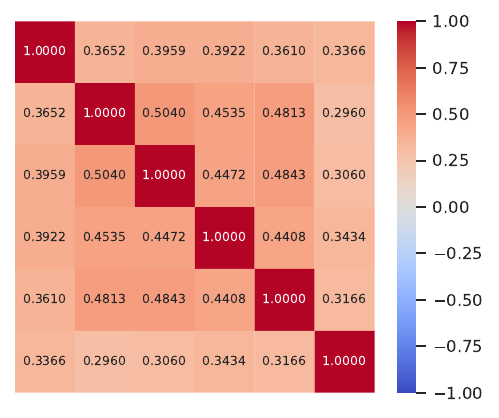} &
      \MatrixImg{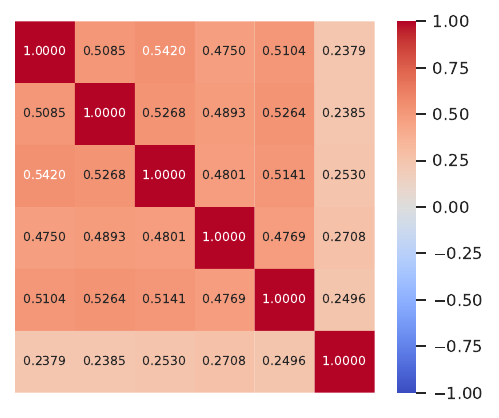} \\
    \adjustbox{valign=m}{\rotatebox{90}{\footnotesize{MR=0.5}}} &
      \MatrixImg{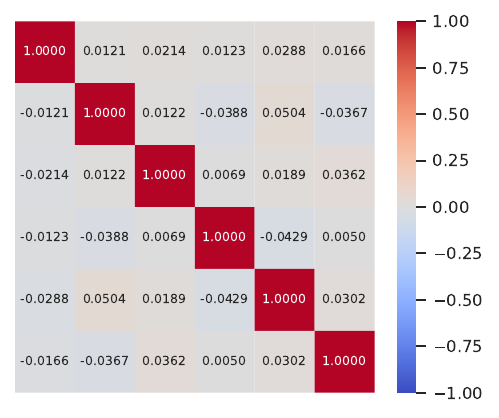} &
      \MatrixImg{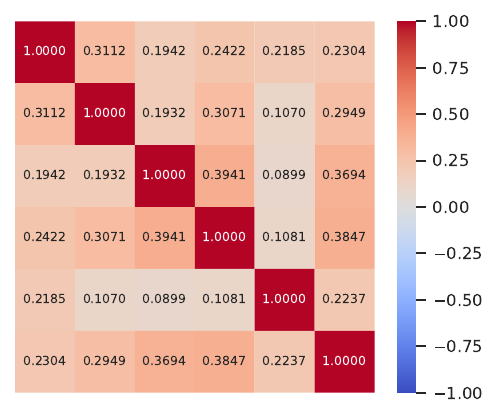} &
      \MatrixImg{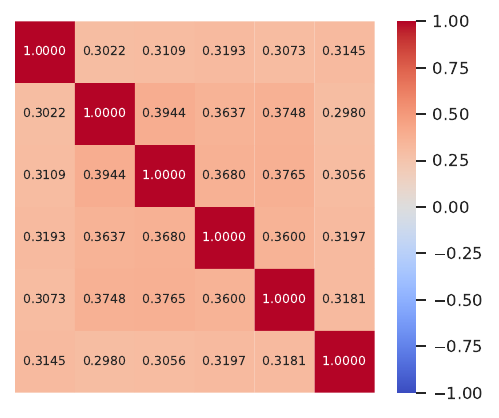} &
      \MatrixImg{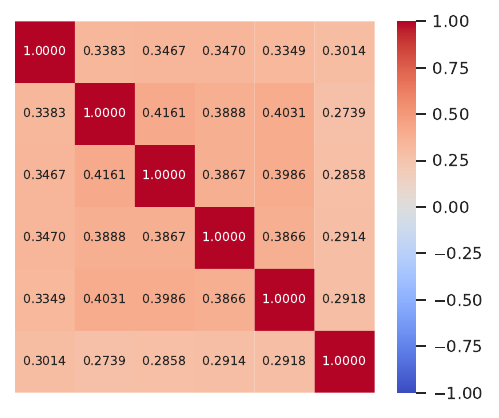} &
      \MatrixImg{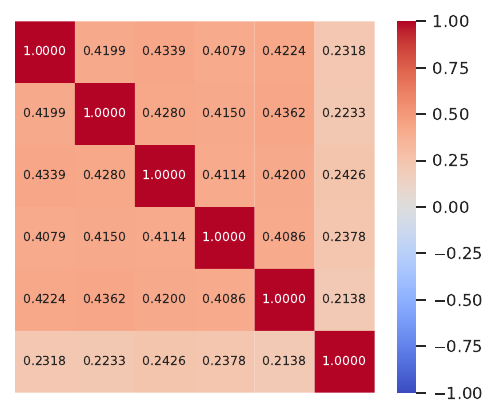} \\
    \adjustbox{valign=m}{\rotatebox{90}{\footnotesize{MR=0.7}}} &
      \MatrixImg{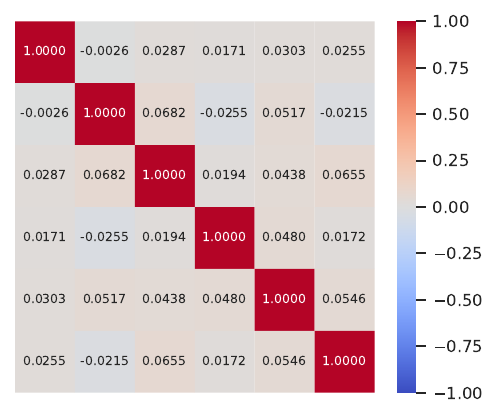} &
      \MatrixImg{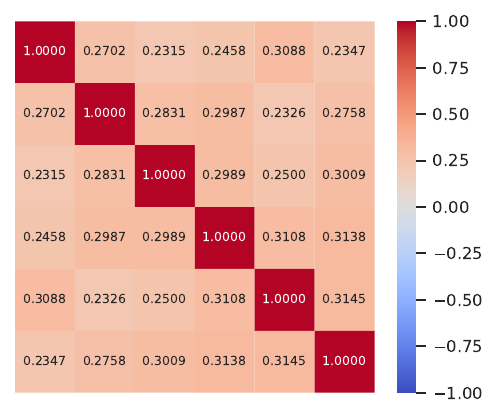} &
      \MatrixImg{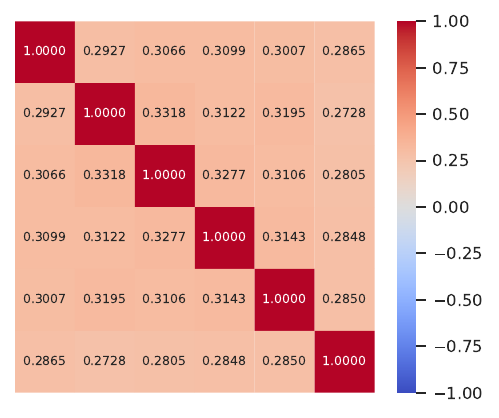} &
      \MatrixImg{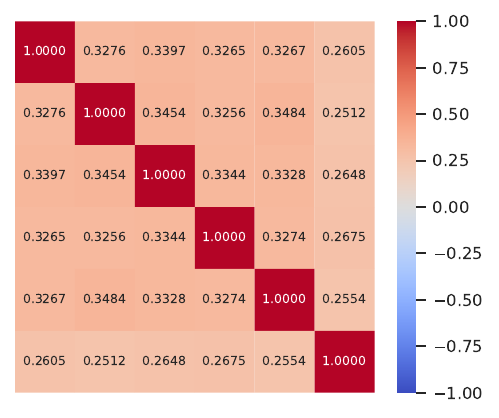} &
      \MatrixImg{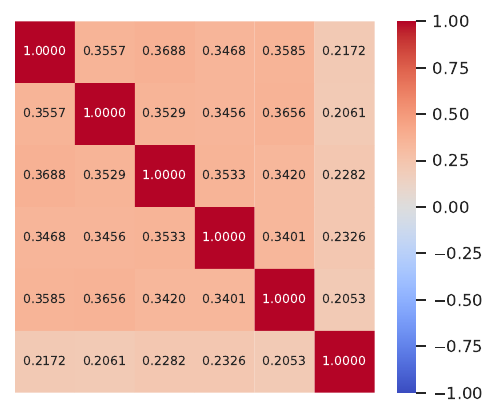} \\
\end{tabular}
\end{small}
\caption{Correlation matrices on Handwritten: rows are missing rates, columns are epochs.}
\label{fig:handwritten_corr_4x5}
\end{figure*}

\begin{figure*}[t]
\centering
\begin{small}
\begin{tabular}{l@{\hspace{0.2mm}}c@{\hspace{0.2mm}}c@{\hspace{0.2mm}}c@{\hspace{0.2mm}}c@{\hspace{0.2mm}}c}
    & \footnotesize{Epoch 1} & \footnotesize{Epoch 10} & \footnotesize{Epoch 50} & \footnotesize{Epoch 100} & \footnotesize{Epoch 300} \\
    \adjustbox{valign=m}{\rotatebox{90}{\footnotesize{MR=0.1}}} &
      \MatrixImg{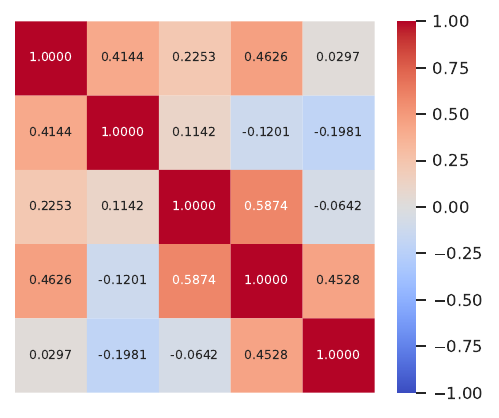} &
      \MatrixImg{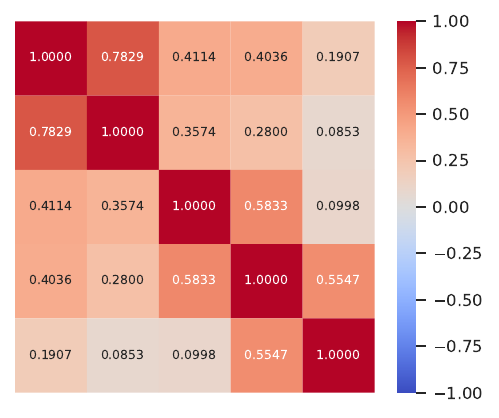} &
      \MatrixImg{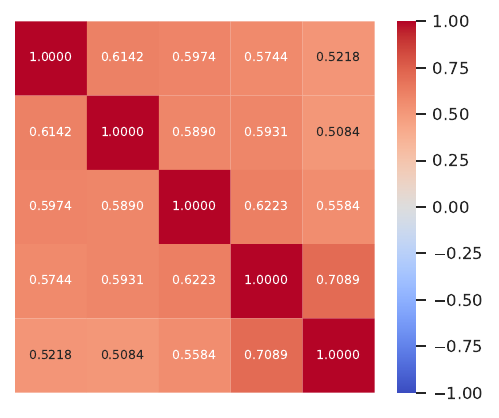} &
      \MatrixImg{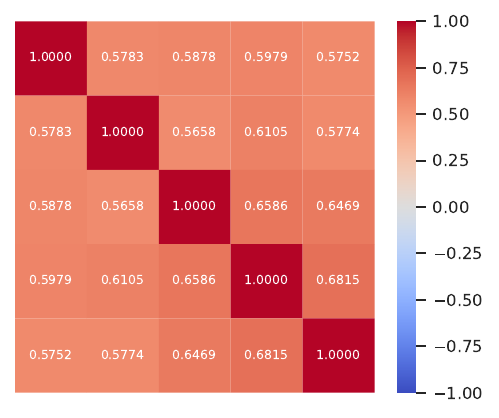} &
      \MatrixImg{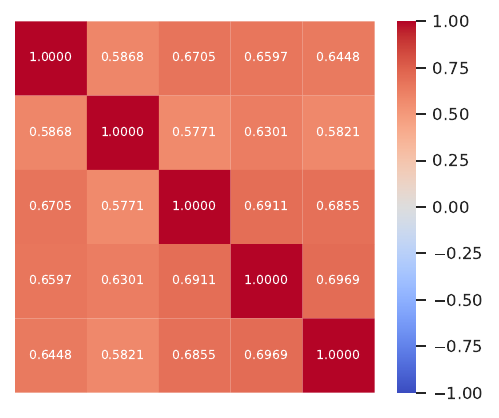} \\
    \adjustbox{valign=m}{\rotatebox{90}{\footnotesize{MR=0.3}}} &
      \MatrixImg{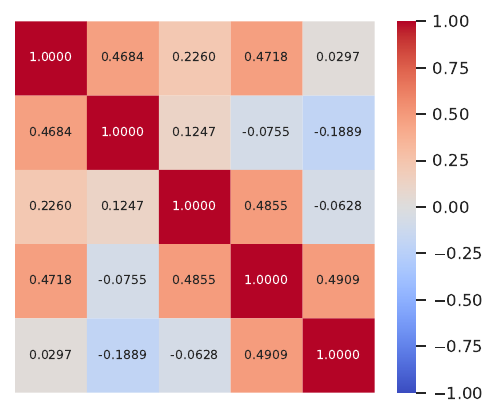} &
      \MatrixImg{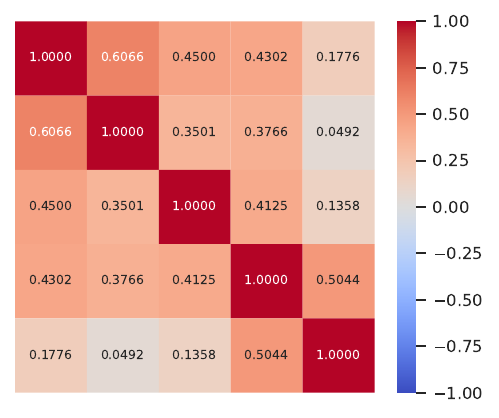} &
      \MatrixImg{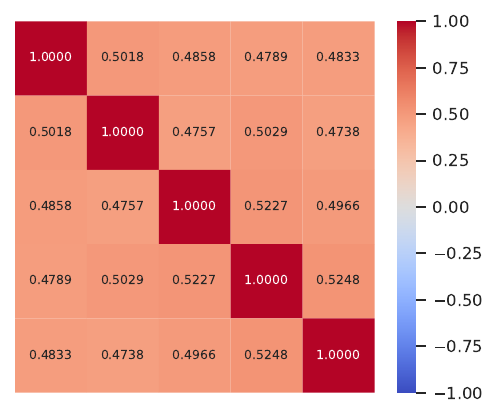} &
      \MatrixImg{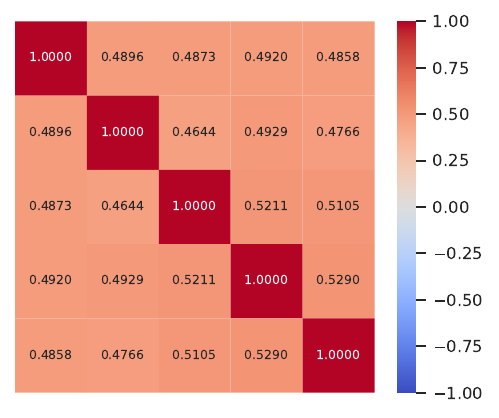} &
      \MatrixImg{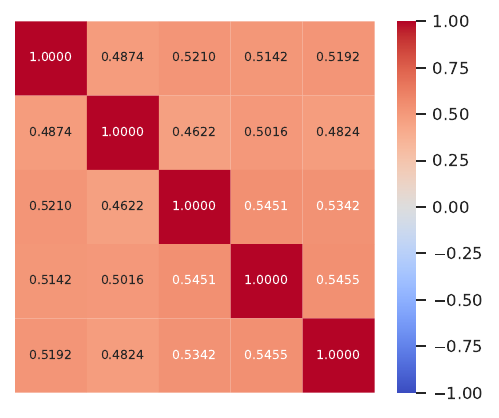} \\
    \adjustbox{valign=m}{\rotatebox{90}{\footnotesize{MR=0.5}}} &
      \MatrixImg{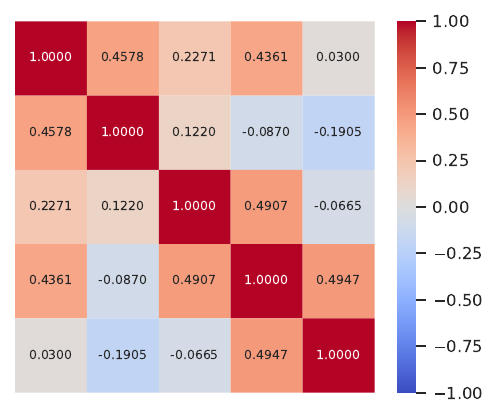} &
      \MatrixImg{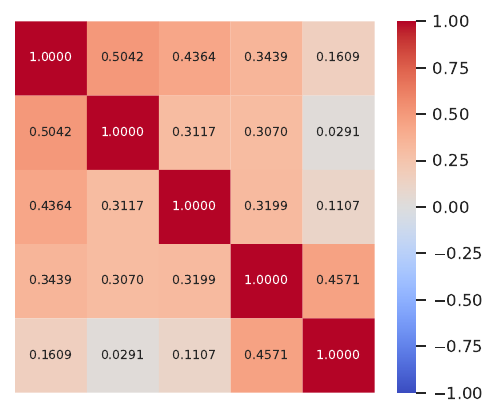} &
      \MatrixImg{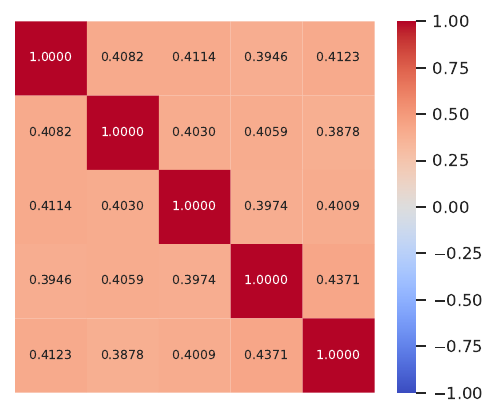} &
      \MatrixImg{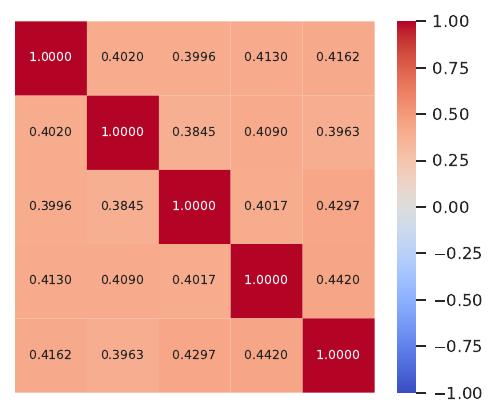} &
      \MatrixImg{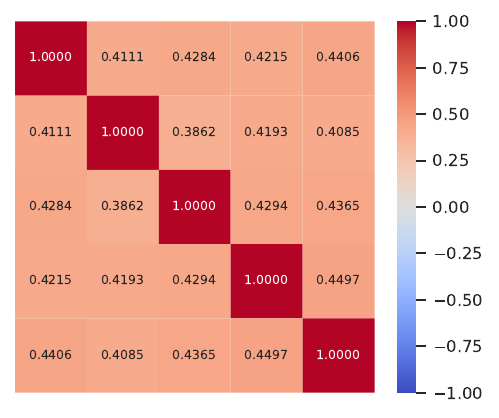} \\
    \adjustbox{valign=m}{\rotatebox{90}{\footnotesize{MR=0.7}}} &
      \MatrixImg{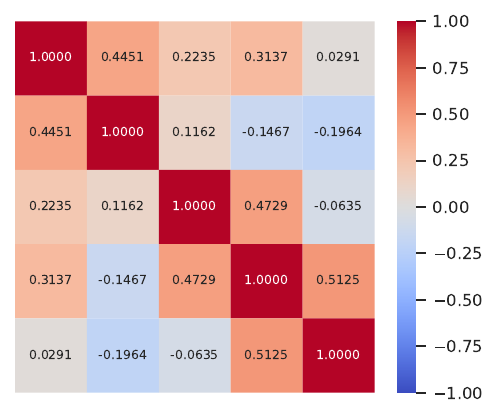} &
      \MatrixImg{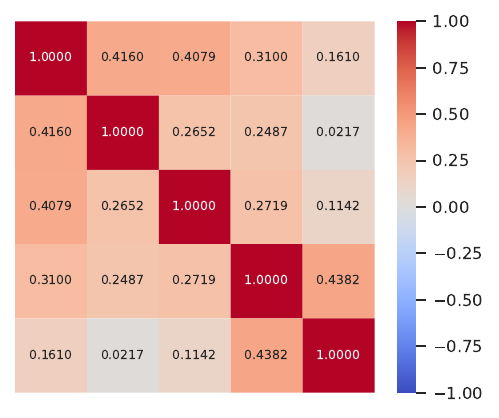} &
      \MatrixImg{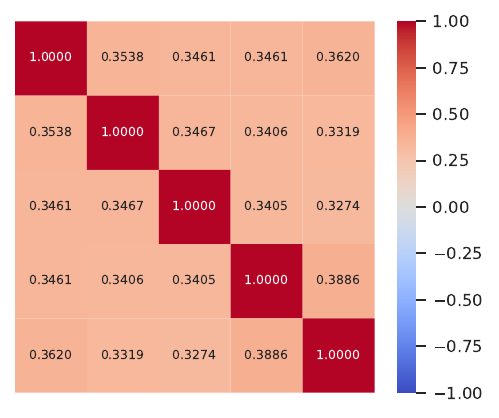} &
      \MatrixImg{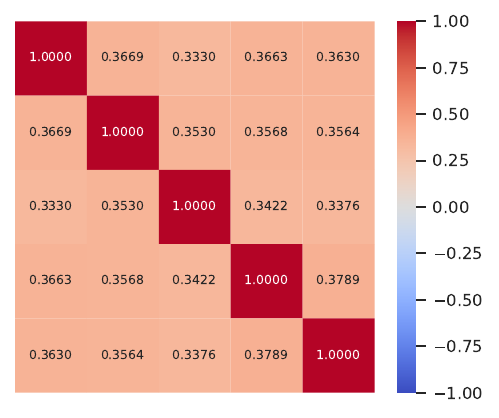} &
      \MatrixImg{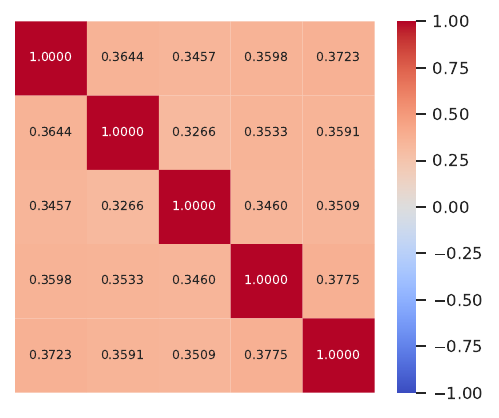} \\
\end{tabular}
\end{small}
\caption{Correlation matrices on Caltech5V: rows are missing rates, columns are epochs.}
\label{fig:caltech5v_corr_4x5}
\end{figure*}

\begin{figure}[!t]
    \centering
    \subfloat[MR=0.1]{%
\includegraphics[width=0.45\linewidth,height=0.38\linewidth]{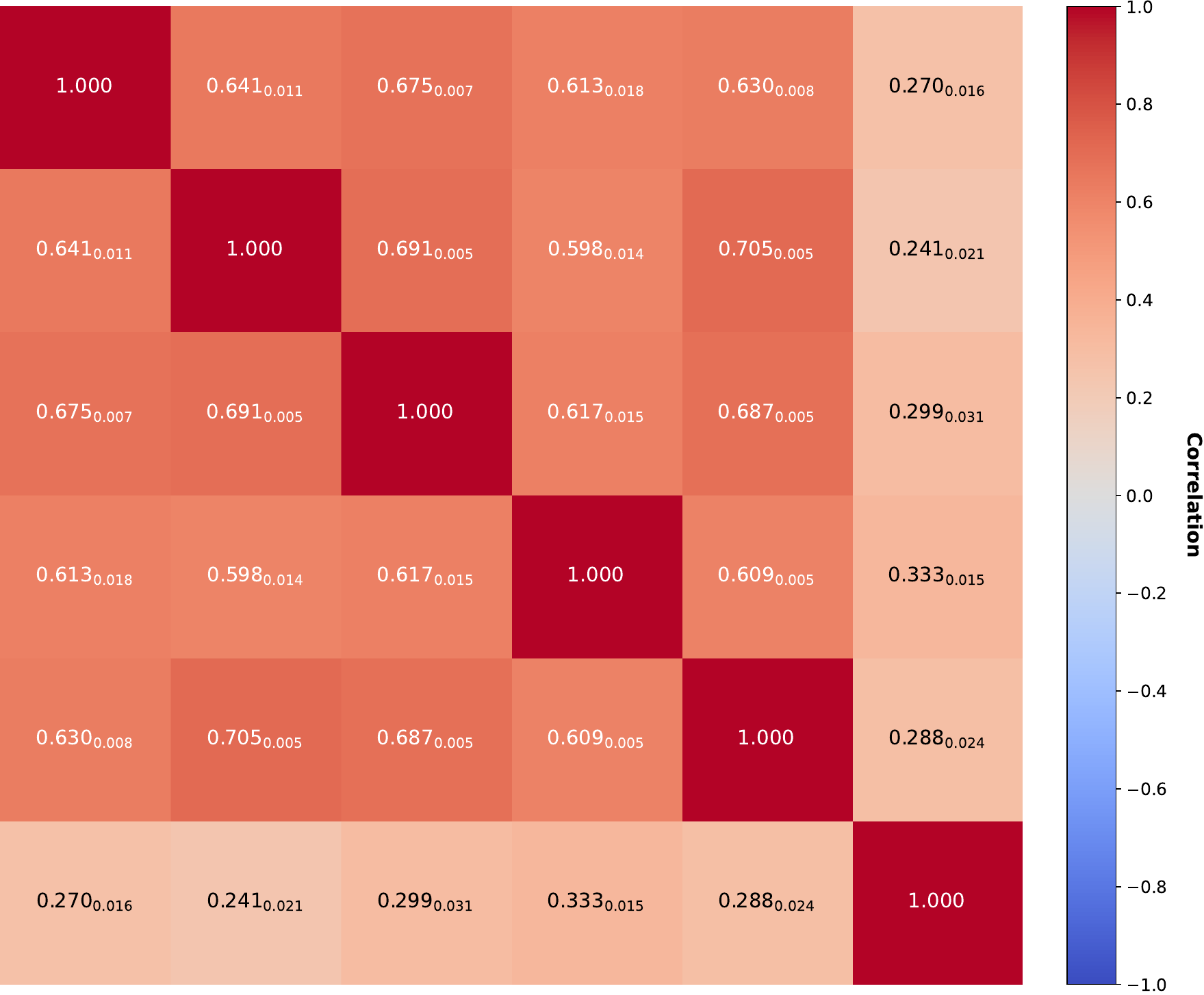}
    \label{fig:variance_mr01}
    }
    \subfloat[MR=0.9]{%
        \includegraphics[width=0.45\linewidth,height=0.38\linewidth]{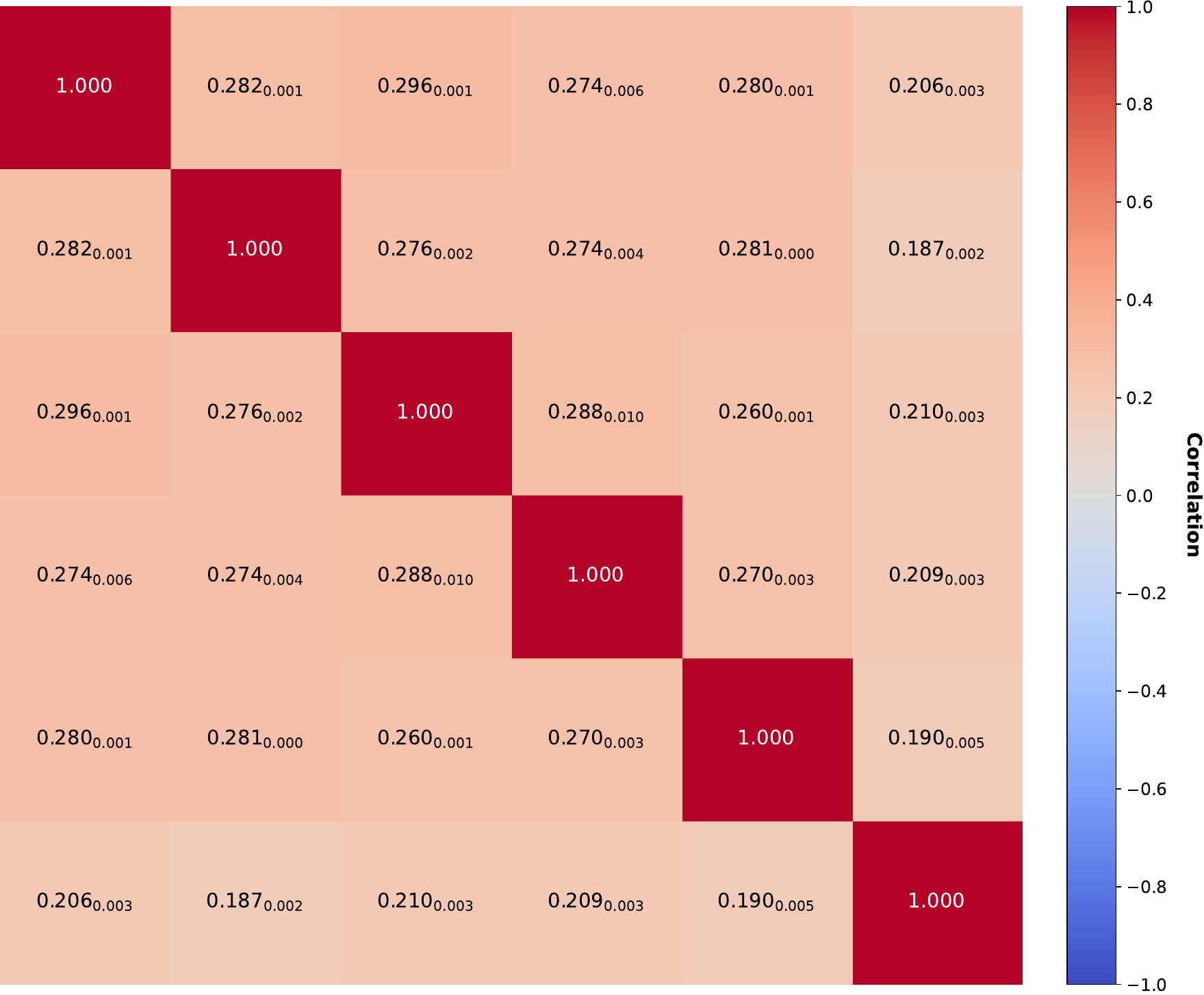}
        \label{fig:variance_mr09}
    }
    \caption{Correlation matrices for the Handwritten dataset averaged across five runs with mean and standard deviation.}
    \label{fig:variance}
    \vspace{-0.4cm}
\end{figure}

\section{Limitation and Future Work}

While our method demonstrates effectiveness on incomplete multi-view clustering tasks by relaxing the independence assumption and adaptively modeling cross-view correlations, it has practical limitations when it comes to the computational cost of scaling to a substantially larger number of views (\emph{e.g.}, $V > 20$) (see discussion in Appendix~\ref{appendix:complexity}). While most current settings leverage only a small number of views, this could potentially become a limitation in the future.

In the future, we would like to further investigate the application of our proposed mechanism in the context of imputation-based IMVC methods to further enhance performance.
Furthermore, although we follow prior approaches and assume Gaussian distributions in our approach, there remains potential in exploring more expressive or non-Gaussian distributions for more complex data. 
Another important future direction is to establish formal convergence guarantees for multi-modal variational inference with correlation-aware posterior aggregation, extending recent theoretical advances from standard VAEs to incomplete multi-view settings. 
Finally, we aim to evaluate ACOVA on incremental-view IMVC scenarios, where new views are introduced during training. When additional views are anticipated, a straightforward strategy would be to treat them as missing during early training stages. For unexpectedly introduced views, one possible approach is to expand the learned correlation matrix $\mathbf{R}$ through random initialization of the new entries, followed by a Cholesky decomposition to obtain an updated $\mathbf{L}$, allowing training to continue from this initialization. Investigating principled strategies for dynamically expanding and adapting the learned correlation structure in such settings constitutes an interesting direction for future work.

\end{document}
